\newtheorem{thm}{Theorem}[section] 
\newtheorem{lemma}[thm]{Lemma} 
\newtheorem{prop}[thm]{Proposition}
\newtheorem{cor}[thm]{Corollary}
\newtheorem{ass}{Assumption}
\theoremstyle{definition} 
\newtheorem{defn}[thm]{Definition}
\newtheorem{rmk}[thm]{Remark}
\newcommand{\norm}[1]{\lVert#1\rVert}
\newcommand{\Norm}[1]{\left\lVert#1\right\rVert}
\newcommand{\abs}[1]{\left\lvert#1\right\rvert}
\newcommand{\ubar}[1]{\underaccent{\bar}{#1}}
\DeclareMathOperator{\Tr}{Tr}
\DeclareMathOperator{\Unif}{Unif}
\DeclareMathOperator{\Var}{Var}
\DeclareMathOperator{\diag}{diag}
\DeclareMathOperator{\op}{op}
\DeclareMathOperator{\supp}{supp}
\newcommand{\E}[1]{\mathbb{E}[#1]}
\newcommand{\Ebig}[1]{\mathbb{E}\left[#1\right]}
\newcommand{\EE}[2]
{\mathbb{E}_{#1}[#2]}
\newcommand{\EEbig}[2]{\mathbb{E}_{#1}\left[#2\right]}
\DeclareMathOperator{\RR}{\mathbb{R}}
\DeclareMathOperator{\ZZ}{\mathbb{Z}}
\DeclareMathOperator{\NN}{\mathbb{N}}
\DeclareMathOperator{\UU}{\mathbb{U}}
\DeclareMathOperator{\VV}{\mathcal{V}}
\DeclareMathOperator{\XX}{\mathcal{X}}
\DeclareMathOperator{\PP}{\mathcal{P}}
\DeclareMathOperator{\FF}{\mathcal{F}}
\DeclareMathOperator{\TT}{\mathcal{T}}
\DeclareMathOperator{\MM}{\mathcal{M}}
\DeclareMathOperator{\polylog}{\mathrm{polylog}}
\newcommand*{\rd}{\mathop{}\!\mathrm{d}}
\newcommand\by{\bm{y}}
\newcommand\bX{\mathbf{X}}
\newcommand\bI{\mathbf{I}}
\newcommand\bS{\mathbf{S}}
\newcommand\bZ{\mathbf{Z}}
\newcommand\bU{\mathbf{U}}
\newcommand\bW{\mathbf{W}}
\DeclareMathOperator*{\argmin}{arg\,min}
\newcommand{\colvec}[2][.8]{%
  {\color{purple!50}
  \scalebox{#1}{%
    \renewcommand{\arraystretch}{.8}%
    $\begin{pmatrix}#2\end{pmatrix}$%
  }}
}
\newcommand{\tr}[1]{\mathrm{tr}(#1)}
\newcommand{\dd}{\mathrm{d}}
\newcommand{\calB}{{\mathcal B}}
\newcommand{\calC}{{\mathcal C}}
\newcommand{\calF}{{\mathcal F}}
\newcommand{\calM}{{\mathcal M}}
\title{Transformers are Minimax Optimal\\ Nonparametric In-Context Learners}
\author{%
  Juno Kim\textsuperscript{1,2}$^*$\quad Tai Nakamaki\textsuperscript{1}\quad Taiji Suzuki\textsuperscript{1,2}\\ 
  \textsuperscript{1}University of Tokyo\quad\textsuperscript{2}Center for Advanced Intelligence Project, RIKEN\\
  \texttt{$^*$junokim@g.ecc.u-tokyo.ac.jp}
}
\begin{document}

\maketitle

\begin{abstract}
In-context learning (ICL) of large language models has proven to be a surprisingly effective method of learning a new task from only a few demonstrative examples. In this paper, we study the efficacy of ICL from the viewpoint of statistical learning theory. We develop approximation and generalization error bounds for a transformer composed of a deep neural network and one linear attention layer, pretrained on nonparametric regression tasks sampled from general function spaces including the Besov space and piecewise $\gamma$-smooth class. We show that sufficiently trained transformers can achieve -- and even improve upon -- the minimax optimal estimation risk in context by encoding the most relevant basis representations during pretraining. Our analysis extends to high-dimensional or sequential data and distinguishes the \emph{pretraining} and \emph{in-context} generalization gaps. Furthermore, we establish information-theoretic lower bounds for meta-learners w.r.t. both the number of tasks and in-context examples. These findings shed light on the roles of task diversity and representation learning for ICL.
\end{abstract}

\section{Introduction}

Large language models (LLMs) have demonstrated remarkable capabilities in understanding and generating natural language data. In particular, the phenomenon of \emph{in-context learning} (ICL) has recently garnered widespread attention. ICL refers to the ability of pretrained LLMs to perform a new task by being provided with a few examples within the context of a prompt, without any parameter updates or fine-tuning. 
It has been empirically observed that few-shot prompting is especially effective in large-scale models \citep{Brown20} and requires only a couple of examples to consistently achieve high performance \citep{Garcia23}. In contrast, \citet{Raventos23} demonstrate that sufficient pretraining task diversity is required for the emergence of ICL. However, we still lack a comprehensive understanding of the statistical foundations of ICL and few-shot prompting.

A vigorous line of research has been directed towards understanding ICL of single-layer linear attention models pretrained on the query prediction loss of linear regression tasks \citep{Garg22, Akyrek23, Zhang23, Ahn23a, Mahankali23, Wu24}. It has been shown that the global minimizer of the $L^2$ pretraining loss implements one step of GD on a least-squares linear regression objective \citep{Mahankali23} and is nearly Bayes optimal \citep{Wu24}. Moreover, risk bounds with respect to the context length \citep{Zhang23} and number of tasks \citep{Wu24} have been obtained.

Other works have examined ICL of more complex multi-layer transformers. \citet{Bai23, Oswald23} give specific transformer constructions which simulate GD in context, however it is unclear how such meta-algorithms may be learned. Another approach is to study learning \emph{with representations}, where tasks consist of a fixed nonlinear feature map composed with a varying linear function. \citet{Guo23} empirically found that trained transformers exhibit a separation where lower layers transform the input and upper layers perform linear ICL. Recently, \citet{Kim24} analyzed a model consisting of a shallow neural network followed by a linear attention layer and proved that the MLP component learns to encode the true features during pretraining. However, they assumed the infinite task and sample size limit and did not study generalization capabilities.

\paragraph{Our contributions.} In this paper, we analyze the optimality of ICL from the perspective of statistical learning theory. Our object of study is a transformer consisting of a deep neural network with $N$-dimensional output followed by one linear attention layer. The model is pretrained on $n$ input-output samples from $T$ nonparametric regression tasks, generated from a suitably decaying distribution on a general function space. Compared to previous works, we take a crucial step towards understanding practical multi-layer transformers by incorporating the representation learning capabilities of the DNN module. From a more abstract perspective, this work can also be situated as a nonlinear extension of meta-learning. Our contributions are highlighted below.

\begin{itemize}[leftmargin = 15pt]
\item We develop a general framework for upper bounding the in-context estimation error of the empirical risk minimizer in terms of the approximation error of the neural network and separate \textit{in-context} and \textit{pretraining} generalization gaps depending on $n,T$, respectively.

\item In the Besov space setting, we show that \textbf{ICL achieves nearly minimax optimal risk} $n^{-\frac{2\alpha}{2\alpha+d}}$ when $T$ is sufficiently large. Since LLMs are pretrained on vast amounts of data in practice, $T$ can be taken to be nearly infinite, justifying the emergence of ICL at large scales. We extend the optimality guarantees to nearly dimension-free rates in the anisotropic Besov space and also to learning sequential data with deep transformers in the piecewise $\gamma$-smooth function class.

\item We show that ICL can \textbf{improve upon the a priori optimal rate} when the task class basis resides in a coarser Besov space by learning to encode informative basis representations, emphasizing the importance of pretraining on diverse tasks. 

\item We also derive \textbf{information-theoretic lower bounds} for the minimax risk in both $n,T$, rigorously confirming that ICL is jointly optimal when $T$ is large (Besov space setting), while any meta-learning method is jointly suboptimal when $T$ is small (coarser space setting). This separation aligns with empirical observations of a \textit{task diversity threshold} \citep{Raventos23}.
\end{itemize}

The paper is structured as follows. In Section \ref{sec:2}, the regression tasks and transformer model are defined in an abstract setting. In Section \ref{sec:3}, we present the general framework for estimating the ICL approximation and generalization error. In Section \ref{sec:4}, we specialize to the Besov-type and piecewise $\gamma$-smooth class settings and show that transformers can achieve or exceed the minimax optimal rate in context. In Section \ref{sec:5}, we derive minimax lower bounds. All proofs are deferred to the appendix; moreover, we provide \textbf{numerical experiments} validating our results in Appendix \ref{app:exp}.

\subsection{Other Related Works}\label{sec:related}

\paragraph{Meta-learning.} The theoretical setting of ICL is closely related to \emph{meta-learning}, where the goal is to infer a shared representation $\psi^\circ$ with samples from a set of transformed tasks $\beta_i^\top\psi^\circ$. When $\psi^\circ$ is linear, fast rates have been established by \citet{Tri20, Du21}, while the nonlinear case has been studied by \citet{Meunier23} where $\psi^\circ$ is a feature projection into a reproducing kernel Hilbert space. Our results can be viewed as extending this body of work to function spaces of generalized smoothness with a specific deep transformer architecture.

\paragraph{Optimal rates for DNNs.}

Our analysis extends established optimality results for classes of DNNs in ordinary supervised regression settings to ICL. \citet{Suzuki19} has shown that deep feedforward networks with the ReLU activation can efficiently approximate functions in the Besov space and thus achieve the minimax optimal rate. This has been extended to the anisotropic Besov space \citep{Suzuki21}, convolutional neural networks for infinite-dimensional input \citep{Okumoto22}, and transformers for sequence-to-sequence functions \citep{Takakura23}.

\paragraph{Pretraining dynamics for ICL.}

While how ICL arises from optimization is not fully understood, there are encouraging developments in this direction. \citet{Zhang23} has shown for one layer of linear attention that running GD on the population risk always converges to the global optimum. This was extended to incorporate a linear output layer by \citet{Zhang24}, and to softmax attention by \citet{Huang23, Li24, Chen24}. \citet{Kim24} considered a compound transformer equivalent to ours with a shallow MLP component and proved that the loss landscape becomes benign in the mean-field limit, deriving convergence guarantees for the corresponding gradient dynamics. These analyses indicate that the attention mechanism, while highly nonconvex, may possess structures favorable for gradient-based optimization.



\section{Problem Setup}\label{sec:2}

\subsection{Nonparametric Regression}\label{sec:nonpara}
In this paper, we analyze the ability of a transformer to solve nonparametric regression problems in context when pretrained on examples from a \emph{family} of regression tasks, which we describe below. Let $\XX\subseteq\RR^d$ be the input space ($d$ is allowed to be infinite), $\PP_{\XX}$ a probability distribution on $\XX$, and $(\psi_j^\circ)_{j=1}^\infty$ a fixed countable subset of $L^2(\PP_{\XX})$. A regression function $F_\beta^\circ:\XX\to\RR$ is randomly generated for each task by sampling the sequence of coefficients $\beta \in\RR^\infty$ from a distribution $\PP_\beta$ on $\mathscr{B}(\RR^\infty)$; the class of tasks $\FF^\circ$ is defined as
\begin{equation*}
\FF^\circ = \left\{F_\beta^\circ = \textstyle \sum_{j=1}^\infty \beta_j\psi_j^\circ \,\big\vert\, \beta\in\supp \PP_\beta\right\},
\end{equation*}
endowed with the induced distribution. Each task prompt contains $n$ example input-response pairs $\{(x_k,y_k)\}_{k=1}^n$. The covariates $x_k$ are i.i.d. drawn from $\PP_{\XX}$ and the responses are generated as
\begin{equation}\label{eqn:regression}
y_k=F_\beta^\circ(x_k)+\xi_k, \quad 1\leq k\leq n,
\end{equation}
where the noise $\xi_k$ is assumed to be i.i.d. with mean zero and $|\xi_k|\leq\sigma$ almost surely.\footnote{This implies $\Var\xi_k\leq\sigma^2$. We require boundedness since the values $y_k$ form part of the prompt input, and we wish to utilize sup-norm covering number estimates for the attention map; this technicality can be removed with more careful analysis. However, for the information-theoretic lower bound we assume Gaussian noise.} In addition, we independently generate a query token $\tilde{x}$ and corresponding output $\tilde{y}$ in the same manner.

We proceed to state our assumptions for the regression model. Informally, we suppose a relaxed version of sparsity and orthonormality of $\psi_j^\circ$ and suitable decay rates for the basis expansion. These will be subsequently verified for specific function spaces of interest with their natural decay rate.

\begin{ass}[relaxed sparsity and orthonormality of basis functions]\label{ass:basis}
For $N\in\NN$, there exist integers $\ubar{N}<\bar{N}\lesssim N$ with $\bar{N} -\ubar{N} +1 = N$ such that $\psi_{\ubar{N}}^\circ, \cdots, \psi_{\bar{N}}^\circ$ are independent and $\psi_1^\circ,\cdots, \psi_{\ubar{N}-1}^\circ$ are all contained in the linear span of $\psi_{\ubar{N}}^\circ, \cdots, \psi_{\bar{N}}^\circ$. Moreover, there exist $r,C_1,C_2,C_\infty>0$ such that $\Sigma_{\Psi,N}:= \left(\EE{x\sim\PP_{\XX}}{\psi^\circ_j(x) \psi^{\circ}_k(x)} \right)_{j,k=\ubar{N}}^{\bar{N}}$ satisfies $C_1 \bI_N \preceq \Sigma_{\Psi,N} \preceq C_2 \bI_N$ and
\begin{equation}\label{eqn:sparse}
\Norm{\textstyle\sum_{j=\ubar{N}}^{\bar{N}} (\psi_j^{\circ})^2}_{L^\infty(\PP_{\XX})}^{1/2} \leq C_\infty N^r.
\end{equation}
\end{ass}
Denoting the $\bar{N}$-basis approximation of $F_\beta^\circ$ as $F_{\beta,\bar{N}}^\circ:=\sum_{j=1}^{\bar{N}} \beta_j\psi_j^\circ$, by Assumption \ref{ass:basis} there exist `aggregated' coefficients $\bar{\beta}_{\ubar{N}},\cdots,\bar{\beta}_{\bar{N}}$ uniquely determined by $\beta$ such that $F_{\beta,\bar{N}}^\circ = \sum_{j=\ubar{N}}^{\bar{N}} \bar{\beta}_j\psi_j^\circ$. We define two types of coefficient covariance matrices
\begin{equation*}
\Sigma_{\beta,\bar{N}} := \left(\mathbb{E}_{\beta}[ \beta_j \beta_k]\right)_{j,k=1}^{\bar{N}}\in\RR^{\bar{N}\times\bar{N}} \quad\text{and}\quad \Sigma_{\bar{\beta},N}:= \left(\EE{\beta}{\bar{\beta}_j\bar{\beta}_k}\right)_{j,k=\ubar{N}}^{\bar{N}} \in\RR^{N\times N}.
\end{equation*}

\begin{ass}[decay of $\beta$]\label{ass:decay}
For $s,B>0$ it holds that $\norm{F_\beta^\circ}_{L^\infty(\PP_{\XX})}\leq B$ for all $F_\beta^\circ \in\FF^\circ$ and
\begin{equation}\label{eqn:truncate}
\norm{F^\circ_\beta - F^\circ_{\beta,N}}_{L^2(\PP_{\XX})}^2 \lesssim N^{-2s}
\end{equation}
uniformly over $\beta\in\supp\PP_\beta$. Furthermore, $\Tr(\Sigma_{\bar{\beta},N})$ is bounded for all $N$ and
\begin{equation}\label{eqn:decay}
    0\prec \Sigma_{\beta, \bar{N}} \precsim \diag\left[(j^{-2s - 1}(\log j)^{-2} )_{j=1}^{\bar{N}}\right].
\end{equation}
\end{ass}
\begin{rmk}\label{rmk:on}
In the simple case where $(\psi_j^\circ)_{j=1}^\infty$ is a basis for $L^2(\PP_{\mathcal{X}})$, we may set $\ubar{N}=1,\bar{N}=N$ so that the dependency condition of Assumption \ref{ass:basis} is trivially satisfied, moreover, $\Sigma_{\bar{\beta},N} = \Sigma_{\beta,N}$ and boundedness of $\Tr(\Sigma_{\bar{\beta},N})$ automatically follows from \eqref{eqn:decay}. However, the assumptions in the stated form also allow for hierarchical bases with dependencies such as wavelet systems. We also note that \eqref{eqn:truncate} and \eqref{eqn:decay} entail basically the same rate but are not equivalent: the \emph{uniform} bound $|\beta_j|^2\lesssim j^{-2s-1}(\log j)^{-2}$ along with Assumption \ref{ass:basis} implies \eqref{eqn:truncate}. The $(\log j)^{-2}$ term can be replaced with any $g(j)$ such that $\sum_{j=1}^\infty j^{-1}g(j)$ is convergent.
\end{rmk}

\subsection{In-Context Learning}
We now describe our transformer model, which takes $n$ context pairs $\bX=(x_1,\cdots,x_n)\in \RR^{d\times n}$, $\by=(y_1,\cdots,y_n)^\top\in\RR^n$ and a query token $\tilde{x}$ as input and returns a prediction for the corresponding output. The covariates are first passed through a nonlinear representation or feature mapping $\phi:\XX\to\RR^N$, which we assume belongs to a sufficiently powerful class of estimators $\FF_N$. Specifically:

\begin{ass}[expressivity of $\FF_N$]\label{ass:close}
$\norm{\phi(x)}_2\leq B_N'$ for some $B_N'>0$ for all $x\in\XX$, $\phi\in\FF_N$. Moreover for some $\delta_N>0$, there exist $\phi_{\ubar{N}}^*,\cdots,\phi_{\bar{N}}^* \in\FF_N$ satisfying
\begin{equation*}
\max_{\ubar{N}\leq j\leq \bar{N}} \norm{\psi_j^\circ - \phi_j^*}_{L^\infty(\PP_{\XX})} \leq \delta_N.
\end{equation*}
\end{ass}
By choosing $\FF_N$ and $\delta_N$ to satisfy the above assumption, we will be able to utilize established approximation and generalization guarantees for families of deep neural networks in Section \ref{sec:4}.

The extracted representations $\phi(\bX)=(\phi(x_1),\cdots,\phi(x_n))$ are then mapped to a scalar output via a linear attention layer parametrized by a matrix $\Gamma\in\mathcal{S}_N$ for $\mathcal{S}_N\subset\RR^{N\times N}$,
\begin{equation*}
\Check{f}_{\Theta}(\bX,\by,\tilde{x}):= \frac{1}{n}\sum_{k=1}^n y_k\phi(x_k)^\top\Gamma^\top\phi(\tilde{x})= \left\langle\frac{\Gamma \phi(\bX)\by}{n},\phi(\tilde{x})\right\rangle, \quad \text{where }\Theta=(\Gamma,\phi)\in \mathcal{S}_N\times \FF_N.
\end{equation*}
Finally, the output is constrained to lie on $[-\Bar{B},\Bar{B}]$ by applying $\mathrm{clip}_{\Bar{B}}(u) := \max\{\min\{u,\Bar{B}\},-\Bar{B}\}$, yielding $f_{\Theta}(\bX,\by,\tilde{x}) := \mathrm{clip}_{\Bar{B}}(\Check{f}_{\Theta}(\bX,\by,\tilde{x}))$. We set $\mathcal{S}_N=\{\Gamma \in \mathbb{R}^{N \times N} \mid 0 \preceq \Gamma \preceq C_3 \bI_N\}$ for some $C_3>0$ and fix $\Bar{B}=B$ for simplicity.

The above setup is a restricted reparametrization of linear attention widely used in theoretical analyses \citep[see e.g.][for more details]{Zhang23, Wu24}, where the values only refer to $\by$ and the query and key matrices are consolidated into one matrix $\Gamma$. The form is equivalent to one step of GD with matrix step size and has been shown to be optimal for a single layer of linear attention for linear regression tasks \citep{Ahn23a, Mahankali23}. The placement of the attention layer after the DNN module $\phi$ is justified by the observation that lower layers of trained transformers act as data representations on top of which upper layers perform ICL \citep{Guo23}.

During pretraining, the model is presented with $T$ prompts $\{(\bX^{(t)},\by^{(t)},\tilde{x}^{(t)})\}_{t=1}^T$ where the tasks $F_{\beta^{(t)}}^\circ\in\FF^\circ$, $\beta^{(t)}\sim \PP_\beta$ and tokens $\bX^{(t)}=(x_1^{(t)},\cdots,x_n^{(t)})$, $\by^{(t)}=(y_1^{(t)},\cdots,y_n^{(t)})^\top$, $\tilde{x}^{(t)}$ and $\tilde{y}^{(t)}$ are independently generated as described in Section \ref{sec:nonpara}, and is trained to minimize the empirical risk
\begin{equation*}
\widehat{\Theta}=\argmin_{\Theta \in \mathcal{S}_N \times \mathcal{F}_N} \widehat{R}(\Theta), \quad \widehat{R}(\Theta)=\frac{1}{T}\sum_{t=1}^T\left(\tilde{y}^{(t)}-f_\Theta(\bX^{(t)}, \by^{(t)},\tilde{x}^{(t)})\right)^2.
\end{equation*}
Our goal is to verify the efficiency of ICL as a learning algorithm and show that learning the optimal $\widehat{\Theta}$ allows the transformer to solve new random regression problems $y=F_\beta^\circ(x)+\xi$ for $F_\beta^\circ\in\FF^\circ$ in context. To this end, we evaluate the convergence of the mean-squared risk or estimation error,
\begin{equation*}
\bar{R}(\widehat{\Theta}) := \EE{(\bX^{(t)},\by^{(t)},\tilde{x}^{(t)},\tilde{y}^{(t)})_{t=1}^T}{R(\widehat{\Theta})}, \quad R(\Theta) := 
\EEbig{\bX,\by,\tilde{x},\beta}{(F^\circ_\beta(\tilde{x}) - f_{{\Theta}}(\bX,\by,\tilde{x}))^2}.
\end{equation*}
Note that we do not study whether the transformer always converges to $\widehat{\Theta}$; the training dynamics of a DNN is already a very difficult problem. For the attention layer, see the discussion in Section \ref{sec:related}.

\section{Risk Bounds for In-Context Learning}\label{sec:3}

In this section, we outline our framework for analyzing the in-context estimation error $\bar{R}(\widehat{\Theta})$. Some additional definitions are in order. The $\epsilon$-covering number $\mathcal{N}(\mathcal{C},\rho,\epsilon)$ of a metric space $\mathcal{C}$ equipped with a metric $\rho$ for $\epsilon>0$ is defined as the minimal number of balls in $\rho$ with radius $\epsilon$ needed to cover $\mathcal{C}$ \citep{Book:VanDerVaart:WeakConvergence}. The $\epsilon$\emph{-covering entropy} or \emph{metric entropy} is given as $\VV(\FF,\rho,\epsilon) := \log\mathcal{N}(\FF,\rho,\epsilon)$. The $\epsilon$-packing number $\calM(\epsilon,\mathcal{C},\rho)$ is given as the maximal cardinality of a $\epsilon$-separated set $\{c_1, \dots, c_M \} \subseteq \calC$ such that $\rho(c_i,c_j) \geq \delta$ for all $i\neq j$. The transformer model class is defined as $\TT_N:=\{f_\Theta\mid \Theta\in\mathcal{S}_N\times \FF_N\}$.

To bound the overall risk, we first decompose into the approximation and generalization gaps.

\begin{thm}[\citet{Schmidt20}, Lemma 4, adapted]\label{thm:schmidt}
There exists a universal constant $C$ such that for any $\epsilon>0$ such that $\VV(\TT_N,\norm{\cdot}_{L^\infty},\epsilon)\geq 1$,
\begin{equation*}
\bar{R}(\widehat{\Theta}) \leq 2 \inf_{\Theta \in \mathcal{S}_N \times \FF_N} R(\Theta) + 
C \left( \frac{B^2 + \sigma^2}{T}\VV(\TT_N, \norm{\cdot}_{L^\infty}, \epsilon) + (B + \sigma) \epsilon \right).
\end{equation*}
\end{thm}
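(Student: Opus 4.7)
The plan is to establish a standard oracle inequality for the empirical risk minimizer via Bernstein's inequality combined with a sup-norm covering argument, following the blueprint of Schmidt-Hieber's Lemma 4. Write $\ell_\Theta(z) := (\tilde y - f_\Theta(\bX,\by,\tilde x))^2$ for a single prompt $z=(\bX,\by,\tilde x,\tilde y)$. Clipping gives $|f_\Theta|\leq B$ and the bounded-noise assumption gives $|\tilde y|\leq B+\sigma$ a.s., so $0\leq \ell_\Theta\leq (2B+\sigma)^2\lesssim B^2+\sigma^2$, while $\mathbb{E}[\ell_\Theta^2]\leq (2B+\sigma)^2\,\mathbb{E}[\ell_\Theta]$. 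The latter variance-to-mean inequality is the ingredient that lets Bernstein's inequality produce the claimed bound with a leading constant of only $2$.

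The first step is discretization. I would take a minimal sup-norm $\epsilon$-net $\{f_1,\ldots,f_{\mathcal N}\}\subseteq\TT_N$ of cardinality $\mathcal N=\exp(\VV)$, choose parameter representatives $\Theta_1,\ldots,\Theta_{\mathcal N}$, and observe via $a^2-b^2=(a-b)(a+b)$ and $|a|,|b|\leq 2B+\sigma$ that any $\Theta$ within sup-norm $\epsilon$ of some $\Theta_j$ satisfies both $|R(\Theta)-R(\Theta_j)|\leq 2(2B+\sigma)\epsilon$ and $|\widehat R(\Theta)-\widehat R(\Theta_j)|\leq 2(2B+\sigma)\epsilon$. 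It therefore suffices to prove the inequality uniformly over the finite cover, at the cost of the additive $O((B+\sigma)\epsilon)$ error that appears in the theorem.

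The second step is concentration. Applying Bernstein's inequality to each $\{\ell_{\Theta_j}(z^{(t)})\}_{t=1}^T$, union-bounding over $j$, and absorbing the Bernstein square-root term into $\mathbb{E}[\ell_{\Theta_j}]$ by the variance-to-mean bound together with an AM-GM split $2\sqrt{uv}\leq \alpha u+\alpha^{-1}v$, yields with probability at least $1-e^{-u}$ the simultaneous estimate $\mathbb{E}[\ell_{\Theta_j}]\leq 2\widehat R(\Theta_j) + C(B^2+\sigma^2)(\VV+u)/T$ for all $j$. Combining this with a one-sided Bernstein bound applied at the single point $\Theta^* = \argmin_\Theta R(\Theta)$, the ERM inequality $\widehat R(\widehat\Theta)\leq \widehat R(\Theta^*)$, the discretization step, and the cancellation of the common noise offset $\mathbb{E}[\tilde\xi^2]$ in $\mathbb{E}[\ell_\Theta]=R(\Theta)+\mathbb{E}[\tilde\xi^2]$, then integrating the tail over $u\geq 0$, delivers
$$\bar R(\widehat\Theta)\leq 2R(\Theta^*) + C\,\frac{(B^2+\sigma^2)\VV}{T} + C(B+\sigma)\epsilon,$$
which is the desired oracle inequality.

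The main obstacle is justifying the Bernstein variance-to-mean condition sharply enough that the leading constant in front of $\inf R(\Theta)$ does not exceed $2$; both clipping and bounded noise are essential here. A secondary subtlety is that the i.i.d.\ samples are whole prompts $z^{(t)}$, so concentration must be performed at the prompt level while the internal randomness of each $(\bX^{(t)},\by^{(t)})$ is absorbed into the uniform bounds above, and it is precisely because one can work with a sup-norm cover (rather than an empirical $L^2$ cover with chaining) that the discretization contributes only the clean additive $O((B+\sigma)\epsilon)$ term.
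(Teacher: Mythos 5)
Your overall blueprint — sup-norm discretization, Bernstein, union bound, ERM comparison — is the standard route, but there is a concrete gap in the concentration step that, as written, does not recover the theorem. You apply the variance-to-mean inequality to the raw loss $\ell_\Theta(z)=(\tilde y - f_\Theta)^2$, so $\mathbb{E}[\ell_\Theta^2]\lesssim (B^2+\sigma^2)\mathbb{E}[\ell_\Theta]$, and then argue that the noise offset $\mathbb{E}[\tilde\xi^2]$ in $\mathbb{E}[\ell_\Theta]=R(\Theta)+\mathbb{E}[\tilde\xi^2]$ cancels. It does not cancel: once Bernstein plus AM-GM puts a factor of $2$ on $\widehat R(\Theta_j)$, tracing through the ERM step $\widehat R(\widehat\Theta)\leq\widehat R(\Theta^*)$ and taking expectations leaves a residual $+\mathbb{E}[\tilde\xi^2]=\Theta(\sigma^2)$ term that does not decay with $T$. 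Moreover the Bernstein square-root term applied at $\Theta^*$ scales as $\sqrt{\mathbb{E}[\ell_{\Theta^*}]u/T}\gtrsim\sqrt{\sigma^2 u/T}$, which is $O(T^{-1/2})$ rather than the $O(T^{-1})$ the statement requires. The fix is to work with the \emph{excess} loss $\tilde\ell_\Theta(z):=\ell_\Theta(z)-(\tilde y-f^\circ(z))^2=(f^\circ-f_\Theta)^2-2\tilde\xi(f^\circ-f_\Theta)$, which satisfies $\mathbb{E}[\tilde\ell_\Theta]=R(\Theta)$ exactly, $|\tilde\ell_\Theta|\lesssim B^2+B\sigma$, and the Bernstein condition $\mathbb{E}[\tilde\ell_\Theta^2]\lesssim(B+\sigma)^2 R(\Theta)$, so the square-root term becomes $\sqrt{R(\Theta)u/T}$ and the AM-GM absorption works without a spurious $\sigma^2$ floor. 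This is precisely what the self-normalized quantity $\varepsilon_j$ in the Schmidt-Hieber proof (and the paper's Appendix \ref{app:tail} modification for bounded noise via Hoeffding) is built to handle.

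It is also worth flagging that the paper does not re-derive the oracle inequality at all: it reduces the ICL setup to an ordinary regression $\tilde y = f^\circ(z)+\tilde\xi$ by treating the whole generative tuple $z=(\beta,\bX,\xi_{1:n},\tilde x)$ as a single datum, and then invokes Schmidt-Hieber's Lemma 4 (with the bounded-noise tail modification proved separately). Your proposal re-derives the inequality from scratch. That is a legitimate alternative route and, once corrected to use the excess loss, would be self-contained and arguably more transparent; the paper's route is shorter because it leans on an existing result and only patches the one place where Gaussianity was used.
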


\begin{proof}
The convergence rate of the empirical risk minimizer is established for a fixed regression problem $y=f^\circ(z)+\xi$ in \citet{Schmidt20} when $\xi$ is Gaussian; we modify the proof to incorporate bounded noise in Appendix \ref{app:tail}. The ICL setup can be reduced to the ordinary case as follows. We consider the entire batch $(\beta, \bX, \xi_{1:n},\tilde{x})$ including the hidden coefficient $\beta$ as a single datum $z$ with output $\tilde{y}$. The true function is given as $f^\circ(z) = F_\beta^\circ(\tilde{x})$ and the model class is taken to be $\TT_N$ implicitly concatenated with the generative process $(\beta, \bX, \xi_{1:n},\tilde{x})\mapsto (\bX,\by,\tilde{x})$. Then $R(\Theta)$, $\VV(\TT_N, \norm{\cdot}_{L^\infty}, \epsilon)$ and $T$ agree with the ordinary $L^2$ risk, model class entropy and sample size.
\end{proof}
Here, the second term is the \emph{pretraining} generalization error dependent on the number of tasks $T$; the \emph{in-context} generalization error dependent on the prompt length $n$ manifests as part of the first term. This separation allows us to compare the relative difficulty of the two types of learning.

\paragraph{Bounding approximation error.} In order to bound the first term, we analyze the risk of the choice\\[-5pt]
\begin{equation*}
\textstyle \Theta^*=(\Gamma^*,\phi^*):=\left( \left(\Sigma_{\Psi,N} + \frac{1}{n} \Sigma_{\bar{\beta},N}^{-1}\right)^{-1}, \phi_{\ubar{N}:\bar{N}}^*\right)
\end{equation*}
where $\phi^*$ is given as in Assumption \ref{ass:close} for a suitable $\delta_N$ to be determined. The definition of $\Gamma^*$ approximately generalizes the global optimum $\Gamma = \left((1+\frac{1}{n})\Lambda + \frac{1}{n}\tr{\Lambda} \bI_d\right)^{-1}$ for the Gaussian linear regression setup where $x\sim \mathcal{N}(0,\Lambda)$ \citep{Zhang23}. Since $\Sigma_{\Psi,N}\succeq C_1\bI_N$ we have $\Gamma^*\preceq C_1^{-1}\bI_N$ and hence we may assume $\Gamma^*\in\mathcal{S}_N$ by replacing $C_3$ with $C_3\vee C_1^{-1}$ if necessary.
\begin{prop}\label{thm:approx}
Under Assumptions \ref{ass:basis}-\ref{ass:close}, it holds that
\begin{equation*}
\inf_{\Theta\in\mathcal{S}_N\times \FF_N} R(\Theta) \leq R(\Theta^*)\lesssim \frac{N^{2r}}{n}\log{N}+\frac{N^{4r}}{n^2}\log^2 N+\frac{N}{n}+ N^{-2s}+N^2\delta_N^4+ N^{2r+1}\delta_N^2.
\end{equation*}
\end{prop}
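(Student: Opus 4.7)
The plan is to bound $R(\Theta^*)$ by decomposing the pointwise error into three orthogonal pieces and controlling each separately. Since $\lVert F_\beta^\circ\rVert_{L^\infty}\leq B=\bar B$ by Assumption \ref{ass:decay}, the clipping step is a non-expansion, so it suffices to bound $\mathbb{E}[(F_\beta^\circ(\tilde x)-\check f_{\Theta^*})^2]$. I introduce two intermediate functionals: the $\bar N$-basis truncation $F_{\beta,\bar N}^\circ=\sum_{j=\ubar{N}}^{\bar N}\bar\beta_j\psi_j^\circ$, and the oracle attention output $\check g_{\Theta^*}$ obtained by replacing $\phi^*$ with $\psi_{\ubar{N}:\bar N}^\circ$ inside $\check f_{\Theta^*}$. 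The triangle inequality then splits the error into a \emph{truncation} piece $\lVert F_\beta^\circ-F_{\beta,\bar N}^\circ\rVert_{L^2}^2$, an \emph{oracle ridge} piece $\mathbb{E}[(F_{\beta,\bar N}^\circ-\check g_{\Theta^*})^2]$, and a \emph{basis substitution} piece $\mathbb{E}[(\check g_{\Theta^*}-\check f_{\Theta^*})^2]$; the truncation piece is immediately bounded by $N^{-2s}$ via \eqref{eqn:truncate}.

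For the oracle ridge piece I would rewrite $y_k=\psi(x_k)^\top\bar\beta+\epsilon_k+\xi_k$, where $\psi(x)=(\psi_j^\circ(x))_{j=\ubar{N}}^{\bar N}$ and the tail $\epsilon_k=F_\beta^\circ(x_k)-F_{\beta,\bar N}^\circ(x_k)$ has $\mathbb{E}\epsilon_k^2\lesssim N^{-2s}$. Setting $\widehat\Sigma_\Psi=\frac{1}{n}\sum_k\psi(x_k)\psi(x_k)^\top$, the pointwise error becomes $\psi(\tilde x)^\top[(I-\Gamma^*\widehat\Sigma_\Psi)\bar\beta-\Gamma^*\cdot\frac{1}{n}\sum_k(\epsilon_k+\xi_k)\psi(x_k)]$. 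Integrating $\tilde x$ produces a $\Sigma_{\Psi,N}$ sandwich, and the key identity $I-\Gamma^*\Sigma_{\Psi,N}=\frac{1}{n}\Gamma^*\Sigma_{\bar\beta,N}^{-1}$ (i.e.\ the Bayes-optimality of $\Gamma^*$) cleanly decomposes the expected squared error into a deterministic bias, a $\widehat\Sigma_\Psi$-fluctuation term, a noise variance, and a tail variance.

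The easy contributions are the noise variance $\frac{\sigma^2}{n}\mathrm{Tr}[\Sigma_{\Psi,N}\Gamma^*\Sigma_{\Psi,N}\Gamma^*]\lesssim N/n$ (using $\Gamma^*\Sigma_{\Psi,N}\preceq I$), the deterministic bias which telescopes through the Bayes identity to $\frac{1}{n}\mathrm{Tr}[\Sigma_{\Psi,N}\Gamma^*]-\frac{1}{n}\mathrm{Tr}[(\Sigma_{\Psi,N}\Gamma^*)^2]\lesssim N/n$, and the tail variance absorbed into $N^{-2s}$. The main obstacle is the $\widehat\Sigma_\Psi$-fluctuation term $\mathbb{E}_\bX\mathrm{Tr}[\Sigma_{\Psi,N}\Gamma^*(\widehat\Sigma_\Psi-\Sigma_{\Psi,N})\Sigma_{\bar\beta,N}(\widehat\Sigma_\Psi-\Sigma_{\Psi,N})\Gamma^*]$; I would control it by a matrix Bernstein-type inequality whose key input is the operator bound $\lVert\psi(x)\psi(x)^\top\rVert_\mathrm{op}\leq C_\infty^2 N^{2r}$ from \eqref{eqn:sparse}, combined with $\lVert\Gamma^*\rVert_\mathrm{op}\leq C_1^{-1}$ and boundedness of $\mathrm{Tr}(\Sigma_{\bar\beta,N})$. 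A careful second- and fourth-moment estimate on $\widehat\Sigma_\Psi-\Sigma_{\Psi,N}$ then produces the two leading fluctuation contributions $\frac{N^{2r}\log N}{n}$ and $\frac{N^{4r}\log^2 N}{n^2}$.

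Finally, for the basis substitution piece I would write $\phi_j^*=\psi_j^\circ+(\phi_j^*-\psi_j^\circ)$ and expand $\check f_{\Theta^*}=\frac{1}{n}\sum_k y_k\phi^*(x_k)^\top\Gamma^{*\top}\phi^*(\tilde x)$. The discrepancy with $\check g_{\Theta^*}$ separates into first-order corrections (one copy of $\phi^*-\psi^\circ$) and a second-order correction (two copies). Using $\lVert\phi_j^*-\psi_j^\circ\rVert_\infty\leq\delta_N$ from Assumption \ref{ass:close}, boundedness $|y_k|\leq B+\sigma$, $\lVert\Gamma^*\rVert_\mathrm{op}\leq C_1^{-1}$, and the sparsity bound $\sum_j(\psi_j^\circ)^2\leq C_\infty^2 N^{2r}$ on the surviving $\psi^\circ$ factors, these contribute at most $N^{2r+1}\delta_N^2$ and $N^2\delta_N^4$ respectively. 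Summing the six contributions yields the claim.
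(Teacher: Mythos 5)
Your proposal is correct and takes essentially the same route as the paper. The paper's proof also reduces to bounding $\check f_{\Theta^*}$ via the clipping non-expansion, splits off the $N^{-2s}$ truncation error, and then expands the bilinear attention output into five cross terms corresponding exactly to your oracle-ridge and basis-substitution pieces: the fluctuation term $\psi(\tilde x)^\top\Gamma^*(\widehat\Sigma_\Psi-\Sigma_{\Psi,N})\bar\beta$ is controlled by matrix Bernstein (Lemma \ref{thm:opbound}) using the operator bound from \eqref{eqn:sparse} and boundedness of $\Tr(\Sigma_{\bar\beta,N})$, the deterministic bias and noise variance each give $N/n$ via $\Gamma^*\Sigma_{\Psi,N}\preceq\bI_N$, the tail contamination gives $N^{-2s}(1+N^{2r}/n)$, and the three basis-substitution terms give $N^2\delta_N^4$ (two factors of $\phi^*-\psi^\circ$) and two copies of $N^{2r+1}\delta_N^2$ (one factor each). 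Your exact telescoping $\frac{1}{n}\Tr[\Sigma_{\Psi,N}\Gamma^*]-\frac{1}{n}\Tr[(\Sigma_{\Psi,N}\Gamma^*)^2]$ for the bias is a slightly cleaner identity than the paper's inequality chain, but both land at $N/n$.
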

The proof is presented throughout Appendix \ref{app:approx}. The overall scheme is to approximate $F_\beta^\circ(\tilde{x})$ by its truncation $F_{\beta,\bar{N}}^\circ(\tilde{x})$ and the finite basis $\psi_{\ubar{N}:\bar{N}}^\circ$ by $\phi_{\ubar{N}:\bar{N}}^*$, which incurs errors $N^{-2s}$ and the terms pertaining to $\delta_N$, respectively. The first three terms arise from the concentration of the $n$ token representations $\psi_{\ubar{N}:\bar{N}}^\circ(x_k)$. All hidden constants are at most polynomial in problem parameters.

\paragraph{Bounding generalization error.} To estimate the metric entropy of $\TT_N$, we first reduce to the metric entropy of the representation class $\FF_N$. Here, $\norm{\cdot}_{L^\infty}$ refers to the essential supremum over the support of $\PP_{\XX}$ and also over all $N$ components for $\FF_N$. The proof is given in Appendix \ref{app:entropy}.

\begin{lemma}\label{thm:entropy}
Under Assumptions \ref{ass:basis}-\ref{ass:close}, there exists $D>0$ such that for all $\epsilon$ sufficiently small,
\begin{equation*}
\VV(\TT_N,\norm{\cdot}_{L^\infty},\epsilon) \lesssim N^2\log\frac{B_N'^2}{\epsilon} + \VV\bigg(\FF_N, \norm{\cdot}_{L^\infty}, \frac{\epsilon}{DB_N'\sqrt{N}}\bigg).
\end{equation*}
\end{lemma}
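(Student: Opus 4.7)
The plan is to establish a Lipschitz-type bound for the sup-norm distance between $f_{\Theta_1}$ and $f_{\Theta_2}$ in terms of $\|\Gamma_1-\Gamma_2\|_{\op}$ and $\|\phi_1-\phi_2\|_{L^\infty}$, and then form a product $\epsilon$-cover of $\TT_N$ out of covers of $\mathcal{S}_N$ and $\FF_N$. Since $\mathrm{clip}_{B}$ is $1$-Lipschitz and $|y_k|\leq B+\sigma$ uniformly on the support of any admissible prompt, it suffices to control $|\check f_{\Theta_1}-\check f_{\Theta_2}|$ pointwise, which by averaging reduces to bounding
$A := \phi_1(x)^\top\Gamma_1^\top\phi_1(\tilde x) - \phi_2(x)^\top\Gamma_2^\top\phi_2(\tilde x)$
uniformly in $x,\tilde x\in\XX$.

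Two rounds of add-and-subtract give the telescoping decomposition
\begin{align*}
A &= \phi_1(x)^\top(\Gamma_1-\Gamma_2)^\top\phi_1(\tilde x) + (\phi_1(x)-\phi_2(x))^\top\Gamma_2^\top\phi_2(\tilde x)\\
&\quad{}+ \phi_1(x)^\top\Gamma_2^\top(\phi_1(\tilde x)-\phi_2(\tilde x)).
\end{align*}
Applying Cauchy--Schwarz together with $\|\phi_i(x)\|_2\leq B_N'$ from Assumption \ref{ass:close}, $\|\Gamma_i\|_{\op}\leq C_3$ from the definition of $\mathcal{S}_N$, and the conversion $\|\phi_1(x)-\phi_2(x)\|_2\leq \sqrt{N}\,\|\phi_1-\phi_2\|_{L^\infty}$ (since the sup-norm on $\FF_N$ is taken component-wise), I obtain
$$\|f_{\Theta_1}-f_{\Theta_2}\|_{L^\infty}\lesssim B_N'^2\,\|\Gamma_1-\Gamma_2\|_{\op} + B_N'\sqrt{N}\,\|\phi_1-\phi_2\|_{L^\infty},$$
with hidden constants depending only on $B,\sigma,C_3$.

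To build an $\epsilon$-cover of $\TT_N$, I take products of covers at resolutions $\delta_\Gamma \asymp \epsilon/B_N'^2$ on $\mathcal{S}_N$ and $\delta_\phi = \epsilon/(DB_N'\sqrt{N})$ on $\FF_N$, where $D$ absorbs the constant $4C_3(B+\sigma)$, so that the two contributions above are each at most $\epsilon/2$. For the matrix factor, a standard volumetric argument---viewing $\mathcal{S}_N$ as a subset of $N\times N$ symmetric matrices of Frobenius norm at most $\sqrt{N}\,C_3$ and using $\|\cdot\|_{\op}\leq\|\cdot\|_F$---yields $\VV(\mathcal{S}_N,\|\cdot\|_{\op},\delta_\Gamma)\lesssim N^2\log(\sqrt{N}C_3/\delta_\Gamma)$. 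Substituting the value of $\delta_\Gamma$ and taking $\epsilon$ sufficiently small so that $\log(B_N'^2/\epsilon)$ dominates the additive $\log N$ and constant terms, this contributes the first summand in the lemma; the cover of $\FF_N$ contributes the second summand directly.

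The main subtlety in this otherwise routine calculation is the bookkeeping of norm conversions, particularly the $\sqrt{N}$ factor linking Euclidean norms of feature vectors to the component-wise $L^\infty$ norm on $\FF_N$, and the verification that the $\sqrt{N}C_3$ inside the logarithm of the matrix entropy does not degrade the leading $N^2\log(1/\epsilon)$ scaling. Beyond this, the proof relies only on the triangle inequality, Cauchy--Schwarz, and the standard volume-based entropy estimate for bounded subsets of $\RR^{N^2}$.
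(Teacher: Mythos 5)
Your proof is correct and follows essentially the same strategy as the paper: a three-term telescoping decomposition of $\check f_{\Theta_1}-\check f_{\Theta_2}$, Cauchy--Schwarz with the uniform bounds $\|\phi_i(x)\|\leq B_N'$, $\|\Gamma_i\|_{\op}\leq C_3$, $|y_k|\leq B+\sigma$, and $\|\phi_1(x)-\phi_2(x)\|_2\leq\sqrt N\,\|\phi_1-\phi_2\|_{L^\infty}$, followed by a product cover of $\mathcal S_N\times\FF_N$. Your specific telescoping is a different but equivalent grouping (you pivot through $\Gamma_2$ first, the paper pivots through $\phi_2(\tilde x)$), and both yield the same Lipschitz constants.

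The one genuine deviation is the metric entropy of $\mathcal S_N$. The paper's auxiliary Lemma \ref{thm:matrixentropy} covers $\Gamma\in\mathcal S_N$ via its spectral decomposition $\Gamma=\bU\Lambda\bU^\top$, separately covering the orthogonal group $\mathcal O_N$ (invoking Szarek's bound $\VV(\mathcal O_N,\|\cdot\|_{\op},\delta)\asymp N^2\log(1/\delta)$) and the diagonal $[0,C_3]^N$, which yields the clean estimate $\VV(\mathcal S_N,\|\cdot\|_{\op},\delta)\lesssim N^2\log(1/\delta)$. You instead embed $\mathcal S_N$ into the $\tbinom{N+1}{2}$-dimensional space of symmetric matrices with Frobenius radius $\sqrt N\,C_3$ and apply the standard volume bound, giving $\VV(\mathcal S_N,\|\cdot\|_{\op},\delta)\lesssim N^2\log(\sqrt N\,C_3/\delta)$. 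Your route is more elementary (no appeal to Szarek's theorem) at the cost of an extra additive $\log N$ inside the logarithm; as you correctly note, this is harmless here since $B_N'\gtrsim 1$ and $\epsilon$ is small, so $\log(B_N'^2/\epsilon)$ absorbs the $\log\sqrt N$. Both approaches are valid; the paper's is sharper as a free-standing entropy bound, yours is simpler and sufficient for the lemma as stated.
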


\section{Minimax Optimality of In-Context Learning}\label{sec:4}

\subsection{Besov Space and DNNs}

We now apply our theory to study the sample complexity of ICL when $\FF_N$ consists of (clipped, see \eqref{eqn:fnclip}) deep neural networks. These can be also seen as simplified transformers with attention layers and skip connections removed. To be precise, we define the set of DNNs with depth $L$, width $W$, sparsity $S$, norm bound $M$ and ReLU activation $\eta(x)=x\vee 0$ (applied element-wise) as
\begin{align*}
&\FF_\textup{DNN}(L,W,S,M) = \bigg\{(\bW^{(L)}\eta +b^{(L)})\mkern-1mu\circ\mkern-1mu \cdots \mkern-1mu\circ\mkern-1mu (\bW^{(1)}x +b^{(1)}) \mkern+1mu\bigg|\mkern+1mu \bW^{(1)} \mkern-1mu \in\RR^{W\times d}, \bW^{(\ell)}\mkern-1mu \in\RR^{W\times W}, \\
&\bW^{(L)} \mkern-1mu \in\RR^W, b^{(\ell)} \mkern-1mu \in\RR^W,b^{(L)} \mkern-1mu \in\RR, \sum_{\ell=1}^L\mkern+1mu \norm{\bW^{(\ell)}}_0 \!+\mkern-1mu \norm{b^{(\ell)}}_0 \mkern-1mu\leq\mkern-1mu S, \max_{1\leq\ell\leq L}\norm{\bW^{(\ell)}}_\infty \!\vee\mkern-1mu \norm{b^{(\ell)}}_\infty \mkern-1mu\leq\mkern-1mu M \bigg\}.
\end{align*}

The \emph{Besov space} is a very general class of functions including the H\"o{l}der and Sobolev spaces which captures spatial inhomogeneity in smoothness, and provides a natural setting in which to study the expressive power of deep neural networks \citep{Suzuki19}. Here, we fix $\XX=[0,1]^d$ for simplicity.

\begin{defn}[Besov space]
For $2\leq p\leq\infty, 0<q\leq\infty$, fractional smoothness $\alpha>0$ and $r=\lfloor \alpha\rfloor +1$, the $r$th modulus of $f\in L^p(\XX)$ is defined using the difference operator $\Delta_h^r, h\in\RR^d$ as
\begin{equation*}
\textstyle w_{r,p}(f,t) := \sup_{\norm{h}_2\leq t}\norm{\Delta_h^r(f)}_p, \quad \Delta_h^r(f)(x)= 1_{\{x,x+rh\in\XX\}} \sum_{j=0}^r \binom{r}{j} (-1)^{r-j}f(x+jh).
\end{equation*}
Also, the Besov (quasi-)norm is given as $\norm{\cdot}_{B_{p,q}^\alpha} = \norm{\cdot}_{L^p} + \abs{\,\cdot\,}_{B_{p,q}^\alpha}$ where
\begin{equation*}
\abs{f}_{B_{p,q}^\alpha} := \begin{cases}
\left(\int_0^\infty t^{-q\alpha}w_{r,p}(f,t)^q \frac{\rd t}{t}\right)^{1/q} & q<\infty \\ \sup_{t>0} t^{-\alpha} w_{r,p}(f,t) & q=\infty
\end{cases}
\end{equation*}
and the Besov space is defined as $B_{p,q}^\alpha(\XX) = \{f\in L^p(\XX)\mid \norm{f}_{B_{p,q}^\alpha}< \infty\}$. We write $\UU(B_{p,q}^\alpha(\XX))$ for the unit ball in $(B_{p,q}^\alpha(\XX), \norm{\cdot}_{B_{p,q}^\alpha})$.
\end{defn}

We have that the H\"{o}lder space $C^\alpha(\XX)=B_{\infty,\infty}^\alpha(\XX)$ for order $\alpha>0,\alpha\notin\NN$ and the Sobolev space $W_2^m(\XX)=B_{2,2}^m(\XX)$ for $m\in \NN$ as well as the embeddings $B_{p,1}^m(\XX)\hookrightarrow W_p^m(\XX) \hookrightarrow B_{p,\infty}^m(\XX)$; if $\alpha>d/p$, $B_{p,q}^\alpha(\XX)$ compactly embeds into the space of continuous functions on $\XX$. See \citet{Triebel83, Nickl15} for more details. The difficulty of learning a regression function in the Besov is quantified by the minimax risk; the following rate is classical.

\begin{prop}[\citet{Donoho98}]\label{thm:classicbesov}
The minimax risk for an estimator $\widehat{f}_n$ with $n$ i.i.d. samples $\mathcal{D}_n = \{(x_i,y_i)\}_{i=1}^n$ over $\UU(B_{p,q}^\alpha(\XX))$ satisfies
\begin{equation*}
\textstyle \inf_{\widehat{f}_n:\mathcal{D}_n\to\RR} \sup_{f^\circ\in \UU(B_{p,q}^{\alpha}(\XX))} \EE{\mathcal{D}_n}{\norm{f^\circ- \widehat{f}_n}_{L^2(\XX)}^2} \asymp n^{-\frac{2\alpha}{2\alpha+d}}.
\end{equation*}
\end{prop}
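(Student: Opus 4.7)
The plan is to establish matching upper and lower bounds via the wavelet characterization of Besov spaces. Fix a compactly supported tensor-product wavelet basis $\{\psi_{j,k}\}_{j\geq j_0,\,k\in\Lambda_j}$ on $[0,1]^d$ with regularity exceeding $\alpha$; the classical norm equivalence states that $\|f\|_{B_{p,q}^\alpha}$ is comparable to the sequence-space norm
\begin{equation*}
\|(\beta_{j,k})\|_{b_{p,q}^\alpha} = \left(\sum_j 2^{jq(\alpha+d/2-d/p)}\Big(\textstyle\sum_{k\in\Lambda_j}|\beta_{j,k}|^p\Big)^{q/p}\right)^{1/q}
\end{equation*}
of the wavelet coefficients $\beta_{j,k}=\langle f,\psi_{j,k}\rangle$, where $|\Lambda_j|\asymp 2^{jd}$. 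This reduces the task to coefficient estimation in a weighted $\ell^p$-type ball.

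For the upper bound, the plan is to construct a wavelet estimator. Compute empirical coefficients $\widehat{\beta}_{j,k}$ from $\mathcal{D}_n$ (via an orthogonal projection or least-squares fit onto the wavelet atoms) and output $\widehat{f}_n=\sum_{j\leq J}\sum_k T_\tau(\widehat{\beta}_{j,k})\psi_{j,k}$ with cutoff scale $2^J\asymp n^{1/(2\alpha+d)}$ and, in the regime $p<2$, a hard-threshold $T_\tau$ at level $\tau\asymp\sqrt{\log n/n}$. Standard approximation estimates via the sequence-norm equivalence, together with the embedding $B_{p,q}^\alpha\hookrightarrow B_{2,\infty}^{\alpha - d(1/p-1/2)_+}$ when $p<2$, yield a squared bias of order $2^{-2J\alpha}$ uniformly over $\UU(B_{p,q}^\alpha(\XX))$, while orthonormality and boundedness of the atoms give a variance of order $2^{Jd}/n$ from the retained (surviving) coefficients. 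Balancing at $2^J\asymp n^{1/(2\alpha+d)}$ gives the rate $n^{-2\alpha/(2\alpha+d)}$.

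For the lower bound, apply Fano's method to a hypercube embedding. At a single resolution $j$, set $f_\omega=\epsilon\sum_{k\in\Lambda_j}\omega_k\psi_{j,k}$ for $\omega\in\{\pm 1\}^{|\Lambda_j|}$ with $\epsilon\asymp 2^{-j(\alpha+d/2)}$, so each $f_\omega\in\UU(B_{p,q}^\alpha(\XX))$ by the norm equivalence. A Varshamov-Gilbert argument produces a subset of cardinality $M\gtrsim\exp(c\cdot 2^{jd})$ pairwise $L^2$-separated by $\delta^2\asymp\epsilon^2\cdot 2^{jd}\asymp 2^{-2j\alpha}$. Bounding the pairwise KL divergence between the induced $n$-sample Gaussian regression models by $Cn\|f_\omega-f_{\omega'}\|_{L^2}^2$ and invoking Fano's inequality, the minimax risk is at least a constant multiple of $\delta^2$ provided $n\delta^2\lesssim 2^{jd}$. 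Choosing $2^j\asymp n^{1/(2\alpha+d)}$ makes this tight and yields the matching lower bound.

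The main technical obstacle lies in the upper bound when $p<2$: here the Besov ball is not compactly contained in any low-frequency linear wavelet subspace with uniformly bounded $L^2$-diameter, so purely linear projection is suboptimal. One must instead invoke nonlinear thresholding together with sharp Bernstein or Talagrand-type concentration of the empirical coefficients, which is the essence of Donoho-Johnstone wavelet shrinkage theory. The lower bound is comparatively routine once the wavelet-hypercube packing is set up, modulo the switch from bounded to Gaussian noise noted in the footnote.
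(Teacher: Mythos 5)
The paper does not prove this proposition --- it is cited directly to \citet{Donoho98} as a classical fact. Your sketch is a correct outline of the standard argument. The lower-bound step (a Varshamov--Gilbert hypercube at a single wavelet resolution, Fano's inequality under Gaussian noise, balancing at $2^j\asymp n^{1/(2\alpha+d)}$) is essentially the same device the paper deploys in Appendix E.2 to obtain its joint lower bound in $n$ and $T$, which recovers the classical rate upon fixing $T$. The upper-bound step (truncated wavelet projection at cutoff $2^J\asymp n^{1/(2\alpha+d)}$, augmented with thresholding when $p<2$) is the usual Donoho--Johnstone route; the paper does not re-derive it, since its own upper bounds concern the transformer-based ICL estimator rather than a direct wavelet estimator. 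One caveat worth recording: the paper defines the Besov space only for $2\le p\le\infty$ (see Definition 4.1), so within its own scope the linear projection estimator already attains the minimax rate and the nonlinear thresholding machinery you identify as the chief technical obstacle is not required; the regime $1\le p<2$ is discussed separately in Section 4.2 (``A limitation of ICL'') precisely because non-adaptive methods --- and hence ICL --- are strictly suboptimal there, falling to the Kolmogorov-width rate $N^{-\alpha/d+1/p-1/2}$ rather than the minimax rate. With that scoping caveat, your plan is sound and consistent with the cited reference.
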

A natural basis system for $B_{p,q}^\alpha(\XX)$ is formed by the \emph{B-splines}, which can be seen as a type of wavelet decomposition or multiresolution analysis \citep{Devore88}. As B-splines are piecewise polynomials, they can be efficiently approximated by DNNs with at most log depth \citep{Suzuki19}.

\begin{defn}[B-spline wavelet basis]
The tensor product B-spline of order $m\in\NN$ satisfying $m>\alpha +1-1/p$, at resolution $k \in\ZZ_{\geq 0}^d$ and location $\ell \in I_k^d= \prod_{i=1}^d [-m:2^{k_i}]$ is 
\begin{equation*}
\omega_{k,\ell}^d(x)=\prod_{i=1}^d \iota_m(2^{k_i} x_i-\ell_i),\quad\text{where}\quad \iota_m(x)=(\underbrace{\iota*\iota*\cdots*\iota}_{m+1})(x), \quad \iota(x)=1_{\{x\in[0,1]\}}.
\end{equation*}
When $k_1=\cdots=k_d$, we abuse notation and write $\omega_{k,\ell}^d$ for $k\in\ZZ_{\geq 0}$ in place of $\omega_{(k,\cdots,k),\ell}^d$.
\end{defn}

\subsection{Estimation Error Analysis}\label{sec:apple}

To apply our framework, we set the task class as the unit ball $\FF^\circ = \UU(B_{p,q}^\alpha(\XX))$ and take as basis $\{\psi_j^\circ\mid j\in\NN\} = \{2^{kd/2} \omega_{k,\ell}^d \mid k\in \ZZ_{\geq 0}, \ell\in I_k^d\}$ the set of all B-spline wavelets ordered primarily by increasing $k$ and scaled to counteract the dilation in $x$. Abusing notation, we also write $\beta_{k,\ell}$ to denote the coefficient in $\beta$ corresponding to $2^{kd/2} \omega_{k,\ell}^d$. The set of B-splines at each resolution are independent, while those of lower resolution can always be decomposed into a linear sum of B-splines of higher resolution satisfying certain decay rates, which we prove in Proposition \ref{thm:refineprop}.

From this setup, in Appendix \ref{app:verifybesov}, we verify Assumptions \ref{ass:basis} and $\ref{ass:decay}$ with $r=1/2,s=\alpha/d$ under:

\begin{ass}\label{ass:newbesov}
$\FF^\circ = \UU(B_{p,q}^\alpha(\XX))$, $\alpha>d/p$ and $\PP_{\XX}$ has positive Lebesgue density $\rho_{\XX}$ bounded above and below on $\XX$. Also, all coefficients $\beta_{k,\ell}$ are independent and
\begin{equation}\label{eqn:newass}
\EE{\beta}{\beta_{k,\ell}} = 0,\quad 0<\EE{\beta}{\beta_{k,\ell}^2}\lesssim 2^{-k(2\alpha+d)} k^{-2}, \quad\forall k\geq 0,\; \ell\in I_k^d.
\end{equation}
\end{ass}

We can check that we have not given ourselves an easier learning problem with \eqref{eqn:newass}: the assumed variance decay rate is tight (up to the logarithmic factor $k^{-2}$) in the sense that any $f\in \UU(B_{p,q}^\alpha(\XX))$ can indeed be expanded into a sum of wavelets with the same coefficient decay when averaged over $\ell\in I_k^d$. See Lemma \ref{thm:splineexpansion} and the following discussion. We also obtain the following in-context approximation and entropy bounds in Appendix \ref{app:verifydnn}.
\begin{lemma}\label{thm:dnnapprox}
For any $\delta_N>0$, Assumption \ref{ass:close} is satisfied by taking
\begin{equation}\label{eqn:fnclip}
\FF_N = \{\Pi_{B_N'}\circ\phi\mid \phi = (\phi_j)_{j=1}^N, \phi_j\in \FF_\textup{DNN}(L,W,S,M)\}
\end{equation}
where $\Pi_{B_N'}$ is the projection in $\RR^N$ to the centered ball of radius $B_N' = O(\sqrt{N})$ and each $\phi_j$ is a ReLU network such that $L=O(\log N +\log \delta_N^{-1})$ and $W,S,M= O(1)$. Also, the metric entropy of $\FF_N$ is bounded as $\VV(\FF_N,\norm{\cdot}_{L^\infty},\epsilon) \lesssim N\log\frac{N}{\delta_N\epsilon}$.
\end{lemma}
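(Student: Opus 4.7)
The plan is to invoke the ReLU approximation of B-splines due to \citet{Suzuki19}. Each basis function has the product form $\psi_j^\circ(x) = 2^{kd/2}\prod_{i=1}^d \iota_m(2^{k_i}x_i-\ell_i)$ for a fixed spline order $m$; since only the first $N$ basis functions are retained, the maximum resolution level satisfies $2^{kd}\lesssim N$, so $k \lesssim d^{-1}\log_2 N$. A one-dimensional cardinal B-spline of fixed order is a piecewise polynomial with a constant number of pieces, and can be approximated to sup-norm accuracy $\delta$ by a ReLU network of depth $O(\log(1/\delta))$ with constant width via the Yarotsky-type multiplication gadget underlying Suzuki's construction. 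Combining $d$ such univariate networks to form the tensor product, absorbing the dilations $2^{k_i}x_i-\ell_i$ into the input weights, and implementing the outer prefactor $2^{kd/2}$ through $O(k)=O(\log N)$ successive doubling layers yields each $\phi_j^*$ as a ReLU network with $L = O(\log N + \log \delta_N^{-1})$, $W,S,M = O(1)$ and $\norm{\psi_j^\circ - \phi_j^*}_{L^\infty(\PP_\XX)}\leq \delta_N$.

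Next I would verify the dimension bound $B_N' = O(\sqrt N)$. By Assumption \ref{ass:basis} with $r = 1/2$ (as checked in Appendix \ref{app:verifybesov}), $\sum_{j=\ubar{N}}^{\bar{N}} (\psi_j^\circ(x))^2 \leq C_\infty^2 N$ uniformly in $x$. Together with the sup-norm approximation $\norm{\psi_j^\circ - \phi_j^*}_{L^\infty}\leq \delta_N$, a triangle inequality gives $\sum_j(\phi_j^*(x))^2 \lesssim N$ for $\delta_N \leq 1$, so $\norm{\phi^*(x)}_2 \leq B_N'$ for a suitable $B_N' = O(\sqrt N)$. The projection $\Pi_{B_N'}$ therefore acts as the identity on $\phi^*$ and the coordinatewise sup-norm errors survive projection, verifying Assumption \ref{ass:close}.

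For the entropy bound I would appeal to standard covering number estimates for sparse ReLU networks, e.g.\ Lemma 5 of \citet{Schmidt20}, which controls $\VV(\FF_\textup{DNN}(L,W,S,M),\norm{\cdot}_{L^\infty},\epsilon)$ on the order of $SL\log(L(MW)^L/\epsilon)$. Plugging in the parameters above yields a per-coordinate metric entropy of order $\log(N/\delta_N)\log(N/(\delta_N\epsilon))$, and since the $L^\infty$ metric on $\FF_N$ takes the supremum across the $N$ output coordinates, a product covering (multiplying the single-coordinate log-cardinality by $N$) gives $\VV(\FF_N, \norm{\cdot}_{L^\infty}, \epsilon) \lesssim N\log(N/(\delta_N\epsilon))$ once sub-dominant $\log\log$ factors are absorbed. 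The clip $\Pi_{B_N'}$ is $1$-Lipschitz and hence does not enlarge the covering number.

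The main technical obstacle is controlling the prefactor $2^{kd/2}$: naively it inflates either the network weights (breaking $M = O(1)$) or the activation magnitudes (breaking $B_N' = O(\sqrt N)$). Since $2^{kd/2}\leq O(\sqrt N)$ at the highest used resolution, however, it can be realized by $O(\log N)$ successive doubling layers with weights of order $O(1)$, at the cost of absorbing the factor into the depth $L$. A secondary nuisance is that Suzuki's construction naturally controls per-layer sparsity rather than the single total sparsity parameter $S$ of $\FF_\textup{DNN}$; I would handle this by appealing to the explicit B-spline construction and tracking total sparsity layer-by-layer, treating $S$ in the definition of $\FF_N$ as a loose bound on the overall network size.
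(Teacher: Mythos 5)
Your construction of the per-coordinate networks follows the same route as the paper: invoke the B-spline/Yarotsky approximation from \citet{Suzuki19}, build the $d$-fold tensor product, absorb the dilations $2^{k_i}x_i - \ell_i$ into the input weights, and realize the prefactor $2^{Kd/2}$ with $O(\log N)$ extra doubling layers so that the error $\delta$ of the unscaled spline is amplified to $\delta_N \asymp \sqrt{N}\delta$, which folded back in gives $L \lesssim \log N + \log \delta_N^{-1}$. The verification of $B_N' = O(\sqrt{N})$ via the partition of unity / \eqref{eqn:sparse} with $r = 1/2$ matches the paper, as does the reduction of $\VV(\FF_N)$ to $N$ copies of a single-network covering bound.

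The one concrete slip is in the final entropy arithmetic. You cite a covering bound of the form $SL\log\bigl(L(MW)^L/\epsilon\bigr)$, which with $S,W,M = O(1)$ and $L \asymp \log(N/\delta_N)$ gives a per-coordinate entropy of order $\log(N/\delta_N)\cdot\bigl[\log(N/\delta_N) + \log(1/\epsilon)\bigr]$. Multiplying by $N$ then yields $N\log(N/\delta_N)\log\bigl(N/(\delta_N\epsilon)\bigr)$, and you assert this equals the target $N\log\bigl(N/(\delta_N\epsilon)\bigr)$ ``once sub-dominant $\log\log$ factors are absorbed.'' That is not a $\log\log$ correction: you are off by a full factor of $\log(N/\delta_N)$. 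The fix is to use the covering bound the paper actually invokes (\citet{Suzuki19}, Lemma 3), which has the form $\bigl(L(M\vee 1)^{L-1}(W+1)^{2L}/\epsilon\bigr)^{S}$, so that $L$ appears only inside the logarithm; taking $\log$ with constant $S$ gives $S\bigl[\log L + (L-1)\log M + 2L\log(W+1) + \log(1/\epsilon)\bigr] \lesssim L + \log(1/\epsilon) \lesssim \log\bigl(N/(\delta_N\epsilon)\bigr)$ per coordinate, and hence $N\log\bigl(N/(\delta_N\epsilon)\bigr)$ in total. Relatedly, you flag a genuine subtlety about whether $S$ can be taken $O(1)$ when Lemma~\ref{thm:splineapprox} returns $S = LW^2$; in the end this only costs the same spare $\log$ factor and is swamped downstream by the $N^2\log N$ term in Lemma~\ref{thm:entropy}, but it is worth being precise about which form of the covering lemma you are using rather than discarding a $\log$ factor by fiat.
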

Hence Assumptions \ref{ass:basis}-\ref{ass:close} all follow from Assumption \ref{ass:newbesov}, and we conclude in Appendix \ref{app:minimaxbesov}:
\begin{thm}[minimax optimality of ICL in Besov space]\label{thm:minimaxbesov}
Under Assumption \ref{ass:newbesov}, if $n\gtrsim N\log N$,
\begin{equation*}
\bar{R}(\widehat{\Theta}) \lesssim N^{-\frac{2\alpha}{d}} \,\colvec{\textup{DNN}\\\textup{approximation}\\\textup{error}} +\frac{N\log N}{n} \,\colvec{\textup{in-context}\\ \textup{generalization} \\\textup{error}} + \frac{N^2\log N}{T} \,\colvec{\textup{pretraining}\\ \textup{generalization} \\\textup{error}}.
\end{equation*}
Hence if $T\gtrsim n^\frac{2\alpha+2d}{2\alpha+d}$ and $N\asymp n^\frac{d}{2\alpha+d}$, in-context learning achieves the minimax optimal rate $n^{-\frac{2\alpha}{2\alpha+d}}$ up to a log factor.
\end{thm}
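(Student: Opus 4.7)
The plan is to combine the general-purpose bounds of Section \ref{sec:3} with the DNN-specific estimates in the Besov setting. The discussion following Assumption \ref{ass:newbesov}, together with Lemma \ref{thm:dnnapprox}, shows that Assumption \ref{ass:newbesov} implies Assumptions \ref{ass:basis}--\ref{ass:close} with $r=1/2$ and $s=\alpha/d$, so I am free to plug these exponents into Proposition \ref{thm:approx} and Lemma \ref{thm:entropy}. The remaining work is to make three coordinated choices: the target DNN approximation error $\delta_N$, the covering resolution $\epsilon$ in Theorem \ref{thm:schmidt}, and ultimately the representation dimension $N$.

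First, I would apply Proposition \ref{thm:approx} with $r=1/2$, $s=\alpha/d$ to obtain
\begin{equation*}
\inf_{\Theta} R(\Theta) \lesssim \frac{N \log N}{n} + \frac{N^2 \log^2 N}{n^2} + \frac{N}{n} + N^{-2\alpha/d} + N^2 \delta_N^4 + N^2 \delta_N^2,
\end{equation*}
then take $\delta_N \asymp N^{-1-\alpha/d}$, which makes both $\delta_N$-dependent terms $O(N^{-2\alpha/d})$ so that they are absorbed into the DNN approximation error. The hypothesis $n \gtrsim N\log N$ forces $N\log N/n \leq 1$, so the quadratic term $(N\log N/n)^2$ is dominated by $N \log N/n$; the bare $N/n$ is also absorbed. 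This leaves the clean bound $\inf_{\Theta} R(\Theta) \lesssim N^{-2\alpha/d} + N\log N / n$, which will be the first two of the three advertised terms.

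Next I would handle the pretraining generalization gap. With $\delta_N \asymp N^{-1-\alpha/d}$ and $B_N' = O(\sqrt{N})$, Lemma \ref{thm:dnnapprox} gives $\VV(\FF_N,\norm{\cdot}_{L^\infty},\epsilon) \lesssim N\log(N^{2+\alpha/d}/\epsilon)$, so Lemma \ref{thm:entropy} yields
\begin{equation*}
\VV(\TT_N, \norm{\cdot}_{L^\infty}, \epsilon) \lesssim N^2 \log(N/\epsilon) + N\log(N^{3+\alpha/d}/\epsilon) \lesssim N^2 \log(N/\epsilon),
\end{equation*}
in which the $N^2$ term from the attention parametrization dominates. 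Setting $\epsilon = 1/T$ in Theorem \ref{thm:schmidt} turns the generalization contribution into order $N^2 \log(NT)/T + 1/T$, which I would collapse to $N^2 \log N / T$ under the benign regime $T \leq \poly(N)$ (the residual $\log T$ merges with $\log N$). Summing the three contributions delivers the displayed risk bound.

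Finally, optimizing $N \asymp n^{d/(2\alpha+d)}$ balances $N^{-2\alpha/d}$ against $N\log N/n$ at the order $n^{-2\alpha/(2\alpha+d)}$ (up to log factors), and the constraint $T \gtrsim n^{(2\alpha+2d)/(2\alpha+d)}$ renders $N^2\log N/T$ subdominant, giving the minimax rate. The most delicate step is the tuning of $\delta_N$: if it is too large, the feature-approximation residuals $N^2\delta_N^2$ swamp $N^{-2\alpha/d}$; if it is too small, the depth $L=O(\log N + \log \delta_N^{-1})$ inflates and the entropy of $\FF_N$ blows up. The polynomial scaling $\delta_N \asymp N^{-1-\alpha/d}$ threads both needles and preserves $\log(1/\delta_N) = O(\log N)$, so the $N^2\log N$ entropy bound is unaffected.
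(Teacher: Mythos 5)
Your proof is correct and follows essentially the same route as the paper: plug $r=1/2$, $s=\alpha/d$ into Proposition~\ref{thm:approx}, tune $\delta_N\asymp N^{-1-\alpha/d}$ to absorb the feature-approximation residuals into $N^{-2\alpha/d}$, use $n\gtrsim N\log N$ to collapse the $(N\log N/n)^2$ and $N/n$ terms, then combine Lemmas~\ref{thm:entropy} and~\ref{thm:dnnapprox} through Theorem~\ref{thm:schmidt}. The only deviation is your choice $\epsilon = 1/T$, where the paper takes $\epsilon\asymp N^{-2\alpha/d}$; the latter is slightly cleaner because it makes $\log(1/\epsilon)=O(\log N)$ unconditionally and absorbs the residual $C(B+\sigma)\epsilon$ into the DNN-approximation term, so no extra ``$T\leq\poly(N)$'' caveat is needed (with your choice the bound reads $N^2\log(NT)/T$, which for super-polynomial $T$ no longer matches the stated $N^2\log N/T$ exactly, though it is then tiny and the minimax rate still follows).
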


The first term arises from the $N$-term truncation and oracle approximation error of the DNN module, and is equal to the $N$-term optimal error \citep{Dung11}. 
The second and third term each correspond to the in-context and pretraining generalization gap. With regard to $N$, we see that $n=\widetilde{\Omega}(N)$ is enough to learn the basis expansion in context, while $T = \widetilde{\Omega}(N^2)$ is necessary to learn the attention layer. However if $T/N=o(n)$, the third term dominates and the overall complexity scales suboptimally as $T^{-\frac{\alpha}{\alpha+d}}$, illustrating the importance of sufficient pretraining. This also aligns with the task diversity threshold observed by \citet{Raventos23}. Since the amount of training data for LLMs is practically infinite in practice, our result justifies the effectiveness of ICL at large scales with only a small number of in-context samples.

\paragraph{A limitation of ICL.} In the regime $1\leq p<2$, the approximation error is strictly worse without an \emph{adaptive} representation scheme and the resulting rate is suboptimal (see Remark \ref{rmk:adaptiveapp}). While DNNs can adapt to task smoothness in supervised settings \citep{Suzuki19}, ICL and any other meta-learning methods are fundamentally constrained to non-adaptive representations since they cannot update at inference time, and hence are bounded below by the best linear approximation rate or Kolmogorov width, which is strictly worse than the minimax optimal rate when $p<2$. Indeed, for any $N$-dimensional subspace $\mathcal{L}_N\subset B_{p,q}^\alpha(\XX)$ it holds that \citep{Vybiral08}
\begin{equation*}
\textstyle \inf_{\mathcal{L}_N} \sup_{f^\circ\in \UU(B_{p,q}^{\alpha}(\XX))} \inf_{\ell_n\in \mathcal{L}_N} \norm{f^\circ- \ell_n}_{L^2(\PP_{\XX})} \gtrsim N^{-\alpha/d+(1/p-1/2)_+}.
\end{equation*}

\begin{rmk}
The $N^2\log N$ term in the pretraining generalization gap is due to the covering bound of the attention matrix $\Gamma$, while the entropy of the DNN class is only $N\log N$. Hence the task diversity requirement may be lessened to the latter by considering low-rank structure or approximation of attention heads \citep{Bho20, Chen21}.
\end{rmk}



\subsection{Avoiding the Curse of Dimensionality}

The above rate inevitably suffers from the curse of dimensionality as $d$ appears in the exponent of the optimal rate. We also consider the \emph{anisotropic Besov space} \citep{Nikolskii75}, a generalization allowing for different degrees of smoothness $(\alpha_1,\cdots,\alpha_d)$ in each coordinate. Then the optimal rate is nearly dimension-free in the sense that the rate only depends on $d$ through the quantity $\widetilde{\alpha} := (\sum_i \alpha_i^{-1})^{-1}$, and becomes independent of dimension if only a few directions are important i.e. have small $\alpha_i$. Rigorous definitions, statements and proofs are provided in Appendix \ref{app:anisotropic}.

Extending Theorem \ref{thm:minimaxbesov}, we show that ICL again attains near-optimal estimation error in the anisotropic Besov space, circumventing the curse of dimensionality and theoretically establishing the efficiacy of in-context learning in high-dimensional settings.

\begin{thm}[informal version of Theorem \ref{thm:minimaxani}]
For the anisotropic Besov space of smoothness $(\alpha_1,\cdots,\alpha_d)$, assume variance decay \eqref{eqn:decay} with $s=\widetilde{\alpha}>1/p$. If $T\gtrsim nN$ and $N\asymp n^\frac{1}{2\widetilde{\alpha}+1}$, in-context learning achieves the minimax optimal rate $n^{-\frac{2\widetilde{\alpha}}{2\widetilde{\alpha}+1}}$ up to a log factor.
\end{thm}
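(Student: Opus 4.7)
The plan is to parallel the proof of Theorem \ref{thm:minimaxbesov} as closely as possible, with the anisotropic B-spline wavelet system replacing the isotropic one. First I would choose the basis $\{\psi_j^\circ\}$ as the tensor product B-splines $2^{(k_1+\cdots+k_d)/2}\omega_{(k_1,\ldots,k_d),\ell}^d$ indexed over multi-resolutions $k\in\ZZ_{\geq 0}^d$ and locations $\ell\in I_k^d$, ordered so that the first $N$ basis functions span a hyperbolic-cross-type set $\{k : k\cdot\boldsymbol{1}_{/\alpha}\leq K\}$ with $K$ chosen so that the total count equals $N$. This is the standard truncation for the anisotropic Besov setting and aligns with Dung-style $N$-term approximation, giving an $N$-term error of order $N^{-\widetilde{\alpha}}$ which supplies $s=\widetilde{\alpha}$ in Assumption \ref{ass:decay}.

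Next I would verify Assumptions \ref{ass:basis} and \ref{ass:decay} for this setup. The relaxed sparsity/orthonormality of Assumption \ref{ass:basis} follows by the same refinement argument as in Proposition \ref{thm:refineprop}, with the hyperbolic-cross layering playing the role of the isotropic dyadic layering; the key sup-norm bound \eqref{eqn:sparse} should still give $r=1/2$ because at each multi-index level the supports of the $\omega_{k,\ell}^d$ overlap only $O(1)$ times and the $L^2$-normalizing factor $2^{(k_1+\cdots+k_d)/2}$ exactly cancels the $L^\infty$ cost. For Assumption \ref{ass:decay}, \eqref{eqn:truncate} follows from the assumed hyperbolic-cross truncation bound and \eqref{eqn:decay} is the hypothesized anisotropic variance decay with $s=\widetilde{\alpha}$, where the enumeration of the basis within each hyperbolic shell must be done carefully to preserve a $j^{-2s-1}(\log j)^{-2}$-type rate after ordering.

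Then I would invoke Lemma \ref{thm:dnnapprox}: because each anisotropic tensor B-spline is still a piecewise polynomial of bounded degree, the construction of \citet{Suzuki21} gives a ReLU DNN of depth $O(\log N+\log\delta_N^{-1})$ and constant width/sparsity approximating each $\psi_j^\circ$ uniformly to tolerance $\delta_N$, so Assumption \ref{ass:close} holds with $B_N'=O(\sqrt{N})$ and metric entropy $\VV(\FF_N,\norm{\cdot}_{L^\infty},\epsilon)\lesssim N\log\frac{N}{\delta_N\epsilon}$. Plugging these into Proposition \ref{thm:approx} and Lemma \ref{thm:entropy}, with $r=1/2$, $s=\widetilde{\alpha}$, $\delta_N\asymp N^{-\widetilde{\alpha}-1/2}$ to balance $N^{2r+1}\delta_N^2\lesssim N^{-2\widetilde{\alpha}}$, and Theorem \ref{thm:schmidt} yields a decomposition
\begin{equation*}
\bar{R}(\widehat{\Theta}) \lesssim N^{-2\widetilde{\alpha}} + \frac{N\log N}{n} + \frac{N^2\log N}{T}.
\end{equation*}
Choosing $N\asymp n^{1/(2\widetilde{\alpha}+1)}$ balances the first two terms at $n^{-2\widetilde{\alpha}/(2\widetilde{\alpha}+1)}$, and the assumption $T\gtrsim nN$ ensures that the pretraining term is dominated.

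The main obstacle I expect is the combinatorial bookkeeping needed to verify Assumption \ref{ass:basis} and the $j^{-2s-1}(\log j)^{-2}$-style decay of \eqref{eqn:decay} under the anisotropic ordering: the number of indices in a hyperbolic-cross shell $\{k\cdot\boldsymbol{1}_{/\alpha}=K\}$ grows polylogarithmically in $2^K$ in a way that depends on the ratios $\alpha_i/\alpha_j$, so the rank-one stability and trace bounds on $\Sigma_{\bar{\beta},N}$ must be tracked shell by shell rather than layer by layer. Once this accounting is in place, the DNN approximation and risk-balancing steps go through essentially verbatim from the isotropic argument in Appendix \ref{app:minimaxbesov}.
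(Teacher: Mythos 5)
Your high-level scheme is right — verify Assumptions \ref{ass:basis}--\ref{ass:close} for an anisotropic wavelet system, then plug $r=1/2$, $s=\widetilde{\alpha}$ into Proposition \ref{thm:approx}, Lemma \ref{thm:entropy}, and Theorem \ref{thm:schmidt} and balance $N$ — but the basis and truncation you propose are not what the paper uses, and the obstacle you flag at the end is in fact a genuine gap rather than mere bookkeeping.

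You propose indexing the wavelets over the full lattice $k\in\ZZ_{\geq 0}^d$ and truncating on a hyperbolic-cross set $\{k:k\cdot\boldsymbol{1}_{/\alpha}\leq K\}$, then handling the shells $\{k\cdot\boldsymbol{1}_{/\alpha}=K\}$ one at a time. That is the natural construction for the \emph{mixed smooth} Besov space, where the shell cardinality grows as $\binom{K+d-1}{d-1}$ and every rate picks up a $\polylog$ factor. For the \emph{anisotropic} space the paper does something cleaner: it uses a one-parameter exhaustion $k=0,1,\ldots,K$, where level $k$ corresponds to the single multi-resolution $(\lfloor k\underline{\alpha}/\alpha_1\rfloor,\ldots,\lfloor k\underline{\alpha}/\alpha_d\rfloor)$. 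This makes $N\asymp 2^{K\underline{\alpha}/\widetilde{\alpha}}$ at cutoff $K$ (in direct analogy with $N\asymp 2^{Kd}$ in the isotropic case), gives the sup-norm bound $\sum_j\psi_j^\circ(x)^2\lesssim N$ without any shell combinatorics, and produces the truncation error $2^{-2K\underline{\alpha}}\asymp N^{-2\widetilde{\alpha}}$ with no $\polylog$ correction. Crucially, the sequence-space characterization invoked (Suzuki21, Lemma 2 in the paper) is stated precisely for this one-parameter family; it is not a hyperbolic-cross representation, so you cannot directly transport it to your truncation. You would need to prove a different wavelet characterization adapted to your cross, and then Assumption \ref{ass:decay}'s $j^{-2s-1}(\log j)^{-2}$ decay and the trace bound on $\Sigma_{\bar{\beta},N}$ would have to be re-derived shell by shell — you name this issue yourself but do not close it, and the resulting bounds would carry extra polylog factors not present in the paper's result. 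If you instead adopt the single-parameter anisotropic layering and its associated refinement corollary (the paper's Corollary \ref{thm:refineani}, built from Proposition \ref{thm:refineprop}), the DNN approximation and the final $N^{-2\widetilde{\alpha}}+N\log N/n+N^2\log N/T$ balance go through essentially as you describe.
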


\subsection{Learning a Coarser Basis}\label{sec:banana}

Thus far, we have demonstrated the importance of sufficient pretraining to achieve optimal risk; as another application of our framework, we illustrate how pretraining can actively \emph{mitigate} the complexity of in-context learning. Consider the case where $(\psi_j^\circ)_{j=1}^\infty$ is no longer the B-spline basis of $B_{p,q}^\alpha(\XX)$ but instead is chosen from some wider function space, say the unit ball of $B_{p,q}^\tau(\XX)$ for a smaller smoothness $\tau<\alpha$. Without knowledge of the basis, the sample complexity of learning any regression function $F_\beta^\circ$ is a priori lower bounded by the minimax rate $n^{-\frac{2\tau}{2\tau+d}}$ by Proposition \ref{thm:classicbesov}. For ICL, this difficulty manifests as an increase in the metric entropy of the class $\FF_N$ which must be powerful enough to approximate $\psi_{1:N}^\circ$ (Corollary \ref{thm:coarseapprox}), giving rise to the modified risk bound:

\begin{cor}[ICL for coarser basis]\label{thm:coarse}
Suppose $\alpha>\tau>d/p$, the basis $(\psi_j^\circ)_{j=1}^\infty \subset \UU(B_{p,q}^\tau(\XX))$ and Assumptions \ref{ass:basis}, \ref{ass:decay} hold with $r=1/2, s=\alpha/d$. Then if $n\gtrsim N\log N$,
\begin{equation*}
\bar{R}(\widehat{\Theta}) \lesssim N^{-\frac{2\alpha}{d}} +\frac{N\log N}{n} + \frac{N^{1+\frac{\alpha}{\tau}+\frac{d}{\tau}} \log^3 N}{T}.
\end{equation*}
Hence if $T\gtrsim n^{1+\frac{d}{2\alpha+d} \frac{\alpha+d}{\tau}}$ and $N\asymp n^\frac{d}{2\alpha+d}$, the risk converges as $n^{-\frac{2\alpha}{2\alpha+d}}\log n$.
\end{cor}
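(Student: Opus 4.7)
The proof follows the overall scheme of Theorem \ref{thm:minimaxbesov}, with the sole modification that the DNN representation class $\FF_N$ must now approximate arbitrary elements of $\UU(B_{p,q}^\tau(\XX))$ rather than B-spline wavelets, which inflates the pretraining generalization cost. To control the approximation error I would apply Proposition \ref{thm:approx} with $r=1/2$, $s=\alpha/d$ and set $\delta_N\asymp N^{-1-\alpha/d}$ so that both $N^{2r+1}\delta_N^2 = N^2\delta_N^2$ and $N^2\delta_N^4$ are $\lesssim N^{-2\alpha/d}$; under the hypothesis $n\gtrsim N\log N$, the intermediate terms $N/n$ and $N^2\log^2 N/n^2$ are absorbed into $N\log N/n$, yielding an approximation bound of order $N^{-2\alpha/d}+N\log N/n$.

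The main new ingredient is the estimate of $\VV(\FF_N,\|\cdot\|_{L^\infty},\epsilon)$. By the DNN approximation theory of \citet{Suzuki19}, whose transcription to the present setting is Corollary \ref{thm:coarseapprox}, any $\psi_j^\circ\in\UU(B_{p,q}^\tau(\XX))$ can be approximated in $L^\infty$ norm to error $\delta_N$ by a ReLU network of depth $L=O(\log\delta_N^{-1})$, sparsity and width $W,S=\widetilde O(\delta_N^{-d/\tau})$, and weight bound $M=\poly(\delta_N^{-1})$. Inserting this into the standard covering estimate $\VV(\FF_{\textup{DNN}},\|\cdot\|_{L^\infty},\epsilon)\lesssim SL\log(SLM/\epsilon)$ and union-bounding over the $N$ output coordinates of $\phi$ gives
\begin{equation*}
\VV(\FF_N,\|\cdot\|_{L^\infty},\epsilon)\lesssim N\,\delta_N^{-d/\tau}\,\log^2\delta_N^{-1}\,(\log\delta_N^{-1}+\log(1/\epsilon)).
\end{equation*}
Plugging into Lemma \ref{thm:entropy} with $B_N'=O(\sqrt N)$ and $\delta_N=N^{-1-\alpha/d}$, and noting that $\alpha>\tau$ implies $(\alpha+d)/\tau>1$ so the DNN contribution dominates the $N^2\log(N/\epsilon)$ attention term, one obtains
\begin{equation*}
\VV(\TT_N,\|\cdot\|_{L^\infty},\epsilon)\lesssim N^{1+(\alpha+d)/\tau}\log^2 N\,\log(N/\epsilon).
\end{equation*}

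Applying Theorem \ref{thm:schmidt} with $\epsilon\asymp 1/T$ (consistent post-hoc because the final choice of $T$ is polynomial in $n$), the $(B+\sigma)\epsilon$ contribution is negligible and the pretraining generalization gap becomes $N^{1+(\alpha+d)/\tau}\log^3 N/T$, producing the three-term risk bound stated in the corollary. Balancing $N^{-2\alpha/d}$ against $N\log N/n$ then gives $N\asymp n^{d/(2\alpha+d)}$ and the target rate $n^{-2\alpha/(2\alpha+d)}\log n$, while forcing the pretraining term not to dominate yields $T\gtrsim nN^{(\alpha+d)/\tau}=n^{1+\frac{d}{2\alpha+d}\cdot\frac{\alpha+d}{\tau}}$. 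The principal conceptual point, and where the analysis really departs from Theorem \ref{thm:minimaxbesov}, is that the \emph{coarser} smoothness $\tau$ rather than the task smoothness $\alpha$ enters the DNN sparsity exponent: one is forced to spend complexity approximating basis functions that are only $\tau$-regular, even though each individual task enjoys $\alpha$-regularity via coefficient decay. This is precisely why the in-context rate is preserved while pretraining becomes more demanding; pinning down the exponent $1+(\alpha+d)/\tau$ is the only real obstacle, and it comes down to a careful bookkeeping exercise on the DNN sparsity and weight bounds.
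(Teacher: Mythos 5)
Your proposal is correct and follows essentially the same route as the paper: invoke Proposition \ref{thm:approx} with $\delta_N\asymp N^{-1-\alpha/d}$, replace Lemma \ref{thm:dnnapprox} by the Suzuki-type construction (Corollary \ref{thm:coarseapprox}) whose sparsity now scales as $\widetilde O(\delta_N^{-d/\tau})$, push through Lemma \ref{thm:entropy} and Theorem \ref{thm:schmidt}, and balance. Your covering bound uses the slightly cruder $SL\log(\cdot)$ prefactor rather than the paper's $S\log(\cdot)$ from Lemma \ref{thm:coveringnumber}, and you take $\epsilon\asymp 1/T$ rather than $\epsilon\asymp N^{-2\alpha/d}$, but both choices give the same $N^{1+(\alpha+d)/\tau}\log^3 N$ pretraining term and identical final rates, so the argument matches the paper's in substance.
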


The pretraining generalization gap is now dominated by the higher complexity $N^{1+\frac{\alpha}{\tau}+\frac{d}{\tau}} \log^3 N$ of the DNN class and strictly worse compared to $N^2\log N$ for Theorem \ref{thm:minimaxbesov}. The required number of tasks also suffers and the exponent is no longer $\frac{2\alpha+2d}{2\alpha+d}\in (1,2)$ but scales as $O(d)$. Nevertheless, observe that the burden of complexity is entirely carried by $T$; with sufficient pretraining, the third term can be made arbitrarily small and the ICL risk again attains $n^{-\frac{2\alpha}{2\alpha+d}}$. Hence ICL improves upon the a priori lower bound $n^{-\frac{2\tau}{2\tau+d}}$ at inference time by encoding information on the coarser basis during pretraining. We remark that the result is also readily adapted to the anisotropic setting.

\subsection{Sequential Input and Transformers}\label{sec:pear} 

We now consider a more complex setting where the inputs $x\in [0,1]^{d\times\infty}$ are bidirectional \emph{sequences} of tokens (e.g. entire documents) and $\phi$ is itself a transformer network.\footnote{We clarify that this is \emph{not} equivalent to a multi-layer transformer setting where $\phi$ is the rest of the transformer. Instead, $\phi$ operates on individual tokens $x_i$ separately, which may now themselves be sequences of unbounded dimension. The extracted per-token features are cross-referenced only at the final attention layer $f_\Theta$.} In this infinite-dimensional setting, transformers can still circumvent the curse of dimensionality and in fact achieve near-optimal sample complexity due to their parameter sharing and feature extraction capabilities \citep{Takakura23}. Our goal in this section is to extend this guarantee to ICL of trained transformers.

For sequential data, it is natural to suppose the smoothness w.r.t. each coordinate can vary depending on the input. For example, the position of important tokens in a sentence will change if irrelevant strings are inserted. To this end, we adopt the \emph{piecewise $\gamma$-smooth function class} introduced by \citet{Takakura23}, which allows for arbitrary bounded permutations among input tokens; see Appendix \ref{app:takadef} for definitions. Also borrowing from their setup, we consider multi-head sliding window self-attention layers with window size $U$, embedding dimension $D$, number of heads $H$ with key, query, value matrices $K^{(h)}, Q^{(h)}\in\RR^{D\times d}, \;V^{(h)}\in\RR^{d\times d}$ and norm bound $M$ defined as\footnote{Here the $i$th column and $(j,i)$th component of $x\in\RR^{d\times\infty}$ for $i\in\ZZ, j\in [d]$ are denoted by $x_i$ and $x_{ij}$, respectively. These are not to be confused with sample indexes \eqref{eqn:regression} as those will not be used in this section.}
\begin{align*}
\FF_\textup{Attn}(U,D,H,M) = \bigg\{ &g:\RR^{d\times\infty}\to \RR^{d\times\infty} \,\bigg|\, \max_{1\leq h\leq H} \norm{K^{(h)}}_\infty \vee \norm{Q^{(h)}}_\infty \vee \norm{V^{(h)}}_\infty \leq M, \\ & g(x)_i = x_i +\sum_{h=1}^H V^{(h)} x_{i-U:i+U} \textup{Softmax}\left((K^{(h)} x_{i-U:i+U})^\top Q^{(h)} x_i\right)\bigg\}.
\end{align*}
We also consider a linear embedding layer $\textup{Enc}(x)=Ex+P$, $E\in\RR^{D\times d}$ with absolute positional encoding $P\in\RR^D$ of bounded norm. Then the class of depth $J$ transformers is defined as
\begin{align*}
\FF_\textup{TF}(J,U,D,H,L,W,S,M) := \big\{ &f_J\circ g_J\circ\cdots\circ f_1\circ g_1\circ \textup{Enc} \mid\norm{E}\leq M,\\
&f_i\in \FF_\textup{DNN}(L,W,S,M), \, g_i\in \FF_\textup{Attn}(U,D,H,M)\big\}.
\end{align*}
Our result, proved in Appendix \ref{app:takaproof}, reads:

\begin{thm}[informal version of Theorem \ref{thm:piecewiseoptimal}]
Suppose $\FF^\circ$ consists of functions on $[0,1]^{d\times\infty}$ of bounded piecewise $\gamma$-smooth and $L^\infty$-norm with smoothness $\alpha\in \RR_{>0}^{d\times\infty}$, and let $\gamma$ be mixed or anisotropic smoothness with $\alpha^\dagger = \max_{i,j}\alpha_{ij}$ or $(\sum_{i,j}\alpha_{ij}^{-1})^{-1}$, respectively. Under suitable regularity and decay assumptions, by taking $\FF_N$ to be a class of clipped transformers it holds that
\begin{equation*}
\bar{R}(\widehat{\Theta}) \lesssim N^{-2\alpha^\dagger} +\frac{N\log N}{n} + \frac{N^{2\vee(1+1/\alpha^\dagger)}\polylog(N)}{T}.
\end{equation*}
Hence if $T\gtrsim nN^{1\vee 1/\alpha^\dagger}$ and $N\asymp n^\frac{1}{2\alpha^\dagger+1}$, ICL achieves the rate $n^{-\frac{2\alpha^\dagger}{2\alpha^\dagger+1}}\polylog(n)$.
\end{thm}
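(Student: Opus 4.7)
The plan is to mirror the proof strategy of Theorem \ref{thm:minimaxbesov}, replacing the B-spline/DNN approximation machinery with its analogue for piecewise $\gamma$-smooth functions and multi-layer transformers developed by \citet{Takakura23}. First, I would specify a concrete basis system $(\psi_j^\circ)$ for the piecewise $\gamma$-smooth function class: their construction yields a countable collection of permutation-equivariant localized wavelet-type elements whose span is dense and whose resolution-$k$ level has cardinality growing at a rate determined by the effective smoothness $\alpha^\dagger$ (mixed or anisotropic according to $\gamma$). With this basis in hand I would verify Assumption \ref{ass:basis} (determining $r$ from the $L^\infty$ sup of the squared sum of basis elements at one resolution, analogous to the B-spline calculation) and Assumption \ref{ass:decay} with $s=\alpha^\dagger$, where the uniform coefficient decay comes from the characterisation of the piecewise $\gamma$-smooth norm in terms of this basis.

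Second, I would choose $\FF_N$ to be tuples of $N$ clipped elements of $\FF_\textup{TF}(J,U,D,H,L,W,S,M)$ and invoke the transformer approximation theorem of \citet{Takakura23} to produce $\phi_j^*\in\FF_\textup{TF}$ with $\max_j\norm{\psi_j^\circ-\phi_j^*}_{L^\infty(\PP_{\XX})}\leq\delta_N$, verifying Assumption \ref{ass:close}. Quantitatively, approximating a single basis element to precision $\delta_N$ requires a transformer whose nonzero parameters scale like $\delta_N^{-1/\alpha^\dagger}\polylog(\delta_N^{-1})$. Third, I would pass this through standard parameter-counting covering arguments (Lipschitz in weights, composed over $J$ attention/FFN blocks) to obtain a per-component entropy bound of order $\delta_N^{-1/\alpha^\dagger}\polylog(N/\epsilon)$, hence $\VV(\FF_N,\norm{\cdot}_{L^\infty},\epsilon)\lesssim N\cdot\delta_N^{-1/\alpha^\dagger}\polylog(N/\epsilon)$. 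Feeding this into Lemma \ref{thm:entropy} together with the $N^2\log(1/\epsilon)$ contribution from the attention matrix $\Gamma$ yields
\begin{equation*}
\VV(\TT_N,\norm{\cdot}_{L^\infty},\epsilon)\lesssim \bigl(N^2\vee N^{1+1/\alpha^\dagger}\bigr)\polylog(N/\epsilon).
\end{equation*}

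Finally, I would assemble the ingredients via Theorem \ref{thm:schmidt}. Proposition \ref{thm:approx} gives an approximation contribution $N^{-2\alpha^\dagger}+N\log N/n + N^2\delta_N^4 + N^{2r+1}\delta_N^2$, so picking $\delta_N$ polynomially small in $N$ (e.g.\ $\delta_N\lesssim N^{-(\alpha^\dagger+r+1)}$) makes the $\delta_N$-dependent terms subdominant, while still keeping the transformer-depth/size factors inside $\polylog(N)$. The generalization term becomes $N^{2\vee(1+1/\alpha^\dagger)}\polylog(N)/T$, giving the stated three-term bound. Balancing $N^{-2\alpha^\dagger}$ with $N\log N/n$ via $N\asymp n^{1/(2\alpha^\dagger+1)}$ then yields the rate $n^{-2\alpha^\dagger/(2\alpha^\dagger+1)}\polylog(n)$, provided $T\gtrsim nN^{1\vee 1/\alpha^\dagger}$ so that the pretraining term is absorbed.

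The main obstacle will be obtaining the metric entropy bound with only a $\polylog(N)$ overhead for the softmax attention class $\FF_\textup{Attn}$: unlike the piecewise-polynomial ReLU case, softmax is nonpolynomial and its Lipschitz constant depends nontrivially on the window size $U$ and bound $M$, so one must carefully adapt the weight-perturbation covering of \citet{Takakura23} and check that composing $J$ transformer blocks with the output clipping $\Pi_{B_N'}$ still yields the claimed entropy scaling. A secondary difficulty is verifying Assumption \ref{ass:basis} for the permutation-equivariant basis, since the "locations" are now shift classes rather than grid points and one must track the correct analogue of the $\Sigma_{\Psi,N}\succeq C_1\bI_N$ lower bound.
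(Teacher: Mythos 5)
Your plan mirrors the paper's scheme: swap the B-spline/DNN machinery for the trigonometric $\gamma$-smooth basis and the transformer approximation (Takakura et al., Thm.~4.5) and covering (Thm.~5.3) results, verify Assumptions~\ref{ass:basis}--\ref{ass:close}, and assemble via Theorem~\ref{thm:schmidt}, Proposition~\ref{thm:approx} and Lemma~\ref{thm:entropy}. That is exactly the route taken in Appendix~\ref{app:takaproof}. However, your arithmetic does not close, and the final display you claim does not follow from the entropy estimate you give two lines earlier.

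You bound $\VV(\FF_N,\cdot)\lesssim N\cdot\delta_N^{-1/\alpha^\dagger}\polylog(N/\epsilon)$ by tiling $N$ separate copies of $\FF_\textup{TF}$ (one per basis element, as in Lemma~\ref{thm:dnnapprox}), and then assert $\VV(\TT_N,\cdot)\lesssim\bigl(N^2\vee N^{1+1/\alpha^\dagger}\bigr)\polylog$. But $N\cdot\delta_N^{-1/\alpha^\dagger}\leq N^{1+1/\alpha^\dagger}$ would require $\delta_N\gtrsim N^{-1}$, while the approximation side (Proposition~\ref{thm:approx} with $r=1/2$: the term $N^{2r+1}\delta_N^2\lesssim N^{-2\alpha^\dagger}$) forces $\delta_N\lesssim N^{-1-\alpha^\dagger}$. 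Since $\alpha^\dagger>0$, these are incompatible. Moreover your tentative choice $\delta_N\lesssim N^{-(\alpha^\dagger+r+1)}$ is not the right balance: solving $N^{2r+1}\delta_N^2\lesssim N^{-2\alpha^\dagger}$ gives $\delta_N\lesssim N^{-\alpha^\dagger-r-1/2}$, which for $r=1/2$ is the paper's choice $\delta_N\asymp N^{-1-\alpha^\dagger}$; your exponent is off by $1/2$ and is needlessly small. With your own $\delta_N$ and $N$-tuple entropy, the pretraining term is $\gtrsim N^{2+3/(2\alpha^\dagger)}\polylog(N)/T$, strictly worse than the claimed $N^{2\vee(1+1/\alpha^\dagger)}$. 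The paper arrives at the stated exponent by taking $\delta_N\asymp N^{-1-\alpha^\dagger}$ \emph{and} by bounding the $\FF_N$-entropy by that of a \emph{single} transformer, i.e.\ $\delta_N^{-1/\alpha^\dagger}\polylog$ with no extra factor of $N$, so that combined with the $N^2\log(1/\epsilon)$ contribution from $\Gamma$ in Lemma~\ref{thm:entropy} one gets $N^{2\vee(1+1/\alpha^\dagger)}\polylog(N)/T$. To make your proposal deliver the stated theorem you therefore need either to share a single seq-to-seq transformer across all $N$ feature outputs (so that the per-component $N$ factor disappears) or to justify why the entropy of $\FF_N$ is not $N$ times that of one transformer; as written, it proves the weaker bound $N^{2+1/\alpha^\dagger}\polylog(N)/T$.
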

This matches the optimal rate in finite dimensions independently of the (possibly infinite) length of the input or context window. The dynamical feature extraction ability of attention layers in the $\FF_\textup{TF}$ class is essential in dealing with input-dependent smoothness, further justifying the efficiacy of ICL of sequential data.

\section{Minimax Lower Bounds}\label{sec:5}

In this section, we provide lower bounds for the minimax rate in both $n,T$ by extending the theory of \citet{Yang99}, which can be leveraged to yield results stronger than optimality in merely $n$. The bound is purely information-theoretic and hence applies to not just ICL but any meta-learning scheme for the regression problem of Section \ref{sec:nonpara} from the data $\mathcal{D}_{n,T}=\{(\bX^{(t)},\by^{(t)})\}_{t=1}^{T+1}$, where the index $T+1$ corresponds to the test task. 

For this section we assume that the noise \eqref{eqn:regression} is i.i.d. Gaussian, $\xi_k\sim\mathcal{N}(0,\sigma^2)$, instead of bounded; while the exact shape of the noise distribution is not important, having restricted support may convey additional information and affect the minimax rate. We also suppose for simplicity that the support of $\PP_\beta$ is included in $\calB:=\{\beta\in\RR^\infty \mid \abs{\beta_j} \lesssim j^{-2s-1}(\log j)^{-2}, \;j\in\NN\}$ and that the aggregated coefficients $\bar{\beta}_j$ for $\ubar{N}\leq j\leq \bar{N}$ satisfy $\EE{\beta}{\bar{\beta}_j^2}\leq\sigma_\beta^2$ for some $\sigma_\beta$ dependent on $N$. The proof of the following statement is given in Appendix \ref{app:lowerpf}.

\begin{prop}\label{prop:ICLMinimax}
For $\varepsilon_{n,1}, \varepsilon_{n,2}, \delta_n>0$, let $Q_1$ and $Q_2$ be the $\varepsilon_{n,1}$- and $\varepsilon_{n,2}$-covering numbers of $\calF_N$ and $\calB$ respectively,
and $M$ be the $\delta_n$-packing number of $\calF^\circ$.  
Suppose that the following conditions are satisfied:
\begin{align}\label{eq:QNboundMinimax}
\textstyle\frac{1}{2 \sigma^2} \left( n(T+1) \sigma_\beta^2\varepsilon_{n,1}^2 + C_2 n \varepsilon_{n,2}^2 \right) \leq \log Q_1 + \log Q_2 \leq \frac{1}{8}\log M,\quad 4 \log 2 \leq \log M.   
\end{align}
Then the minimax rate is lower bounded as 
\begin{align*}
\textstyle\inf_{\widehat{f}:\mathcal{D}_{n,T}\to\RR} \sup_{f^\circ \in \calF^\circ
}\mathbb{E}_{\mathcal{D}_{n,T}}[\norm{\widehat{f} - f^\circ }_{L^2(\PP_{\XX})}^2] 
\geq \frac{1}{4} \delta_n^2.
\end{align*}
\end{prop}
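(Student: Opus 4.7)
The plan is to follow a Fano-type information-theoretic argument in the spirit of Yang--Barron, adapted to the meta-learning setting where the basis $\psi^\circ$ is shared across tasks while each task independently draws its own coefficients. I would first reduce estimation to hypothesis testing: let $\{f_1^\circ,\dots,f_M^\circ\}$ be a $\delta_n$-packing of $\calF^\circ$ in $L^2(\PP_{\XX})$ of cardinality $M$, draw the index $J\in[M]$ uniformly, and set the regression function for the test task $t=T+1$ to $f_J^\circ$. The standard Markov--Fano reduction then yields
\begin{equation*}
\textstyle\inf_{\widehat f}\sup_{f^\circ \in \calF^\circ} \mathbb{E}\big[\|\widehat f - f^\circ\|_{L^2(\PP_{\XX})}^2\big]
\;\gtrsim\; \delta_n^2\Big(1 - \tfrac{I(J;\mathcal{D}_{n,T}) + \log 2}{\log M}\Big),
\end{equation*}
so the task is to bound the mutual information $I(J;\mathcal{D}_{n,T})$ by a small fraction of $\log M$.

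Next, I would control the mutual information using Yang--Barron's variational inequality against a mixture over a finite discretization of the joint parameter space. Combining the $\varepsilon_{n,1}$-covering of $\calF_N$ with the $\varepsilon_{n,2}$-covering of $\calB$ yields a product covering of the (basis, coefficient)-pairs of cardinality $Q_1 Q_2$, and the standard covering bound gives
\begin{equation*}
I(J;\mathcal{D}_{n,T}) \;\leq\; \log(Q_1 Q_2) + \max_{j \in [M]} \mathrm{KL}\big(P_j \,\|\, \bar P_j\big),
\end{equation*}
where $P_j$ is the law of $\mathcal{D}_{n,T}$ under hypothesis $j$ and $\bar P_j$ is the law associated to its nearest covering element; the first term is already controlled by $\log Q_1 + \log Q_2 \leq \tfrac{1}{8}\log M$.

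The KL term then decomposes into two clean contributions mirroring the two summands in \eqref{eq:QNboundMinimax}. Perturbing the basis by $\varepsilon_{n,1}$ alters the data distribution of \emph{every} one of the $T+1$ tasks, because the basis is shared while each task independently draws $\beta^{(t)}\sim\PP_\beta$. Using the convexity inequality $\mathrm{KL}(\mathbb{E}_\beta P_\beta\,\|\,\mathbb{E}_\beta Q_\beta)\leq \mathbb{E}_\beta \mathrm{KL}(P_\beta\,\|\,Q_\beta)$ to pull the coefficient mixture outside, the per-task Gaussian-noise KL contribution is bounded by $\tfrac{n}{2\sigma^2}\mathbb{E}_{\beta^{(t)}}\|F_{\beta^{(t)},\psi^\circ}-F_{\beta^{(t)},\tilde\phi}\|_{L^2(\PP_{\XX})}^2$ where $\tilde\phi$ is the approximating basis, and the variance bound $\EE{\beta}{\bar\beta_j^2}\leq\sigma_\beta^2$ together with the $\varepsilon_{n,1}$-closeness of $\tilde\phi$ shrinks this to $\tfrac{n\sigma_\beta^2\varepsilon_{n,1}^2}{2\sigma^2}$, summing to $\tfrac{n(T+1)\sigma_\beta^2\varepsilon_{n,1}^2}{2\sigma^2}$ across tasks. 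Perturbing the test-task coefficients by $\varepsilon_{n,2}$ affects only task $T+1$, and the upper bound $\Sigma_{\Psi,N}\preceq C_2\bI_N$ on the basis Gram matrix gives the contribution $\tfrac{nC_2\varepsilon_{n,2}^2}{2\sigma^2}$. Assembling these with \eqref{eq:QNboundMinimax} and the assumption $4\log 2\leq \log M$ yields $(I(J;\mathcal{D}_{n,T})+\log 2)/\log M\leq 1/2$, and Fano closes the argument to deliver the claimed $\tfrac14\delta_n^2$ lower bound.

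I expect the main obstacle to be the careful control of the pretraining-task KL: under each candidate basis the per-task data law is itself a mixture over the random draw $\beta^{(t)}\sim\PP_\beta$, so the naive worst-case-over-$\beta$ bound is far too crude and would scale with the much larger $L^\infty$ envelope of the coefficients rather than the intended variance. Instead, one must extract only the \emph{average} squared $L^2(\PP_{\XX})$ distance between the two prediction functions, which the assumption $\EE{\beta}{\bar\beta_j^2}\leq\sigma_\beta^2$ on the aggregated coefficients is precisely designed to support. A secondary technical point is that the discretization of $\calF_N\times\calB$ only needs to approximate the data law in KL rather than pointwise in function space, which is why the covering of the representation class $\calF_N$ paired with the Gram-matrix bound $\Sigma_{\Psi,N}\preceq C_2\bI_N$ suffices.
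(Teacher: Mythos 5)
Your proposal matches the paper's proof in all essential respects: the Fano reduction to a $\delta_n$-packing, the Yang--Barron mixture/covering bound for the mutual information via a product $\varepsilon_{n,1}\times\varepsilon_{n,2}$ discretization of $\calF_N\times\calB$, the joint convexity of KL to average over the per-task random coefficients $\beta^{(t)}$, and the resulting split into a basis-perturbation term of size $\frac{n(T+1)\sigma_\beta^2\varepsilon_{n,1}^2}{2\sigma^2}$ across all tasks plus a coefficient-perturbation term $\frac{C_2 n\varepsilon_{n,2}^2}{2\sigma^2}$ affecting only the test task. Your closing remarks about the obstacle (needing the average $L^2$ distance rather than the worst-case envelope) accurately identify the role of the $\sigma_\beta^2$ assumption, which is precisely how the paper handles it.
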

Finally, Proposition \ref{prop:ICLMinimax} is applied to obtain concrete lower bounds for the settings studied in Section \ref{sec:4} throughout Appendices \ref{e2}-\ref{e4}.
\begin{cor}[minimax lower bound]
The minimax rates in the previous regression settings are lower bounded as follows.
\begin{enumerate}[leftmargin = 20pt, label=\normalfont(\roman*)]
    \item Besov space (Section \ref{sec:apple}): $\inf_{\widehat{f}} \sup_{f^\circ}\mathbb{E}_{\mathcal{D}_{n,T}}[\norm{\widehat{f} - f^\circ }^2] 
\gtrsim n^{-\frac{2\alpha}{2\alpha+d}}$,
    \item Coarser basis (Section \ref{sec:banana}): $\inf_{\widehat{f}} \sup_{f^\circ}\mathbb{E}_{\mathcal{D}_{n,T}}[\norm{\widehat{f} - f^\circ }^2] 
\gtrsim n^{-\frac{2\alpha}{2\alpha+d}} + (nT)^{-\frac{2\tau}{2\tau + d}}$,
\item Sequential input (Section \ref{sec:pear}): $\inf_{\widehat{f}} \sup_{f^\circ}\mathbb{E}_{\mathcal{D}_{n,T}}[\norm{\widehat{f} - f^\circ }^2] 
\gtrsim n^{-\frac{2\alpha^\dagger}{2\alpha^\dagger+1}}$.
\end{enumerate}
\end{cor}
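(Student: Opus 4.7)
The plan is to apply Proposition \ref{prop:ICLMinimax} separately to each case by making an appropriate choice of $\FF_N$, $\calB$, and a subfamily of $\FF^\circ$ realizing the $\delta_n$-packing, balancing the covering bounds $\log Q_1, \log Q_2$ against the packing $\log M$ so that the largest possible $\delta_n$ satisfies \eqref{eq:QNboundMinimax}. The necessary entropy ingredients are classical: for Besov balls, $\log \mathcal{N}(\UU(B_{p,q}^\alpha(\XX)), \norm{\cdot}_{L^2}, \epsilon) \asymp \epsilon^{-d/\alpha}$ by Edmunds-Triebel/Birman-Solomyak-type estimates, and the corresponding entropy for the piecewise $\gamma$-smooth class has been computed in \citet{Takakura23} to be of order $\epsilon^{-1/\alpha^\dagger}$, irrespective of the input dimension. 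In every case the noise is Gaussian, so the KL divergence between two regression laws differing by $\Delta f$ summed over $n$ i.i.d. samples equals $\frac{n}{2\sigma^2}\norm{\Delta f}_{L^2(\PP_{\XX})}^2$, which is exactly the information quantity appearing on the left of \eqref{eq:QNboundMinimax}.

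For case (i), since $(\psi_j^\circ)$ is fixed and known a priori, I would take $\FF_N$ to be a singleton so that $\log Q_1 = 0$, reducing \eqref{eq:QNboundMinimax} to $\frac{C_2 n}{2\sigma^2}\varepsilon_{n,2}^2 \leq \log Q_2 \leq \frac{1}{8}\log M$. Parametrizing $\calB$ through the wavelet expansion of $\UU(B_{p,q}^\alpha(\XX))$, both $\log Q_2$ and $\log M$ are of order $\delta_n^{-d/\alpha}$ when $\varepsilon_{n,2}\asymp \delta_n$, and the balance $n\delta_n^2\asymp \delta_n^{-d/\alpha}$ yields $\delta_n^2\asymp n^{-2\alpha/(2\alpha+d)}$. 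Case (iii) proceeds identically using the entropy rate $\epsilon^{-1/\alpha^\dagger}$ of the piecewise $\gamma$-smooth class instead, giving $\delta_n^2\asymp n^{-2\alpha^\dagger/(2\alpha^\dagger+1)}$; the lower bound is indifferent to the possibly infinite input dimension since only the effective smoothness enters.

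Case (ii) is the most delicate, as the lower bound splits into two additive contributions arising from two distinct hard subfamilies; since $\max\{a,b\}\asymp a+b$, it suffices to establish each lower bound separately and take the maximum. The first subfamily reproduces case (i) with the basis fixed and the coefficients varied, yielding the $n^{-2\alpha/(2\alpha+d)}$ term. The second fixes the coefficients and varies a single low-frequency basis function $\psi_{j_0}^\circ$ over an $\varepsilon_{n,1}$-packing of $\UU(B_{p,q}^\tau(\XX))$ of log-cardinality $\asymp \varepsilon_{n,1}^{-d/\tau}$; each perturbation $\Delta\psi_{j_0}$ shifts every task's regression function by $\beta_{j_0}^{(t)}\Delta\psi_{j_0}$, so the total KL contribution across all $n(T+1)$ samples scales as $n(T+1)\sigma_\beta^2\varepsilon_{n,1}^2$, which is precisely the first term on the left of \eqref{eq:QNboundMinimax}. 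Setting this equal to $\log Q_1\asymp \varepsilon_{n,1}^{-d/\tau}$ and matching the induced packing in $\FF^\circ$ yields $\delta_n^2\gtrsim (nT)^{-2\tau/(2\tau+d)}$.

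The main obstacle is carrying out this basis-perturbation construction in case (ii) cleanly: the perturbed basis $\{\psi_j^\circ\}\cup\{\psi_{j_0}^\circ+\Delta\psi_{j_0}\}\setminus\{\psi_{j_0}^\circ\}$ must still satisfy Assumptions \ref{ass:basis}-\ref{ass:decay} (in particular the relaxed orthonormality, bounded spectrum of $\Sigma_{\Psi,N}$, and decay conditions) and the resulting task functions must remain inside $\FF^\circ$, which constrains admissible choices of $\Delta\psi_{j_0}$. This is essentially a compatibility check and can be arranged by working only with low-frequency perturbations whose $L^\infty$-norm is small relative to the coefficient decay; once handled, the remainder is routine metric-entropy bookkeeping combined with the two applications of Proposition \ref{prop:ICLMinimax}.
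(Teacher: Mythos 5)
Your overall framework is right -- instantiating Proposition~\ref{prop:ICLMinimax} per case via a suitable hard packing of $\FF^\circ$ and balancing $\log Q_1+\log Q_2$ against $\log M$ -- and your entropy ingredients (Besov ball entropy $\asymp\epsilon^{-d/\alpha}$, piecewise $\gamma$-smooth entropy $\asymp\epsilon^{-1/\alpha^\dagger}$, the Gaussian KL formula) agree with what the paper uses. But your concrete constructions differ from the paper's in every case, and case (ii) has a genuine unfilled gap.

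For (i) and (iii), you collapse $\FF_N$ to a singleton so that $Q_1$ is trivial, putting the entire burden on $Q_2$ and $\log M$ and reducing the problem to the classical one-task lower bound; the balance $n\varepsilon_{n,2}^2\asymp\log M\asymp\delta_n^{-d/\alpha}$ (resp.\ $\delta_n^{-1/\alpha^\dagger}$) then gives the rate directly, with no dependence on $T$. The paper does something different in Appendices \ref{e2} and \ref{e4}: it keeps the basis map as a packing variable, constructing $\{\psi^{(j)}\}$ via a Varshamov--Gilbert selection among B-spline \emph{pairs} $(\omega_{K,2i-1}^d,\omega_{K,2i}^d)$ at a fixed resolution with a fixed coefficient vector $\beta^{(1)}=(\sigma_\beta,\dots,\sigma_\beta,0,\dots)$, and tracks both coverings, so the balance carries the extra term $nT\sigma_\beta^2\varepsilon_{n,1}^2$ and requires a careful choice of $\varepsilon_{n,1}\lesssim N^{-1-s}$. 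Both routes land on the same rate; yours is arguably cleaner and more transparently $T$-independent, so this is a valid alternative argument rather than a flaw.

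For (ii), the paper does \emph{not} run two separate applications of Proposition~\ref{prop:ICLMinimax} and take a maximum. Appendix \ref{e3} sets up one joint construction on a product space $\XX=(\RR^d)^{N+1}$ with each $\psi_i^{(j)}\in\UU(B_{p,q}^\tau(\RR^d))$ acting on its own coordinate block, $\beta_1\in[-1,1]$ and $\beta_{j\geq 2}\in[-\sigma_\beta,\sigma_\beta]$, and reads both terms off a single balance -- $\varepsilon_{n,1}^2\asymp(nT)^{-2\tau/(2\tau+d)}$ from the basis entropy across all factors and $\varepsilon_{n,2}^2\asymp n^{-2\alpha/(2\alpha+d)}$ from the coefficient entropy. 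The reason for this product structure is precisely the compatibility issue you flagged: since the $\psi_i$ live on independent coordinate blocks, orthogonality of the basis and the spectrum bound on $\Sigma_{\Psi,N}$ in Assumption~\ref{ass:basis} hold automatically for \emph{every} choice of perturbation, and the resulting task functions trivially stay in $\FF^\circ$. Your single-coordinate perturbation inside $[0,1]^d$ does not inherit this, and the admissibility verification you defer -- that the perturbed basis still satisfies Assumption~\ref{ass:basis}, that $\Sigma_{\Psi,N}$ stays bounded above and below, and that the perturbed regression functions remain in $\FF^\circ$ -- is the actual content of the argument rather than routine bookkeeping. Without it, the $(nT)^{-2\tau/(2\tau+d)}$ term is not established, so your proposal for (ii) is incomplete.
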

These results match the upper bounds for (i), (iii) and show that \textbf{ICL is provably jointly optimal} in $n,T$ in the `large $T$' regime. Moreover, we can check for the coarser basis setting that insufficient pretraining $T=O(1)$ indeed leads to the worse complexity $n^{-\frac{2\tau}{2\tau+d}}$, while the faster rate $n^{-\frac{2\alpha}{2\alpha+d}}$ is retrieved when $T\gtrsim n^{\frac{(\alpha-\tau)d}{(2\alpha+d)\tau}}$. This aligns with the discussion in Section \ref{sec:banana}, showing that ICL is \textbf{provably suboptimal} in the `small $T$' regime.

\begin{rmk}
The obtained upper and lower bounds in the coarser basis setting are not tight as $T$ varies, hence it remains to be shown whether there exists a meta-learning algorithm that attains the lower bound (ii). The task diversity threshold for optimal learning suggested by the bounds are also different ($n^{1+\frac{d(\alpha+d)}{(2\alpha+d)}}$ v.s. $n^{\frac{(\alpha-\tau)d}{(2\alpha+d)\tau}}$); it would be interesting for future work to resolve this gap.
\end{rmk}

\section{Conclusion}

In this paper, we performed a learning-theoretic analysis of ICL of a transformer consisting of a DNN and a linear attention layer pretrained on nonparametric regression tasks. We developed a general framework for bounding the in-context estimation error of the empirical risk minimizer in terms of both the number of tasks and samples, and proved that ICL can achieve nearly minimax optimal rates in the Besov space, anisotropic Besov space and $\gamma$-smooth class. We also demonstrated that ICL can improve upon the a priori optimal rate by learning informative representations during pretraining. We supplemented our analyses with corresponding minimax lower bounds jointly in $n,T$ and also performed numerical experiments validating our findings. Our work opens up interesting approaches of adapting classical learning theory to study emergent phenomena of foundation models.

\paragraph{Limitations.} Our transformer model is limited to a single layer of linear self-attention and does not consider more complex in-context learning behavior which may arise in transformers with multiple attention layers. Moreover, the obtained upper and lower bounds are not tight in certain regimes, suggesting future research directions for meta-learning.

\section*{Acknowledgments}

JK was partially supported by JST CREST (JPMJCR2015). TS was partially supported by JSPS KAKENHI (20H00576) and JST CREST (JPMJCR2115). We would like to express our gratitude to Masaaki Imaizumi for valuable and insightful discussions on the topic in relation to his concurrent work in progress \citep{imaizumi2024}.

\bibliographystyle{abbrvnat}
\bibliography{neurips_2024.bib}

\newpage
\renewcommand{\contentsname}{Table of Contents}
\renewcommand{\baselinestretch}{0.8}\normalsize
{
\hypersetup{linkcolor=black}
\tableofcontents
}
\renewcommand{\baselinestretch}{1.0}\normalsize

\newpage
\appendix
\section*{\Large Appendix}

\section{Proof of Proposition \ref{thm:approx}}\label{app:approx}

\subsection{Decomposing Approximation Error}

Recall that
\begin{equation*}
\Theta^*=(\Gamma^*,\phi^*)=\left( \left(\Sigma_{\Psi,N} + \frac{1}{n} \Sigma_{\bar{\beta},N}^{-1}\right)^{-1}, \phi_{\ubar{N}:\bar{N}}^*\right).
\end{equation*}
We introduce some additional notation in the following fashion. For brevity, we write $\bar{N}:\infty$ instead of $(\bar{N}+1):\infty$ as an exception.
\begin{align*}
    &\Phi^*=(\phi_{\ubar{N}:\bar{N}}^*(x_1),\cdots,\phi_{\ubar{N}:\bar{N}}^*(x_n))\in \RR^{N\times n},\quad \xi =(\xi_1,\dots,\xi_n)^\top \in\RR^n,\\
    &\Psi^{\circ} =  (\psi^\circ(x_1), \dots, \psi^\circ(x_n)) \in \RR^{\infty\times n}, \quad \Psi^{\circ}_{\ubar{N}:\bar{N}} = (\psi^\circ_{\ubar{N}:\bar{N}}(x_1),\cdots,\psi^\circ_{\ubar{N}:\bar{N}}(x_n))\in \RR^{N\times n},\\
    &\Psi^{\circ}_{\ubar{N}:\infty} = (\psi^\circ_{\ubar{N}:\infty}(x_1),\cdots,\psi^\circ_{\ubar{N}:\infty}(x_n))\in \RR^{\infty\times n}.
\end{align*}
Since clipping $\check{f}_\Theta$ does not make its difference with $F_\beta^\circ\in [-B,B]$ larger, it holds that
\begin{align*}
    R(\Theta^*)
& = \mathbb{E}\left[ \left( F^\circ_\beta(\tilde{x}) - f_{\Theta^*}(\bX,\by,\tilde{x}) \right)^2 \right]  \\
& \leq \mathbb{E}\left[ \left( F^\circ_\beta(\tilde{x}) - \check{f}_{\Theta^*}(\bX,\by,\tilde{x}) \right)^2 \right]  \\ 
& \leq 2 \mathbb{E}\left[ \left( F^\circ_\beta(\tilde{x}) - F^\circ_{\beta,\bar{N}}(\tilde{x})\right)^2\right]   + 
2 \mathbb{E}\left[ \left(F^\circ_{\beta,\bar{N}}(\tilde{x}) - \check{f}_{\Theta^*}(\bX,\by,\tilde{x}) \right)^2 \right]  \\ 
& \lesssim N^{-2s}  + 
\mathbb{E}\left[ \left(F^\circ_{\beta,\bar{N}}(\tilde{x}) - \phi^*(\tilde{x})^\top \frac{\Gamma^* \Phi^* \by}{n}\right)^2 \right]
\end{align*}
due to Assumption \ref{ass:decay} and $\bar{N}\asymp N$. Expanding the attention output as
\begin{align*}
\phi^*(\tilde{x})^\top  \frac{\Gamma^* \Phi^* \by}{n}  
&=
(\phi^*(\tilde{x}) - \psi_{\ubar{N}:\bar{N}}^\circ(\tilde{x}) + \psi_{\ubar{N}:\bar{N}}^\circ(\tilde{x}))^\top \Gamma^* \frac{
(\Phi^* - \Psi^{\circ}_{\ubar{N}:\bar{N}} + \Psi^{\circ}_{\ubar{N}:\bar{N}})\by}{n}  \\
&=
(\phi^*(\tilde{x}) - \psi_{\ubar{N}:\bar{N}}^\circ(\tilde{x}))^\top \Gamma^*  \frac{(\Phi^* - \Psi^{\circ}_{\ubar{N}:\bar{N}})\by}{n}  \\ 
&\qquad + 
(\phi^*(\tilde{x}) - \psi_{\ubar{N}:\bar{N}}^\circ(\tilde{x}))^\top \Gamma^*  \frac{\Psi^{\circ}_{\ubar{N}:\bar{N}}\by}{n}  \\ 
&\qquad + 
\psi_{\ubar{N}:\bar{N}}^\circ(\tilde{x})^\top \Gamma^*  \frac{(\Phi^* - \Psi^{\circ}_{\ubar{N}:\bar{N}})\by}{n}  \\ 
&\qquad + \psi_{\ubar{N}:\bar{N}}^\circ(\tilde{x})^\top \Gamma^*  \frac{\Psi^{\circ}_{\ubar{N}:\bar{N}}\by}{n},
\end{align*}
and the final term further as
\begin{align*}
&\psi_{\ubar{N}:\bar{N}}^\circ(\tilde{x})^\top \Gamma^*  \frac{\Psi^{\circ}_{\ubar{N}:\bar{N}}\by}{n} = \psi_{\ubar{N}:\bar{N}}^\circ(\tilde{x})^\top \Gamma^*  \frac{\Psi^{\circ}_{\ubar{N}:\bar{N}}(\Psi^{\circ\top}\beta+\xi)}{n}\\
&= \psi_{\ubar{N}:\bar{N}}^\circ(\tilde{x})^\top \Gamma^*  \frac{\Psi^{\circ}_{\ubar{N}:\bar{N}}\Psi^{\circ\top}_{\ubar{N}:\bar{N}}\bar{\beta}_{\ubar{N}:\bar{N}}}{n} + \psi_{1:N}^\circ(\tilde{x})^\top \Gamma^*  \frac{\Psi^{\circ}_{\ubar{N}:\bar{N}}(\Psi^{\circ\top}_{\bar{N}:\infty}\beta_{\bar{N}:\infty}+\xi)}{n},
\end{align*}
we obtain that
\begin{align}
&\mathbb{E}\left[ \left(F^\circ_{\beta,\bar{N}}(\tilde{x}) - \phi^*(\tilde{x})^\top \frac{\Gamma^* \, \Phi^*\by}{n}\right)^2 \right]\nonumber \\
&\lesssim \mathbb{E}\left[ \left(F^\circ_{\beta,\bar{N}}(\tilde{x}) -\psi_{\ubar{N}:\bar{N}}^\circ(\tilde{x})^\top \Gamma^*  \frac{\Psi^{\circ}_{\ubar{N}:\bar{N}}\Psi^{\circ\top}_{\ubar{N}:\bar{N}} \bar{\beta}_{\ubar{N}:\bar{N}}}{n}\right)^2\right]\label{eqn:app1}\\
&\qquad +\mathbb{E}\left[\left(\psi_{\ubar{N}:\bar{N}}^\circ(\tilde{x})^\top \Gamma^*  \frac{\Psi^{\circ}_{\ubar{N}:\bar{N}}(\Psi^{\circ\top}_{\bar{N}:\infty}\beta_{\bar{N}:\infty}+\xi)}{n}\right)^2\right]\label{eqn:app2}\\
&\qquad +\mathbb{E}\left[ \left((\phi^*(\tilde{x}) - \psi_{\ubar{N}:\bar{N}}^\circ(\tilde{x}))^\top \Gamma^*  \frac{(\Phi^* - \Psi^{\circ}_{\ubar{N}:\bar{N}})\by}{n}  \right)^2\right]\label{eqn:app3}\\
&\qquad + \mathbb{E}\left[\left(
(\phi^*(\tilde{x}) - \psi_{\ubar{N}:\bar{N}}^\circ(\tilde{x}))^\top \Gamma^*  \frac{\Psi^{\circ}_{\ubar{N}:\bar{N}}\by}{n} \right)^2\right]\label{eqn:app4}\\
&\qquad +
\mathbb{E}\left[\left(\psi_{\ubar{N}:\bar{N}}^\circ(\tilde{x})^\top \Gamma^*  \frac{(\Phi^* - \Psi^{\circ}_{\ubar{N}:\bar{N}})\by}{n}  \right)^2 \right].\label{eqn:app5}
\end{align}
We proceed to control each term separately, from which the statement of Proposition \ref{thm:approx} will follow.

\subsection{Bounding Term \texorpdfstring{\eqref{eqn:app1}}{(1)}}

Since $F^\circ_{\beta,\bar{N}}(\tilde{x})= \psi_{1:\bar{N}}^\circ(\tilde{x})^\top \beta_{1:\bar{N}} = \psi_{\ubar{N}:\bar{N}}^\circ(\tilde{x})^\top \bar{\beta}_{\ubar{N}:\bar{N}}$, we can introduce a $(\Gamma^*)^{-1}$ factor to bound
\begin{align}
    &\mathbb{E}\left[ \left(F^\circ_{\beta,\bar{N}}(\tilde{x}) -\psi_{\ubar{N}:\bar{N}}^\circ(\tilde{x})^\top \Gamma^*  \frac{\Psi^{\circ}_{\ubar{N}:\bar{N}}\Psi^{\circ\top}_{\ubar{N}:\bar{N}} \bar{\beta}_{\ubar{N}:\bar{N}}}{n}\right)^2\right] \nonumber \\
    &=\mathbb{E}\left[ \left(F^\circ_{\beta,\bar{N}}(\tilde{x}) -\psi_{\ubar{N}:\bar{N}}^\circ(\tilde{x})^\top \Gamma^*  \left(\frac{\Psi^{\circ}_{\ubar{N}:\bar{N}}\Psi^{\circ\top}_{\ubar{N}:\bar{N}}}{n}-(\Gamma^*)^{-1}+(\Gamma^*)^{-1}\right) \bar{\beta}_{\ubar{N}:\bar{N}}\right)^2\right] \nonumber \\
    &=\mathbb{E}\left[ \left(\psi_{\ubar{N}:\bar{N}}^\circ(\tilde{x})^\top \Gamma^*  \left(\frac{\Psi^{\circ}_{\ubar{N}:\bar{N}}\Psi^{\circ\top}_{\ubar{N}:\bar{N}}}{n}-\Sigma_{\Psi,N} - \frac{1}{n} \Sigma_{\bar{\beta},N}^{-1}\right) \bar{\beta}_{\ubar{N}:\bar{N}}\right)^2\right] \nonumber \\
    &\leq 2\mathbb{E}\left[ \left(\psi_{\ubar{N}:\bar{N}}^\circ(\tilde{x})^\top \Gamma^*  \left(\frac{\Psi^{\circ}_{\ubar{N}:\bar{N}}\Psi^{\circ\top}_{\ubar{N}:\bar{N}}}{n}-\Sigma_{\Psi,N}\right)\bar{\beta}_{\ubar{N}:\bar{N}}\right)^2\right]\label{eqn:1-1} \\
    &\qquad +2\mathbb{E}\left[ \bigg(\psi_{\ubar{N}:\bar{N}}^\circ(\tilde{x})^\top \Gamma^* \frac{ \Sigma_{\bar{\beta},N}^{-1}}{n} \bar{\beta}_{\ubar{N}:\bar{N}}\bigg)^2\right].\label{eqn:1-2}
\end{align}
Denote the operator norm (with respect to $L^2$ norm of vectors) by $\norm{\cdot}_{\op}$. For \eqref{eqn:1-1}, noting that $\Sigma_{\Psi,N}$ is positive definite,
\begin{align*}
&\mathbb{E}\left[ \left(\psi_{\ubar{N}:\bar{N}}^\circ(\tilde{x})^\top \Gamma^*  \left(\frac{\Psi^{\circ}_{\ubar{N}:\bar{N}}\Psi^{\circ\top}_{\ubar{N}:\bar{N}}}{n}-\Sigma_{\Psi,N}\right) \bar{\beta}_{\ubar{N}:\bar{N}}\right)^2\right] \\
&=\mathbb{E}\left[ \left(\psi_{\ubar{N}:\bar{N}}^\circ(\tilde{x})^\top \Gamma^*  \Sigma_{\Psi,N}^{1/2}\left(\Sigma_{\Psi,N}^{-1/2}\frac{\Psi^{\circ}_{\ubar{N}:\bar{N}}\Psi^{\circ\top}_{\ubar{N}:\bar{N}}}{n}\Sigma_{\Psi,N}^{-1/2}-\bI_N\right)\Sigma_{\Psi,N}^{1/2} \bar{\beta}_{\ubar{N}:\bar{N}}\right)^2\right]\\
&=\mathbb{E}_{\bX,\beta}\Bigg[  \bar{\beta}_{\ubar{N}:\bar{N}}^\top \Sigma_{\Psi,N}^{1/2}\left(\Sigma_{\Psi,N}^{-1/2}\frac{\Psi^{\circ}_{\ubar{N}:\bar{N}}\Psi^{\circ\top}_{\ubar{N}:\bar{N}}}{n}\Sigma_{\Psi,N}^{-1/2}-\bI_N\right)\Sigma_{\Psi,N}^{1/2} \Gamma^*\EEbig{\tilde{x}}{\psi_{\ubar{N}:\bar{N}}^\circ(\tilde{x})\psi_{\ubar{N}:\bar{N}}^\circ(\tilde{x})^\top}\\
& \qquad\times \Gamma^*  \Sigma_{\Psi,N}^{1/2}\left(\Sigma_{\Psi,N}^{-1/2}\frac{\Psi^{\circ}_{\ubar{N}:\bar{N}}\Psi^{\circ\top}_{\ubar{N}:\bar{N}}}{n}\Sigma_{\Psi,N}^{-1/2}-\bI_N\right)\Sigma_{\Psi,N}^{1/2} \bar{\beta}_{\ubar{N}:\bar{N}}\Bigg]\\
&=\mathbb{E}_{\bX}\Bigg[\Tr\Bigg[ \left(\Sigma_{\Psi,N}^{-1/2}\frac{\Psi^{\circ}_{\ubar{N}:\bar{N}}\Psi^{\circ\top}_{\ubar{N}:\bar{N}}}{n}\Sigma_{\Psi,N}^{-1/2}-\bI_N\right)\Sigma_{\Psi,N}^{1/2} \EEbig{\beta}{\bar{\beta}_{\ubar{N}:\bar{N}} 
 \bar{\beta}_{\ubar{N}:\bar{N}}^\top} \Sigma_{\Psi,N}^{1/2} \\
& \qquad\times \left(\Sigma_{\Psi,N}^{-1/2}\frac{\Psi^{\circ}_{\ubar{N}:\bar{N}}\Psi^{\circ\top}_{\ubar{N}:\bar{N}}}{n}\Sigma_{\Psi,N}^{-1/2}-\bI_N\right)\Sigma_{\Psi,N}^{1/2} \Gamma^* \Sigma_{\Psi,N}\Gamma^* \Sigma_{\Psi,N}^{1/2}\Bigg]\Bigg]\\
&\lesssim\mathbb{E}_{\bX}\left[\Tr\Bigg[ \left(\Sigma_{\Psi,N}^{-1/2}\frac{\Psi^{\circ}_{\ubar{N}:\bar{N}}\Psi^{\circ\top}_{\ubar{N}:\bar{N}}}{n}\Sigma_{\Psi,N}^{-1/2}-\bI_N\right)^2\Sigma_{\Psi,N}^{1/2}\Sigma_{\bar{\beta},N} \Sigma_{\Psi,N}^{1/2} \Bigg]\right]
\end{align*}
due to the independence of $\bX,\tilde{x}$ and $\beta$. For the last inequality, we have used the fact that both the $\Sigma_{\Psi,N}^{1/2} \Gamma^* \Sigma_{\Psi,N}\Gamma^* \Sigma_{\Psi,N}^{1/2}$ term and the matrix multiplied to it are positive semi-definite, and the former is bounded above as $\precsim \bI_N$ by Assumption \ref{ass:basis}.

Furthermore, we utilize the following result proved in Appendix \ref{app:opbound}:
\begin{lemma}\label{thm:opbound}
$\displaystyle \mathbb{E}_{\bX}\left[ \left\| \Sigma_{\Psi,N}^{-1/2}\frac{\Psi^{\circ}_{\ubar{N}:\bar{N}}\Psi^{\circ\top}_{\ubar{N}:\bar{N}}}{n}\Sigma_{\Psi,N}^{-1/2}-\bI_N\right\|_{\op}^2 \right] \lesssim \frac{N^{2r}}{n}\log N + \frac{N^{4r}}{n^2}\log^2 N$.
\end{lemma}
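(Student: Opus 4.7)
The plan is to recognize the quantity as a standard matrix-concentration object. Setting $Y_k := \Sigma_{\Psi,N}^{-1/2} \psi^\circ_{\ubar{N}:\bar{N}}(x_k) \in \RR^N$, the $Y_k$ are i.i.d.\ with $\mathbb{E}[Y_k Y_k^\top] = \bI_N$, and the matrix inside the norm equals $S/n := \frac{1}{n}\sum_{k=1}^n Z_k$ for the centered summands $Z_k := Y_k Y_k^\top - \bI_N$. The task therefore reduces to a second-moment operator-norm bound on a sum of i.i.d.\ centered random matrices.

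The first step is to read off the Bernstein parameters from Assumption \ref{ass:basis}. From $\Sigma_{\Psi,N}^{-1} \preceq C_1^{-1} \bI_N$ and the sparsity bound $\sum_{j=\ubar{N}}^{\bar{N}} \psi^\circ_j(x)^2 \leq C_\infty^2 N^{2r}$ in \eqref{eqn:sparse}, we obtain
\begin{equation*}
\norm{Y_k}_2^2 \leq C_1^{-1} \norm{\psi^\circ_{\ubar{N}:\bar{N}}(x_k)}_2^2 \leq C_1^{-1} C_\infty^2 N^{2r} =: R
\end{equation*}
almost surely, so $\norm{Z_k}_{\op} \leq R+1$. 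Using the identity $(Y_k Y_k^\top)^2 = \norm{Y_k}_2^2 \, Y_k Y_k^\top \preceq R \, Y_k Y_k^\top$, we also get the variance proxy
\begin{equation*}
\mathbb{E}[Z_k^2] \preceq \mathbb{E}[(Y_k Y_k^\top)^2] \preceq R \, \bI_N, \qquad \bigg\|\sum_{k=1}^n \mathbb{E}[Z_k^2]\bigg\|_{\op} \leq n R.
\end{equation*}

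The second step will be to apply the matrix Bernstein inequality of Tropp, which yields the subexponential tail
\begin{equation*}
\mathbb{P}\bigg(\Big\|\textstyle\sum_{k=1}^n Z_k\Big\|_{\op} \geq t\bigg) \leq 2N \exp\bigg(-\frac{t^2/2}{nR + Rt/3}\bigg),
\end{equation*}
and then converting this tail into a second-moment bound via $\mathbb{E}[X^2] = \int_0^\infty 2t\, \mathbb{P}(X \geq t)\, dt$ to obtain
\begin{equation*}
\mathbb{E}\norm{S/n}_{\op}^2 \lesssim \frac{R \log N}{n} + \frac{R^2 \log^2 N}{n^2}.
\end{equation*}
Substituting $R \asymp N^{2r}$ yields the claim. (An alternative route is the matrix Rosenthal/Khintchine moment inequality, which produces the same bound directly and bypasses tail integration altogether.)

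The main (minor) obstacle is the tail-integration step, which must be handled with a little care because the Bernstein tail has two regimes -- a sub-Gaussian regime for small $t$ and a sub-exponential regime for large $t$ -- and these are precisely what generate the $n^{-1}$ and $n^{-2}$ terms appearing in the target bound. Everything else is a mechanical verification of the Bernstein parameters from Assumption \ref{ass:basis}.
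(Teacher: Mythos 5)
Your proposal is correct and follows essentially the same route as the paper: Appendix \ref{app:opbound} applies Tropp's matrix Bernstein inequality to the centered rank-one summands $\bS_i = \Sigma_{\Psi,N}^{-1/2}\psi^\circ_{\ubar{N}:\bar{N}}(x_i)\psi^{\circ\top}_{\ubar{N}:\bar{N}}(x_i)\Sigma_{\Psi,N}^{-1/2}-\bI_N$, verifies exactly the same Bernstein parameters $L\lesssim N^{2r}$ and $v(\bZ)\lesssim nN^{2r}$ from Assumption \ref{ass:basis}, and then converts the tail to a second moment by the truncated-integral computation recorded as Corollary \ref{thm:berncor}. Your use of $(Y_kY_k^\top)^2=\lVert Y_k\rVert^2\,Y_kY_k^\top\preceq R\,Y_kY_k^\top$ to obtain the variance proxy is just a tidier phrasing of the paper's factorization of the fourth-moment term.
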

It follows that \eqref{eqn:1-1} is bounded as
\begin{align*}
&\mathbb{E}\left[ \left(\psi_{\ubar{N}:\bar{N}}^\circ(\tilde{x})^\top \Gamma^*  \left(\frac{\Psi^{\circ}_{\ubar{N}:\bar{N}}\Psi^{\circ\top}_{\ubar{N}:\bar{N}}}{n}-\Sigma_{\Psi,N}\right)\bar{\beta}_{\ubar{N}:\bar{N}}\right)^2\right] \\
&\lesssim \mathbb{E}_{\bX}\left[ \left\| \Sigma_{\Psi,N}^{-1/2}\frac{\Psi^{\circ}_{\ubar{N}:\bar{N}}\Psi^{\circ\top}_{\ubar{N}:\bar{N}}}{n}\Sigma_{\Psi,N}^{-1/2}-\bI_N\right\|_{\op}^2 \right] \Tr\left[\Sigma_{\Psi,N}^{1/2}\Sigma_{\bar{\beta},N} \Sigma_{\Psi,N}^{1/2}\right]\\
&\lesssim\left(\frac{N^{2r}}{n}\log{N}+\frac{N^{4r}}{n^2}\log^2 N\right) \norm{\Sigma_{\Psi,N}}_{\op}
\Tr(\Sigma_{\bar{\beta},N})\\
&\lesssim\frac{N^{2r}}{n}\log{N}+\frac{N^{4r}}{n^2}\log^2 N
\end{align*}
since $\Tr(\Sigma_{\bar{\beta},N})$ is bounded by Assumption \ref{ass:decay}.

Moreover for \eqref{eqn:1-2}, we compute
\begin{align*}
&\mathbb{E}\left[ \bigg(\psi_{\ubar{N}:\bar{N}}^\circ(\tilde{x})^\top \Gamma^* \frac{ \Sigma_{\bar{\beta},N}^{-1}}{n} \bar{\beta}_{\ubar{N}:\bar{N}}\bigg)^2\right]\\
&=\mathbb{E}\left[\bar{\beta}_{\ubar{N}:\bar{N}}^\top \frac{ \Sigma_{\bar{\beta},N}^{-1}}{n} \Gamma^* \psi_{\ubar{N}:\bar{N}}^\circ(\tilde{x})\psi_{\ubar{N}:\bar{N}}^\circ(\tilde{x})^\top \Gamma^* \frac{ \Sigma_{\bar{\beta},N}^{-1}}{n} \bar{\beta}_{\ubar{N}:\bar{N}}\right] \\
&=\mathbb{E}\left[\Tr\left[ \frac{ \Sigma_{\bar{\beta},N}^{-1}}{n} \Gamma^* \psi_{\ubar{N}:\bar{N}}^\circ(\tilde{x})\psi_{\ubar{N}:\bar{N}}^\circ(\tilde{x})^\top \Gamma^* \frac{ \Sigma_{\bar{\beta},N}^{-1}}{n} \bar{\beta}_{\ubar{N}:\bar{N}} \bar{\beta}_{\ubar{N}:\bar{N}}^\top\right]\right] \\
&=\Tr\left[ \frac{ \Sigma_{\bar{\beta},N}^{-1}}{n} \Gamma^* \mathbb{E}_{\tilde{x}}\left[\psi_{\ubar{N}:\bar{N}}^\circ(\tilde{x})\psi_{\ubar{N}:\bar{N}}^\circ(\tilde{x})^\top \right] \Gamma^*   \frac{ \Sigma_{\bar{\beta},N}^{-1}}{n}\mathbb{E}_\beta\left[\bar{\beta}_{\ubar{N}:\bar{N}} \bar{\beta}_{\ubar{N}:\bar{N}}^\top\right]\right]\\
&=\Tr\Bigg[ \frac{ \Sigma_{\bar{\beta},N}^{-1}}{n} \underbrace{\Gamma^*\Sigma_{\Psi,N}\Gamma^*}_{\text{positive definite}} \frac{ \Sigma_{\bar{\beta},N}^{-1}}{n} \Sigma_{\bar{\beta},N}\Bigg]\\
&\leq \frac{1}{n} \Tr\Bigg[\left(\Sigma_{\Psi,N} + \frac{1}{n} \Sigma_{\bar{\beta},N}^{-1}\right) \Gamma^*\Sigma_{\Psi,N}\Gamma^*\Bigg]\\
&= \frac{1}{n} \Tr\Bigg[ \Sigma_{\Psi,N}\left(\Sigma_{\Psi,N} + \frac{1}{n} \Sigma_{\bar{\beta},N}^{-1}\right)^{-1}\Bigg] \leq \frac{N}{n}.
\end{align*}

\subsection{Bounding Term \texorpdfstring{\eqref{eqn:app2}}{(2)}}

Since the sequence of covariates $x_1,\cdots,x_n$ and noise $\xi_1,\cdots,\xi_n$ are each i.i.d.,
\begin{align}
&\mathbb{E}\left[\left(\psi_{\ubar{N}:\bar{N}}^\circ(\tilde{x})^\top \Gamma^*  \frac{\Psi^{\circ}_{\ubar{N}:\bar{N}}(\Psi^{\circ\top}_{\bar{N}:\infty}\beta_{\bar{N}:\infty}+\xi)}{n}\right)^2\right]\nonumber \\
&=\mathbb{E}\left[\left(\sum_{i=1}^n\psi_{\ubar{N}:\bar{N}}^\circ(\tilde{x})^\top \Gamma^*  \frac{\psi_{\ubar{N}:\bar{N}}^\circ(x_i) (\beta^\top_{\bar{N}:\infty}\psi^\circ_{\bar{N}:\infty}(x_i)+\xi_i)}{n}\right)^2\right]\nonumber \\
&=\frac{1}{n^2}\mathbb{E}\left[\left(\sum_{i=1}^n \psi_{\ubar{N}:\bar{N}}^\circ(\tilde{x})^\top \Gamma^*  \psi_{\ubar{N}:\bar{N}}^\circ(x_i)\beta^\top_{\bar{N}:\infty}\psi^\circ_{\bar{N}:\infty}(x_i) +\sum_{i=1}^n \psi_{\ubar{N}:\bar{N}}^\circ(\tilde{x})^\top \Gamma^*  \psi_{\ubar{N}:\bar{N}}^\circ(x_i)\xi_i\right)^2\right]\nonumber \\
&\leq \frac{1}{n}\mathbb{E}\left[ \left(\psi_{\ubar{N}:\bar{N}}^\circ(\tilde{x})^\top \Gamma^*  \psi_{\ubar{N}:\bar{N}}^\circ(x)\beta^\top_{\bar{N}:\infty}\psi^\circ_{\bar{N}:\infty}(x)\right)^2\right]\label{eqn:2-1} \\
&\qquad +\frac{n-1}{n}\mathbb{E}\left[ \psi_{\ubar{N}:\bar{N}}^\circ(\tilde{x})^\top \Gamma^*  \psi_{\ubar{N}:\bar{N}}^\circ(x)\beta^\top_{\bar{N}:\infty}\psi^\circ_{\bar{N}:\infty}(x)\psi_{\ubar{N}:\bar{N}}^\circ(\tilde{x})^\top \Gamma^*  \psi_{\ubar{N}:\bar{N}}^\circ(x')\beta^\top_{\bar{N}:\infty}\psi^\circ_{\bar{N}:\infty}(x')\right]\label{eqn:2-2} \\
&\qquad +\frac{\sigma^2}{n}\mathbb{E}\left[ \left(\psi_{\ubar{N}:\bar{N}}^\circ(\tilde{x})^\top \Gamma^*  \psi_{\ubar{N}:\bar{N}}^\circ(x)\right)^2\right] \label{eqn:2-3}
\end{align}
for independent samples $x,x',\tilde{x}\sim\PP_{\XX}$. We now bound the three terms separately below.

For \eqref{eqn:2-1}, we have that
\begin{align*}
&\frac{1}{n}\Ebig{\left(\psi_{\ubar{N}:\bar{N}}^\circ(\tilde{x})^\top \Gamma^*  \psi_{\ubar{N}:\bar{N}}^\circ(x)\beta^\top_{\bar{N}:\infty}\psi^\circ_{\bar{N}:\infty}(x)\right)^2}\\
&=\frac{1}{n}\Ebig{\psi_{\ubar{N}:\bar{N}}^\circ(x)^\top \Gamma^*\psi_{\ubar{N}:\bar{N}}^\circ(\tilde{x})\psi_{\ubar{N}:\bar{N}}^\circ(\tilde{x})^\top \Gamma^*  \psi_{\ubar{N}:\bar{N}}^\circ(x)\psi^\circ_{\bar{N}:\infty}(x)^\top\beta_{\bar{N}:\infty}\beta^\top_{\bar{N}:\infty}\psi^\circ_{\bar{N}:\infty}(x)}\\
&=\frac{1}{n}\EEbig{x}{\psi_{\ubar{N}:\bar{N}}^\circ(x)^\top \Gamma^* \Sigma_{\Psi,N}\Gamma^*  \psi_{\ubar{N}:\bar{N}}^\circ(x)\psi^\circ_{\bar{N}:\infty}(x)^\top \mathbb{E}_{\beta}\left[\beta_{\bar{N}:\infty}\beta^\top_{\bar{N}:\infty}\right] \psi^\circ_{\bar{N}:\infty}(x)}\\
&\lesssim \frac{1}{n}\EEbig{x}{ \|\psi_{\ubar{N}:\bar{N}}^\circ(x) \|^2 \psi^\circ_{\bar{N}:\infty}(x)^\top \mathbb{E}_{\beta}\left[\beta_{\bar{N}:\infty}\beta^\top_{\bar{N}:\infty}\right]\psi^\circ_{\bar{N}:\infty}(x)}\\
&\lesssim \frac{1}{n} \sup_{x\in\supp\PP_{\XX}}\norm{\psi_{\ubar{N}:\bar{N}}^\circ(x)}^2 \cdot \lim_{M\to\infty}\Tr\left(\EEbig{\beta}{\beta_{\bar{N}:M}\beta^\top_{\bar{N}:M}} \EEbig{x}{\psi^\circ_{\bar{N}:M}(x) \psi^\circ_{\bar{N}:M}(x)^\top}\right)\\
&\lesssim \frac{1}{n} \sup_{x\in\supp\PP_{\XX}}\norm{\psi_{\ubar{N}:\bar{N}}^\circ(x)}^2 \cdot \lim_{M\to\infty} \Tr\left(\EEbig{\beta}{\beta_{\bar{N}:M}\beta^\top_{\bar{N}:M}}\right)
\end{align*}
since $\Sigma_{\Psi,M}\preceq C_2\bI_N$ as $M\to\infty$ by Assumption \ref{ass:decay}. As $\sup_{x}\norm{\psi_{\ubar{N}:\bar{N}}^\circ(x)}^2 \lesssim N^{-2r}$ by \eqref{eqn:sparse} and the diagonal of $\EE{\beta}{\beta_{\bar{N}:M}\beta^\top_{\bar{N}:M}}$ decays faster than $\bar{N}^{-2s}M^{-1}(\log M)^{-2}$ when $M> \bar{N}$ due to \eqref{eqn:decay}, it follows that
\begin{equation*}
    \frac{1}{n} \sup_{x\in\supp\PP_{\XX}}\norm{\psi_{\ubar{N}:\bar{N}}^\circ(x)}^2 \cdot \lim_{M\to\infty} \Tr\left(\EEbig{\beta}{\beta_{\bar{N}:M}\beta^\top_{\bar{N}:M}}\right) \lesssim \frac{1}{n} N^{2r} N^{-2s}.
\end{equation*}
Similarly, for \eqref{eqn:2-2}, we have that
\begin{align*}
&\frac{n-1}{n}\mathbb{E}\left[ \psi_{\ubar{N}:\bar{N}}^\circ(\tilde{x})^\top \Gamma^*  \psi_{\ubar{N}:\bar{N}}^\circ(x)\beta^\top_{\bar{N}:\infty}\psi^\circ_{\bar{N}:\infty}(x)\psi_{\ubar{N}:\bar{N}}^\circ(\tilde{x})^\top \Gamma^*  \psi_{\ubar{N}:\bar{N}}^\circ(x')\beta^\top_{\bar{N}:\infty}\psi^\circ_{\bar{N}:\infty}(x')\right]\\
&\leq \mathbb{E}\left[ \left(\psi_{\ubar{N}:\bar{N}}^\circ(\tilde{x})^\top \Gamma^*  \psi_{\ubar{N}:\bar{N}}^\circ(x)\beta^\top_{\bar{N}:\infty}\psi^\circ_{\bar{N}:\infty}(x)\right)^\top\left(\psi_{\ubar{N}:\bar{N}}^\circ(\tilde{x})^\top \Gamma^*  \psi_{\ubar{N}:\bar{N}}^\circ(x')\beta^\top_{\bar{N}:\infty}\psi^\circ_{\bar{N}:\infty}(x')\right)\right]\\
&= \mathbb{E}_{x,x',\beta}\left[  \psi^\circ_{\bar{N}:\infty}(x) ^\top \beta_{\bar{N}:\infty}\psi_{\ubar{N}:\bar{N}}^\circ(x)^\top \Gamma^* \Sigma_{\Psi,N} \Gamma^*  \psi_{\ubar{N}:\bar{N}}^\circ(x')\beta^\top_{\bar{N}:\infty}\psi^\circ_{\bar{N}:\infty}(x')\right]\\
&= \mathbb{E}_{\beta}\left[  \mathbb{E}_{x}\left[ \psi_{\ubar{N}:\bar{N}}^\circ(x)\beta^\top_{\bar{N}:\infty}\psi^\circ_{\bar{N}:\infty}(x)\right]^\top \Gamma^* \Sigma_{\Psi,N} \Gamma^*  \mathbb{E}_{x}\left[ \psi_{\ubar{N}:\bar{N}}^\circ(x)\beta^\top_{\bar{N}:\infty}\psi^\circ_{\bar{N}:\infty}(x)\right]\right]\\
&\lesssim \mathbb{E}_{\beta}\left[\left\| \mathbb{E}_{x}\left[ \psi_{\ubar{N}:\bar{N}}^\circ(x)\beta^\top_{\bar{N}:\infty}\psi^\circ_{\bar{N}:\infty}(x)\right]\right\|^2\right]\\
&\lesssim \mathbb{E}_{\beta}\left[\psi^\circ_{\bar{N}:\infty}(x)^\top \beta_{\bar{N}:\infty}\beta^\top_{\bar{N}:\infty}\psi^\circ_{\bar{N}:\infty}(x) \right]\\
&\lesssim N^{-2s}.
\end{align*}
And for \eqref{eqn:2-3}, we have that
\begin{align*}
&\frac{\sigma^2}{n}\mathbb{E}\left[ \left(\psi_{\ubar{N}:\bar{N}}^\circ(\tilde{x})^\top \Gamma^*  \psi_{\ubar{N}:\bar{N}}^\circ(x)\right)^2\right]\\
&= \frac{\sigma^2}{n}\Tr\left[  \Gamma^*\EEbig{\tilde{x}}{\psi_{\ubar{N}:\bar{N}}^\circ(\tilde{x})\psi_{\ubar{N}:\bar{N}}^\circ(\tilde{x})^\top}\Gamma^* \EEbig{x}{\psi_{\ubar{N}:\bar{N}}^\circ(x)\psi_{\ubar{N}:\bar{N}}^\circ(x)^\top}\right]\\
&=\frac{\sigma^2}{n}\Tr\big[\underbrace{\Gamma^*\Sigma_{\Psi,N}\Gamma^*}_{\text{positive definite}} \Sigma_{\Psi,N}\big] \\
&\leq\frac{\sigma^2}{n}\Tr\left[\Gamma^*\Sigma_{\Psi,N} \Gamma^*  \left(\Sigma_{\Psi,N}+\frac{1}{n} \Sigma_{\bar{\beta},N}^{-1}\right) \right]\\
&=\frac{\sigma^2}{n}\Tr\left[\Gamma^*\Sigma_{\Psi,N}\right]\leq \frac{\sigma^2N}{n}.
\end{align*}
Therefore, we obtain the following bound:
\begin{equation*}
\mathbb{E}\left[\left(\psi_{\ubar{N}:\bar{N}}^\circ(\tilde{x})^\top \Gamma^*  \frac{\Psi^{\circ}_{\ubar{N}:\bar{N}}(\Psi^{\circ\top}_{\bar{N}:\infty}\beta_{\bar{N}:\infty}+\xi)}{n}\right)^2\right] \lesssim \frac{N^{2r} N^{-2s}}{n} +N^{-2s}+ \frac{\sigma^2N}{n}.
\end{equation*}

\subsection{Bounding Terms \texorpdfstring{\eqref{eqn:app3}}{(3)}-\texorpdfstring{\eqref{eqn:app5}}{(5)}}

For \eqref{eqn:app3}, we use the Cauchy-Schwarz inequality and Assumption \ref{ass:close} to bound
\begin{align*}
&\mathbb{E}\left[ \left((\phi^*(\tilde{x}) - \psi_{\ubar{N}:\bar{N}}^\circ(\tilde{x}))^\top \Gamma^*  \frac{(\Phi^* - \Psi^{\circ}_{\ubar{N}:\bar{N}})\by}{n}  \right)^2\right]\\
&\leq \frac{1}{n}\sum_{i=1}^n\mathbb{E}\left[ \left((\phi^*(\tilde{x}) - \psi_{\ubar{N}:\bar{N}}^\circ(\tilde{x}))^\top \Gamma^*  (\phi^*(x_i) - \psi^{\circ}_{\ubar{N}:\bar{N}}(x_i))y_i  \right)^2\right]\\
&\leq\frac{1}{n}\sum_{i=1}^n\mathbb{E}\left[ \|\phi^*(\tilde{x}) - \psi_{\ubar{N}:\bar{N}}^\circ(\tilde{x})\|^2 \|\Gamma^*  (\phi^*(x_i) - \psi^{\circ}_{\ubar{N}:\bar{N}}(x_i))\|^2y_i^2\right]\\
&\leq \|\Gamma^*\|^2_{\op} \bigg(\sup_{x\in \supp\PP_{\XX}}\|\phi^*(x) - \psi^{\circ}_{\ubar{N}:\bar{N}}(x)\|^2\bigg)^2 \mathbb{E}\left[y^2\right]\\
&\leq \|\Gamma^*\|^2_{\op} \bigg(N\max_{\ubar{N}\leq j\leq \bar{N}}\norm{\phi^*_j - \psi_j^{\circ}}_{L^\infty(\PP_{\XX})}^2\bigg)^2 \mathbb{E}\left[y^2\right]\\
&\lesssim N^2\delta_N^4(B^2+\sigma^2).
\end{align*}
Similarly for \eqref{eqn:app4} and \eqref{eqn:app5}, we have
\begin{align*}
&\mathbb{E}\left[ \left((\phi^*(\tilde{x}) - \psi_{\ubar{N}:\bar{N}}^\circ(\tilde{x}))^\top \Gamma^*  \frac{\Psi^{\circ}_{\ubar{N}:\bar{N}}\by}{n}  \right)^2\right]\\
&\leq \frac{1}{n}\sum_{i=1}^n \mathbb{E}\left[ \left((\phi^*(\tilde{x}) - \psi_{\ubar{N}:\bar{N}}^\circ(\tilde{x}))^\top \Gamma^*  \psi^{\circ}_{\ubar{N}:\bar{N}}(x_i)y_i  \right)^2\right]\\
&\leq \frac{1}{n}\sum_{i=1}^n \mathbb{E}\left[ \|\phi^*(\tilde{x}) - \psi_{\ubar{N}:\bar{N}}^\circ(\tilde{x})\|^2 \|\Gamma^*  \psi^{\circ}_{\ubar{N}:\bar{N}}(x_i)\|^2y_i^2\right]\\
&\leq \sup_{x\in \supp\PP_{\XX}}\|\phi^*({x}) - \psi_{\ubar{N}:\bar{N}}^\circ({x})\|^2 \|\Gamma^*\|^2_{\op}  \sup_{x\in \supp\PP_{\XX}}\|\psi^{\circ}_{\ubar{N}:\bar{N}}(x)\|^2 \mathbb{E}\left[y^2\right]\\
&\lesssim N\delta_N^2 \cdot N^{2r} (B^2+\sigma^2) = N^{2r+1}\delta_N^2 (B^2+\sigma^2)
\end{align*}
and
\begin{align*}
&\mathbb{E}\left[\left(\psi_{\ubar{N}:\bar{N}}^\circ(\tilde{x})^\top \Gamma^*  \frac{(\Phi^* - \Psi^{\circ}_{\ubar{N}:\bar{N}})\by}{n}  \right)^2 \right]\\
&\leq \frac{1}{n}\sum_{i=1}^n \mathbb{E}\left[ \left(\psi_{\ubar{N}:\bar{N}}^\circ(\tilde{x})^\top \Gamma^*  (\phi^*(x_i) - \psi^{\circ}_{\ubar{N}:\bar{N}}(x_i))y_i  \right)^2\right]\\
&\leq \frac{1}{n}\sum_{i=1}^n \mathbb{E}\left[ \|\psi_{\ubar{N}:\bar{N}}^\circ(\tilde{x})\|^2 \|\Gamma^*  (\phi^*(x_i) - \psi^{\circ}_{\ubar{N}:\bar{N}}(x_i))\|^2y_i^2\right]\\
&\leq \sup_{x\in \supp\PP_{\XX}}\|\psi_{\ubar{N}:\bar{N}}^\circ({x})\|^2 \|\Gamma^*\|_{\op}^2 \sup_{x\in \supp \PP_{\XX}} \|(\phi^*(x) - \psi^{\circ}_{\ubar{N}:\bar{N}}(x))\|^2\mathbb{E}\left[y^2\right]\\
&\lesssim N^{2r+1}\delta_N^2 (B^2+\sigma^2).
\end{align*}
This concludes the proof of Proposition \ref{thm:approx}.

\subsection{Proof of Lemma \ref{thm:opbound}}\label{app:opbound}

We will make use of the following concentration bound and its corollary, proved at the end of this subsection.

\begin{thm}[matrix Bernstein inequality]\label{thm:bern}
Let $\bS_1,\cdots,\bS_n\in\RR^{N\times N}$ be independent random matrices such that $\E{\bS_i}=0$ and $\norm{\bS_i}_{\op}\leq L$ almost surely for all $i$. Define $\bZ=\sum_{i=1}^n \bS_n$ and the \emph{matrix variance statistic} $v(\bZ)$ as
\begin{equation*}
\textstyle v(\bZ) = \max\left\{\norm{\E{\bZ\bZ^\top}}_{\op}, \norm{\E{\bZ^\top \bZ}}_{\op}\right\} = \max\left\{\Norm{\sum_{i=1}^n\E{\bS_i\bS_i^\top}}_{\op}, \Norm{\sum_{i=1}^n\E{\bS_i^\top \bS_i}}_{\op} \right\}.
\end{equation*}
Then it holds for all $t>0$ that
\begin{equation*}
\Pr\left(\norm{\bZ}_{\op}\geq t\right) \leq 2N\exp\left(-\frac{t^2}{2v(\bZ)+\frac{2}{3} Lt}\right).
\end{equation*}
\end{thm}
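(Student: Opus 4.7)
The plan is to follow the matrix Laplace transform method combined with Lieb's concavity theorem, the standard route to Bernstein-type inequalities in the noncommutative setting. Since the $\bS_i$ are not assumed Hermitian, I would first reduce to the Hermitian case via the dilation $\mathcal{H}(\bS) := \left(\begin{smallmatrix} \boldzero & \bS \\ \bS^\top & \boldzero \end{smallmatrix}\right)$, which satisfies $\lambda_{\max}(\mathcal{H}(\bS)) = \norm{\bS}_{\op}$ and $\mathcal{H}(\bS)^2 = \diag(\bS\bS^\top, \bS^\top \bS)$. Applying the Hermitian Bernstein bound to the dilated sum $\mathcal{H}(\bZ) = \sum_i \mathcal{H}(\bS_i)$ then recovers exactly the stated variance statistic $v(\bZ)$ and yields the two-sided tail on $\norm{\bZ}_{\op}$ directly, with prefactor $2N$ arising from the doubled ambient dimension.

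For the Hermitian case---assume for this paragraph that the $\bS_i$ are Hermitian---the starting point is the matrix Chernoff bound
\begin{equation*}
\Pr(\lambda_{\max}(\bZ) \geq t) \leq \inf_{\theta > 0} e^{-\theta t}\, \E{\Tr \exp(\theta \bZ)},
\end{equation*}
which follows from the inequality $\exp(\theta \lambda_{\max}(\bZ)) \leq \Tr \exp(\theta \bZ)$ together with Markov. The crucial step is to decouple the sum inside the trace MGF. Invoking Lieb's concavity theorem---that $\mathbf{A} \mapsto \Tr \exp(\mathbf{H} + \log \mathbf{A})$ is concave on the positive definite cone for any fixed Hermitian $\mathbf{H}$---and iteratively applying Jensen's inequality while conditioning on the $\bS_i$ in turn yields the subadditivity
\begin{equation*}
\E{\Tr \exp(\theta \bZ)} \leq \Tr \exp\Bigl(\textstyle\sum_i \log \E{e^{\theta \bS_i}}\Bigr).
\end{equation*}

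Next comes the cumulant bound: for Hermitian $\bS_i$ with $\norm{\bS_i}_{\op} \leq L$, a Taylor expansion of $e^{\theta \bS_i} - \bI - \theta \bS_i$ together with the semidefinite inequality $\bS_i^k \preceq L^{k-2}\, \bS_i^2$ for $k \geq 2$ and operator monotonicity of $\log$ yields
\begin{equation*}
\log \E{e^{\theta \bS_i}} \preceq \frac{\theta^2/2}{1 - \theta L/3}\, \E{\bS_i^2}, \quad 0 < \theta < 3/L.
\end{equation*}
Summing over $i$, invoking $\Tr \exp(\mathbf{A}) \leq d\, e^{\lambda_{\max}(\mathbf{A})}$ for $d \times d$ Hermitian $\mathbf{A}$, and using the definition of the variance statistic, the Chernoff bound becomes $d \exp\bigl(-\theta t + \tfrac{\theta^2}{2(1-\theta L/3)} v(\bZ)\bigr)$. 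Choosing $\theta = t/(v(\bZ) + Lt/3)$ produces the tail $d \exp(-t^2/(2v(\bZ) + 2Lt/3))$, and specializing back to the dilation $\mathcal{H}(\bZ)$ with $d = 2N$ gives the stated bound.

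The main obstacle is Lieb's concavity theorem itself, a deep matrix analysis result that I would invoke rather than reprove. The cumulant bound is also subtler than its scalar analogue because matrix products do not commute---verifying that the intermediate inequalities respect the semidefinite order requires care, particularly when sandwiching the powers $\bS_i^k$ (valid in the Hermitian case because $-L\bI \preceq \bS_i \preceq L\bI$ implies $\bS_i^k \preceq L^{k-2}\bS_i^2$) and when passing through the matrix logarithm using its operator monotonicity on the positive cone.
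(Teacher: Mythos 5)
Your proposal is correct and reproduces the standard argument from the source the paper cites for this result: the paper's entire ``proof'' is simply ``See Tropp (2015), Theorem~6.1.1'', and your sketch---Hermitian dilation to handle rectangular matrices, the matrix Laplace transform bound, Lieb's concavity theorem to obtain subadditivity of the trace MGF, the semidefinite moment bound $\bS_i^k \preceq L^{k-2}\bS_i^2$ giving $\log\E{e^{\theta\bS_i}} \preceq \tfrac{\theta^2/2}{1-\theta L/3}\E{\bS_i^2}$, and the optimization $\theta = t/(v(\bZ)+Lt/3)$---is exactly the chain of lemmas in that reference, including the $2N$ prefactor from the doubled dilation dimension. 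The one place worth being explicit, which you flag yourself, is that the moment bound and the passage $g(\theta\bS_i)\preceq g(\theta L)\bI$ rely only on pointwise monotonicity in the eigenbasis of the single matrix $\bS_i$ (not on operator monotonicity of $g$), and that operator monotonicity of $\log$ is the separate ingredient needed to turn $\E{e^{\theta\bS_i}}\preceq \bI + c\,\E{\bS_i^2}$ into a bound on $\log\E{e^{\theta\bS_i}}$; both steps go through as you describe.
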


\begin{proof}
See \citet{Tropp15}, Theorem 6.1.1.
\end{proof}

\begin{cor}\label{thm:berncor}
For matrices $\bS_1,\cdots,\bS_n$ satisfying the conditions of Theorem \ref{thm:bern}, it holds that
\begin{equation*}
\Ebig{\frac{1}{n^2}\norm{\bZ}_{\op}^2}\leq\frac{4v(\bZ)}{n^2}(1+\log{2N})+\frac{16L^2}{9n^2}(2+2\log{2N}+(\log{2N})^2).
\end{equation*}
\end{cor}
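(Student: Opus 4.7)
The plan is to convert the tail bound in Theorem \ref{thm:bern} to an expectation bound via the layer-cake identity $\mathbb{E}[\norm{\bZ}_{\op}^2] = \int_0^\infty 2t\,\Pr(\norm{\bZ}_{\op} \geq t)\,\mathrm{d}t$, after first decoupling the Gaussian and exponential regimes of the Bernstein bound. Concretely, since $2v + \tfrac{2}{3}Lt \leq 2\max\{2v, \tfrac{2}{3}Lt\}$, we have
\begin{equation*}
\frac{t^2}{2v+\tfrac{2}{3}Lt} \geq \min\left\{\frac{t^2}{4v},\;\frac{3t}{4L}\right\},
\end{equation*}
so that Theorem \ref{thm:bern} implies $\Pr(\norm{\bZ}_{\op}\geq t) \leq 2N e^{-t^2/(4v)} + 2Ne^{-3t/(4L)}$. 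This separates the sub-Gaussian and sub-exponential contributions, which can then be integrated independently.

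Next, I would apply the elementary inequality $\min\{1,a+b\} \leq \min\{1,a\} + \min\{1,b\}$ (immediate by checking the cases $a+b \leq 1$ and $a+b>1$) to bound
\begin{equation*}
\Pr(\norm{\bZ}_{\op}\geq t) \leq \min\{1,2Ne^{-t^2/(4v)}\} + \min\{1,2Ne^{-3t/(4L)}\},
\end{equation*}
which allows splitting $\mathbb{E}[\norm{\bZ}_{\op}^2]$ into two one-dimensional integrals. Letting $a=\log(2N)$, define the crossover points $t_1 := 2\sqrt{va}$ (where $2Ne^{-t^2/(4v)}=1$) and $t_2 := \tfrac{4L}{3}a$ (where $2Ne^{-3t/(4L)}=1$). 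On $[0,t_i]$ each integrand is bounded by $2t$, contributing $t_i^2$; on $[t_i,\infty)$ the exponential tail is integrable in closed form (directly for the Gaussian piece, via one integration by parts for the exponential piece). A short calculation then yields $\int_0^\infty 2t\min\{1,2Ne^{-t^2/(4v)}\}\,\mathrm{d}t = 4v(1+a)$ and $\int_0^\infty 2t\min\{1,2Ne^{-3t/(4L)}\}\,\mathrm{d}t = \tfrac{16L^2}{9}(2+2a+a^2)$. Summing and dividing by $n^2$ (i.e., absorbing the prefactor from the statement) produces the claimed bound.

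The step most likely to cause trouble is the bookkeeping at the crossover points: one must verify that the exponential tail, when integrated on $[t_i,\infty)$ and multiplied by the $2N$ prefactor, contributes exactly the constants needed so that the two regimes telescope into $4v(1+a)$ and $\tfrac{16L^2}{9}(2+2a+a^2)$. Apart from this, the argument is routine --- the only conceptual ingredient beyond Theorem \ref{thm:bern} is the splitting of the denominator, which cleanly isolates the matrix variance and the uniform norm bound.
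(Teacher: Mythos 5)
Your proof is correct and follows essentially the same route as the paper: the paper applies the layer-cake identity after first substituting $\lambda = t^2/n^2$, then splits the Bernstein denominator via $2v + \tfrac{2}{3}Lt \leq 2(2v\vee\tfrac{2}{3}Lt)$ and integrates each tail past the same crossover points $\lambda_1 = \tfrac{4v}{n^2}\log 2N$ and $\lambda_2 = (\tfrac{4L}{3n}\log 2N)^2$, which correspond exactly to your $t_1 = 2\sqrt{va}$ and $t_2 = \tfrac{4L}{3}a$. Your computed integrals $4v(1+a)$ and $\tfrac{16L^2}{9}(a^2+2a+2)$ match the paper's after the $1/n^2$ rescaling, so the only difference is cosmetic (integrating in $t$ versus in $\lambda$).
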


We will apply Corollary \ref{thm:berncor} to the matrices
\begin{equation*}
\bS_i:= \Sigma_{\Psi,N}^{-1/2}\psi_{\ubar{N}:\bar{N}}^\circ(x_i) \psi_{\ubar{N}:\bar{N}}^{\circ}(x_i)^\top \Sigma_{\Psi,N}^{-1/2}-\bI_N.
\end{equation*}
It is straightforward to verify that
\begin{equation*}
\E{\bS_i} = \Sigma_{\Psi,N}^{-1/2}\Sigma_{\Psi,N} \Sigma_{\Psi,N}^{-1/2} - \bI_N=0
\end{equation*}
and
\begin{equation*}
\norm{\bS_i}_{\op}\lesssim \norm{\psi_{\ubar{N}:\bar{N}}^\circ(x_i) \psi_{\ubar{N}:\bar{N}}^{\circ}(x_i)^\top}_{\op}+1 = \sum_{j=\ubar{N}}^{\bar{N}} \psi_j^\circ(x_i)^2+1 \lesssim N^{2r}
\end{equation*}
almost surely by Assumption \ref{ass:basis}.

Next, we evaluate the matrix variance statistic. Since each $\bS_i$ is symmetric,
\begin{align*}
v(\bZ)& =\bigg\|\sum_{i=1}^n \E{\bS_i\bS_i^\top}\bigg\|_{\op}\\
&=\bigg\|\sum_{i=1}^n\bigg(\EEbig{\bX}{\Sigma_{\Psi,N}^{-1/2}\psi_{\ubar{N}:\bar{N}}^\circ(x_i) \psi_{\ubar{N}:\bar{N}}^{\circ}(x_i)^\top \Sigma_{\Psi,N}^{-1} \psi_{\ubar{N}:\bar{N}}^\circ(x_i) \psi_{\ubar{N}:\bar{N}}^{\circ}(x_i)^\top \Sigma_{\Psi,N}^{-1/2}}\\
&\qquad -2\EEbig{\bX}{\Sigma_{\Psi,N}^{-1/2}\psi_{\ubar{N}:\bar{N}}^\circ(x_i) \psi_{\ubar{N}:\bar{N}}^{\circ}(x_i)^\top \Sigma_{\Psi,N}^{-1/2}}+ \bI_N\bigg)\bigg\|_{\op} \\
&=\bigg\|\sum_{i=1}^n \EEbig{\bX}{\Sigma_{\Psi,N}^{-1/2}\psi_{\ubar{N}:\bar{N}}^\circ(x_i) \psi_{\ubar{N}:\bar{N}}^{\circ}(x_i)^\top \Sigma_{\Psi,N}^{-1} \psi_{\ubar{N}:\bar{N}}^\circ(x_i) \psi_{\ubar{N}:\bar{N}}^{\circ}(x_i)^\top \Sigma_{\Psi,N}^{-1/2}} -n\bI_N\bigg\|_{\op} \\
&\leq n+n\,\bigg\|\EEbig{x}{\Sigma_{\Psi,N}^{-1/2}\psi_{\ubar{N}:\bar{N}}^\circ(x) \psi_{\ubar{N}:\bar{N}}^{\circ}(x)^\top \Sigma_{\Psi,N}^{-1} \psi_{\ubar{N}:\bar{N}}^\circ(x) \psi_{\ubar{N}:\bar{N}}^{\circ}(x)^\top \Sigma_{\Psi,N}^{-1/2}}\bigg\|_{\op}
\end{align*}
for a single sample $x\sim\PP_{\XX}$. The second term is further bounded as
\begin{align*}
&\bigg\|\EEbig{x}{\Sigma_{\Psi,N}^{-1/2}\psi_{\ubar{N}:\bar{N}}^\circ(x) \psi_{\ubar{N}:\bar{N}}^{\circ}(x)^\top \Sigma_{\Psi,N}^{-1} \psi_{\ubar{N}:\bar{N}}^\circ(x) \psi_{\ubar{N}:\bar{N}}^{\circ}(x)^\top \Sigma_{\Psi,N}^{-1/2}}\bigg\|_{\op}\\
&\leq \bigg\|\EEbig{x}{\Sigma_{\Psi,N}^{-1/2}\psi_{\ubar{N}:\bar{N}}^\circ(x)  \psi_{\ubar{N}:\bar{N}}^{\circ}(x)^\top \Sigma_{\Psi,N}^{-1/2}}\bigg\|_{\op} \cdot \Norm{\psi_{\ubar{N}:\bar{N}}^{\circ \top} \Sigma_{\Psi,N}^{-1} \psi_{\ubar{N}:\bar{N}}^\circ}_{L^\infty(\PP_{\XX})}\\
&\lesssim\norm{\bI_N}_{\op}\cdot \bigg\|\sum_{j=\ubar{N}}^{\bar{N}} (\psi_j^\circ)^2\bigg\|_{L^\infty(\PP_{\XX})}\\
&\lesssim N^{2r},
\end{align*}
again by Assumption \ref{ass:basis}.

Hence we may apply Corollary \ref{thm:berncor} with $v(\bZ)\lesssim nN^{2r}$, $L\lesssim N^{2r}$ to conclude:
\begin{equation*}
\EEbig{\bX}{\left\| \Sigma_{\Psi,N}^{-1/2}\frac{\Psi^{\circ}_{\ubar{N}:\bar{N}}\Psi^{\circ\top}_{\ubar{N}:\bar{N}}}{n}\Sigma_{\Psi,N}^{-1/2}-\bI_N\right\|_{\op}^2} =\mathbb{E}\left[ \bigg\|\frac{1}{n}\sum_{i=1}^n \bS_i\bigg\|_{\op}^2\right]\lesssim \frac{N^{2r}}{n}\log N + \frac{N^{4r}}{n^2} \log^2 N.
\end{equation*}

\paragraph{Proof of Corollary \ref{thm:berncor}.} From Theorem \ref{thm:bern} and with the change of variables $\lambda=t^2/n^2$, we have
\begin{align*}
\Pr\left(\frac{1}{n^2}\norm{\bZ}_{\op}^2\geq \lambda\right) \leq 2N\exp\bigg(-\frac{n^2\lambda}{2v(\bZ)+\frac{2}{3} Ln\sqrt{\lambda}}\bigg).
\end{align*}
Since the probability is also bounded above by 1, it follows that
\begin{align*}
\Pr\left(\frac{1}{n^2}\norm{\bZ}_{\op}^2\geq \lambda\right) &\leq 1\wedge 2N\exp\bigg(- \frac{n^2\lambda}{2(2v(\bZ)\vee\frac{2}{3} Ln\sqrt{\lambda})}\bigg)\\
&\leq 1\wedge 2N\exp\left(- \frac{n^2\lambda}{4v(\bZ)}\right) + 1\wedge 2N\exp\bigg(- \frac{3n\sqrt{\lambda}}{4L}\bigg),
\end{align*}
and the expectation can be controlled as
\begin{align*}
\Ebig{\frac{1}{n^2}\norm{\bZ}_{\op}^2} &= \int_0^\infty \Pr\left(\frac{1}{n^2}\norm{\bZ}_{\op}^2\geq \lambda\right) \rd\lambda\\
&\leq \int_0^\infty 1\wedge 2N\exp\left(- \frac{n^2\lambda}{4v(\bZ)}\right) \rd\lambda + \int_0^\infty 1\wedge 2N\exp\bigg(- \frac{3n\sqrt{\lambda}}{4L}\bigg) \rd\lambda.
\end{align*}
For the first integral, we truncate at $\lambda_1:=\frac{4v(\bZ)}{n^2}\log 2N$ so that
\begin{align*}
\int_0^\infty 1\wedge 2N\exp\left(- \frac{n^2\lambda}{4v(\bZ)}\right) \rd\lambda &= \lambda_1 +\int_{\lambda_1}^\infty 2N\exp\left(- \frac{n^2\lambda}{4v(\bZ)}\right) \rd\lambda\\
&= \frac{4v(\bZ)}{n^2}\log 2N + \frac{4v(\bZ)}{n^2}.
\end{align*}
For the second integral, we truncate at $\lambda_2:=\left(\frac{4L}{3n}\log 2N\right)^2$ so that
\begin{align*}
&\int_0^\infty 1\wedge 2N\exp\bigg(- \frac{3n\sqrt{\lambda}}{4L}\bigg) \rd\lambda\\
&= \lambda_2 + \int_{\lambda_2}^\infty 2N\exp\bigg(- \frac{3n\sqrt{\lambda}}{4L}\bigg) \rd\lambda\\
&= \lambda_2 -\frac{16LN}{3n} \left(\sqrt{\lambda}+\frac{4L}{3n}\right) \exp\bigg(- \frac{3n\sqrt{\lambda}}{4L}\bigg)\Bigg|_{\lambda=\lambda_2}^\infty\\
&= \left(\frac{4L}{3n}\log 2N\right)^2 +\frac{8L}{3n}\left(\frac{4L}{3n}\log 2N +\frac{4L}{3n}\right).
\end{align*}
Adding up, we conclude that
\begin{equation*}
\Ebig{\frac{1}{n^2}\norm{\bZ}_{\op}^2}\leq\frac{4v(\bZ)}{n^2}(1+\log{2N})+\frac{16L^2}{9n^2}(2+2\log{2N}+(\log{2N})^2)
\end{equation*}
as was to be shown. \qed

\section{Proofs on Metric Entropy}

\subsection{Modified Proof of Theorem \ref{thm:schmidt}}\label{app:tail}

For a full proof of the original statement, we refer the reader to Section B.1 of \citet{Hayakawa20}, which corrects some technical flaws in the original proof. Here we only outline the necessary modification to incorporate the bounded noise setting.

Denote an $\epsilon$-cover of the model space by $f_1,\cdots,f_M$. The only step which relies on the normality of noise $\xi_{1:n}$ is a concentration result for the random variables
\begin{equation*}
\varepsilon_j:=\frac{\sum_{i=1}^n \xi_i(f_j(x_i)- f^\circ(x_i))}{\big[(\sum_{i=1}^n (f_j(x_i)- f^\circ(x_i))^2\big]^{1/2}}, \quad 1\leq j\leq M,
\end{equation*}
where it is shown via the normality of $\varepsilon_j$ that
\begin{equation*}
\Ebig{\max_{1\leq j\leq M} \varepsilon_j^2} \leq 4\sigma^2(\log M+1).
\end{equation*}
We will instead rely on Hoeffding's inequality. By writing $\varepsilon_j= w_j^\top\xi_{1:n} = \sum_{i=1}^n w_{j,i}\xi_i$ where
\begin{equation*}
w_{j,i} = \frac{f_j(x_i)- f^\circ(x_i)}{\big[(\sum_{i=1}^n (f_j(x_i)- f^\circ(x_i))^2\big]^{1/2}},
\end{equation*}
since $|w_{j,i}\xi_i|\leq \sigma|w_{j,i}|$ a.s. it follows that
\begin{equation*}
\Pr\left(|\varepsilon_j| \geq u\right) \leq 2\exp\left(-\frac{2u^2}{\sum_{i=1}^n (2\sigma|w_{j,i}|)^2}\right) = 2\exp\left(-\frac{u^2}{2\sigma^2}\right).
\end{equation*}
for all $u>0$. Then the squared-exponential moment of each $\varepsilon_j$ is bounded as
\begin{align*}
\Ebig{\exp(t\varepsilon_j^2)} &= 1+ \int_1^\infty \Pr\left(\exp(t\varepsilon_j^2)\geq \lambda\right)\rd\lambda\\
&\leq 1+ \int_1^\infty 2\exp\left(-\frac{\log\lambda}{2\sigma^2 t}\right) \rd\lambda\\
&\leq 1+2\int_1^\infty \lambda^{-\frac{1}{2\sigma^2 t}}\rd\lambda\leq 3
\end{align*}
by setting $t = \frac{1}{4\sigma^2}$. Hence via Jensen's inequality we obtain
\begin{equation*}
\exp\left(t\Ebig{\max_{1\leq j\leq M}\varepsilon_j^2}\right)\leq \Ebig{\max_{1\leq j\leq M} \exp(t\varepsilon_j^2)} \leq \sum_{j=1}^M \Ebig{\exp(t\varepsilon_j^2)} \leq 3M
\end{equation*}
and thus
\begin{equation*}
\Ebig{\max_{1\leq j\leq M} \varepsilon_j^2} \leq 4\sigma^2\log 3M,
\end{equation*}
retrieving the original result up to a constant. \qed

\subsection{Proof of Lemma \ref{thm:entropy}}\label{app:entropy}

Let us take two functions $f_{\Theta_1}, f_{\Theta_2}\in \TT_N$ for $\Theta_i=(\Gamma_i,\phi_i)$, $i=1,2$ separated as
\begin{equation*}
\norm{\Gamma_1-\Gamma_2}_{\op}\leq\delta_1,\quad 
\max_{1\leq j\leq N}\norm{\phi_{1,j}-\phi_{2,j}}_{L^\infty(\PP_{\XX})}\leq\delta_2.
\end{equation*}
Then it holds that
\begin{align*}
&|f_{\Theta_1}(\bX,\by,\tilde{x})-f_{\Theta_2}(\bX,\by,\tilde{x})|\\
&\leq |\check{f}_{\Theta_1}(\bX,\by,\tilde{x})-\check{f}_{\Theta_2}(\bX,\by,\tilde{x})|\\
&= \bigg|\left\langle\frac{\Gamma_1 \phi_1(\bX)\by}{n},\phi_1(\tilde{x})\right\rangle -\left\langle\frac{\Gamma_2 \phi_2(\bX)\by}{n},\phi_2(\tilde{x})\right\rangle\bigg|\\
&=\frac{1}{n}\Big|\phi_1(\tilde{x})^\top\Gamma_1\phi_1(\bX)\by-\phi_2(\tilde{x})^\top\Gamma_1\phi_1(\bX)\by+\phi_2(\tilde{x})^\top\Gamma_1\phi_1(\bX)\by\\
&\qquad -\phi_2(\tilde{x})^\top\Gamma_2\phi_1(\bX)\by+\phi_2(\tilde{x})^\top\Gamma_2\phi_1(\bX)\by-\phi_2(\tilde{x})^\top\Gamma_2\phi_2(\bX)\by\Big|\\
&\leq\frac{1}{n} \|\phi_1(\tilde{x})-\phi_2(\tilde{x})\|\|\Gamma_1\|_{\op}\|\phi_1(\bX)\by\|+ \frac{1}{n} \|\phi_2(\tilde{x})\|\|\Gamma_1-\Gamma_2\|_{\op}\|\phi_1(\bX)\by\|\\
&\qquad +\frac{1}{n} \|\phi_2(\tilde{x})\|\|\Gamma_2\|_{\op}\|(\phi_1(\bX)-\phi_2(\bX))\by\| \\
&\leq \bigg(\frac{\sqrt{N}\delta_2 C_2}{n} +\frac{B_N'\delta_1}{n}\bigg) \sum_{i=1}^n\norm{\phi_1(x_i)}|y_i| +\frac{B_N'C_2}{n}\sum_{i=1}^n\norm{\phi_1(x_i) - \phi_2(x_i)}|y_i| \\
&\leq (\sqrt{N}\delta_2 C_2 +B_N'\delta_1) B_N'(B+\sigma) +B_N'C_2\sqrt{N}\delta_2(B+\sigma)\\
&= B_N'^2(B+\sigma)\delta_1 + 2B_N'(B+\sigma)C_2\sqrt{N}\delta_2.
\end{align*}
Hence to construct an $\epsilon$-cover $G_{\TT}$ of $\TT_N$, it suffices to exhibit a $\delta_1$-cover $G_\mathcal{S}$ of $\mathcal{S}_N$ and a $\delta_2$-cover $G_{\FF}$ of $\FF_N$ for 
\begin{equation*}
\delta_1 = \frac{\epsilon}{2B_N'^2(B+\sigma)}, \quad \delta_2= \frac{\epsilon}{4B_N'(B+\sigma)C_2\sqrt{N}}
\end{equation*}
and set $G_{\TT} = G_\mathcal{S}\times G_{\FF}$. For the metric entropy, this implies
\begin{equation*}
\VV(\TT_N,\norm{\cdot}_{L^\infty},\epsilon) \leq \VV(\mathcal{S}_N, \norm{\cdot}_{\op},\delta_1) + \VV(\FF_N, \norm{\cdot}_{L^\infty}, \delta_2).
\end{equation*}
We further bound the metric entropy of $\mathcal{S}_N$ with the following result, proved in Appendix \ref{app:matrixentropy}.
\begin{lemma}\label{thm:matrixentropy}
For $\displaystyle \delta\leq\frac{1}{2}$ it holds that $\displaystyle \VV(\mathcal{S}_N, \norm{\cdot}_{\op},\delta) \lesssim N^2\log\frac{1}{\delta}$.
\end{lemma}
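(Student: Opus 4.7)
The plan is to treat $\mathcal{S}_N$ as a bounded subset of the finite-dimensional real vector space of symmetric $N\times N$ matrices and apply a standard volume-covering argument in operator norm. Since every $\Gamma\in\mathcal{S}_N$ is symmetric, $\mathcal{S}_N$ lies in the space $\mathrm{Sym}_N$ of dimension $d = N(N+1)/2 \leq N^2$. Moreover, $0\preceq\Gamma\preceq C_3\bI_N$ implies $\norm{\Gamma}_{\op}\leq C_3$, so $\mathcal{S}_N\subseteq C_3\cdot B_{\op}$ where $B_{\op}$ denotes the unit ball of $(\mathrm{Sym}_N,\norm{\cdot}_{\op})$.

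Next, I invoke the classical volume-ratio bound: in any $d$-dimensional normed space, the unit ball admits a $\delta$-covering (in the norm itself) of cardinality at most $(1+2/\delta)^d$. Applying this to $(\mathrm{Sym}_N,\norm{\cdot}_{\op})$ and rescaling by $C_3$ yields a $\delta$-cover of $\mathcal{S}_N$ of size at most $(1+2C_3/\delta)^{N(N+1)/2}$. Taking logarithms gives
\begin{equation*}
\VV(\mathcal{S}_N,\norm{\cdot}_{\op},\delta) \leq \tfrac{N(N+1)}{2}\log(1+2C_3/\delta).
\end{equation*}
For $\delta\leq 1/2$, the right-hand side is at most a constant (depending only on $C_3$) times $N^2\log(1/\delta)$, since $\log(1+2C_3/\delta)=\log(1/\delta)+\log(\delta+2C_3)\lesssim \log(1/\delta)$ in this regime. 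This gives the claim.

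There is no real obstacle: the only subtle point is ensuring we cover directly in operator norm rather than detouring through the Frobenius norm (which would cost an extra $\log N$ factor from $\norm{\cdot}_F\leq\sqrt{N}\norm{\cdot}_{\op}$ and would not match the stated rate). The volume argument works uniformly in any norm on a finite-dimensional space, so applying it natively in $\norm{\cdot}_{\op}$ is what keeps the bound sharp at $N^2\log(1/\delta)$ without spurious logarithms in $N$. One small bookkeeping item is to absorb $C_3$ and the additive $\log 2$ into the implicit constant using the restriction $\delta\leq 1/2$, which ensures $\log(1/\delta)$ is bounded below by a positive constant.
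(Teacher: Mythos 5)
Your proof is correct, and it takes a genuinely different route from the paper's. The paper diagonalizes each $\Gamma\in\mathcal{S}_N$ as $U\Lambda U^\top$, covers the orthogonal group $\mathcal{O}_N$ in operator norm via Szarek's bound $\mathcal{N}(\mathcal{O}_N,\norm{\cdot}_{\op},\delta)\leq (c_2/\delta)^{N(N-1)/2}$, separately covers the diagonal part $[0,C_3]^N$, and combines the two. You instead observe that $\mathcal{S}_N$ is a bounded subset of the $N(N+1)/2$-dimensional space of symmetric matrices and apply the elementary volume-ratio bound $(1+2/\delta)^d$ for the unit ball of any $d$-dimensional normed space, working natively in operator norm. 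Both routes land on the same $\frac{N(N+1)}{2}\log(1/\delta)$-scale entropy. Your approach is cleaner and more self-contained: it does not require the (nontrivial) covering estimate for $\mathcal{O}_N$, and it generalizes without modification to the non-symmetric case (dimension $N^2$ instead of $N(N+1)/2$). The paper's spectral decomposition is slightly more structure-revealing and extends via SVD as they note, but offers no quantitative advantage here. Your side remark is also correct and worth heeding: covering in Frobenius norm and passing to operator norm would inflate the radius of the bounding ball by $\sqrt{N}$ and introduce an extraneous $N^2\log N$ term, so running the volume argument directly in $\norm{\cdot}_{\op}$ is what keeps the rate tight.
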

Substituting into the above, we conclude that
\begin{equation*}
\VV(\TT_N,\norm{\cdot}_{L^\infty},\epsilon) \lesssim N^2\log\frac{B_N'^2}{\epsilon} + \VV\left(\FF_N, \norm{\cdot}_{L^\infty}, \frac{\epsilon}{4B_N'(B+\sigma)C_2\sqrt{N}}\right).
\end{equation*}
The choice of $\delta_2$ is not important as long as the metric entropy of $\FF_N$ is at most polynomial in $\delta_2$. \qed

\subsection{Proof of Lemma \ref{thm:matrixentropy}}\label{app:matrixentropy}

Let $\Gamma_1,\Gamma_2\in\mathcal{S}_N$ and consider their diagonalizations \begin{equation*}
\Gamma_i =\bU_i\Lambda_i\bU_i^\top, \quad \Lambda_i=\diag(\lambda_{i,1},\cdots, \lambda_{i,N}), \quad i=0,1,
\end{equation*}
where $\bU_i\in \mathcal{O}_N$, the orthogonal group in dimension $N$, and $0\leq\lambda_{i,j}\leq C_3$ for each $1\leq j\leq N$. Assuming
\begin{equation*}
\norm{\bU_1-\bU_2}_{\op}\leq \frac{\delta}{4C_3}, \quad |\lambda_{1,j}-\lambda_{2,j}| \leq \frac{\delta}{2} \quad\forall j\leq N,
\end{equation*}
it follows that
\begin{align*}
&\norm{\Gamma_1-\Gamma_2}_{\op}\\
& =\norm{\bU_1\Lambda_1\bU_1^\top - \bU_1\Lambda_1\bU_2^\top + \bU_1\Lambda_1\bU_2^\top - \bU_2\Lambda_1\bU_2^\top + \bU_2\Lambda_1\bU_2^\top - \bU_2\Lambda_2\bU_2^\top}_{\op}\\
&\leq 2\norm{\Lambda_1}_{L^\infty}\norm{\bU_1-\bU_2}_{\op} + \norm{\Lambda_1-\Lambda_2}_{L^\infty}\\
&\leq 2C_3\cdot\frac{\delta}{4C_3} + \frac{\delta}{2}=\delta.
\end{align*}
Moreover, the covering number of $\mathcal{O}_N$ in operator norm is given by the following result.
\begin{thm}[\citet{Szarek81}, Proposition 6]
There exist universal constants $c_1,c_2>0$ such that for all $N\in\NN$ and $\delta\in (0,2]$,
\begin{equation*}
\left(\frac{c_1}{\delta}\right)^\frac{N(N-1)}{2} \leq \mathcal{N}(\mathcal{O}_N, \norm{\cdot}_{\op},\delta)\leq \left(\frac{c_2}{\delta}\right)^\frac{N(N-1)}{2}.
\end{equation*}
\end{thm}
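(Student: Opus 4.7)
The plan is to establish both inequalities by a volume-duality argument using the normalized Haar measure $\mu$ on the compact Lie group $\mathcal{O}_N$, which has dimension $d := N(N-1)/2$. By translation invariance of $\mu$, the Haar volume of an operator-norm ball
\[
v_N(\delta) := \mu\bigl(\{U \in \mathcal{O}_N : \norm{U - U_0}_{\op} \leq \delta\}\bigr)
\]
depends only on the radius $\delta$, not on the center $U_0$. The classical volume-covering inequalities then yield
\[
\frac{1}{v_N(\delta)} \;\leq\; \mathcal{N}(\mathcal{O}_N, \norm{\cdot}_{\op}, \delta) \;\leq\; \calM(\delta, \mathcal{O}_N, \norm{\cdot}_{\op}) \;\leq\; \frac{1}{v_N(\delta/2)},
\]
where the lower inequality uses that any $\delta$-cover exhausts the unit Haar mass and the upper inequalities use the disjointness of $(\delta/2)$-balls around a maximal $\delta$-separated set. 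The theorem therefore reduces to the sharp two-sided estimate $v_N(\delta) \asymp (c\delta)^d$ on $\delta \in (0, 2]$ with absolute constant $c$.

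To estimate $v_N(\delta)$ I will linearize via the exponential map $\exp : \mathfrak{o}_N \to SO_N$, where $\mathfrak{o}_N$ is the $d$-dimensional Lie algebra of real skew-symmetric matrices; since $\mathcal{O}_N$ has two connected components related by a fixed reflection, this only introduces a harmless factor of $2$. For $A \in \mathfrak{o}_N$ with purely imaginary eigenvalues $\{\pm i\theta_j\}$ one has $\norm{A}_{\op} = \max_j \theta_j$ and $\norm{\exp(A) - I}_{\op} = 2\max_j \abs{\sin(\theta_j/2)}$, so the elementary sandwich $\tfrac{2x}{\pi} \leq 2\sin(x/2) \leq x$ on $[0, \pi]$ shows that $\exp$ carries the operator-norm ball of radius $r \leq \pi$ in $\mathfrak{o}_N$ onto a region sandwiched between the operator-norm balls of radii $2r/\pi$ and $r$ around $I$ in $SO_N$. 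The Jacobian of $\exp$ at $A$, given by $\det((I - e^{-\mathrm{ad}(A)})/\mathrm{ad}(A))$, lies between absolute positive constants whenever $\norm{A}_{\op}$ is bounded. Pushing Haar measure back to $\mathfrak{o}_N$ yields, up to universal constants,
\[
v_N(\delta) \;\asymp\; \delta^d \cdot \frac{\mathrm{vol}(\{A \in \mathfrak{o}_N : \norm{A}_{\op} \leq 1\})}{Z_N},
\]
where $\mathrm{vol}$ is Lebesgue measure on $\mathfrak{o}_N$ under the trace inner product $\langle A, B \rangle = \tfrac{1}{2}\mathrm{tr}(A^\top B)$ and $Z_N$ is the total Riemannian volume of $SO_N$ under the induced bi-invariant metric.

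The main obstacle is bounding the dimensionless prefactor above between $c_1^d$ and $c_2^d$ with $c_1, c_2$ absolute constants, a constraint that rules out the naive inductive fibration $SO_N \to S^{N-1}$ (which would unavoidably contribute a compounding factor of $2$ at each of the $N$ induction steps, producing a base of size $\sim N$). The denominator $Z_N$ admits the classical product formula $Z_N = \prod_{k=1}^{N-1} \mathrm{vol}(S^k)$ via the iterated principal fibration with fiber $SO_{N-1}$; Stirling's approximation then controls $Z_N^{1/d}$ within absolute bounds. The numerator is sandwiched between Frobenius-norm balls via $\norm{A}_F^2 \leq N\norm{A}_{\op}^2$ (tight for skew-symmetric $A$ since its nonzero singular values appear in pairs), giving $B_F(1) \subseteq B_{\op}(1) \subseteq B_F(\sqrt{N})$ in $\mathfrak{o}_N \cong \RR^d$, hence $\omega_d \leq \mathrm{vol}(B_{\op}(1)) \leq N^{d/2} \omega_d$ where $\omega_d \asymp (2\pi e/d)^{d/2}$ from the closed-form Euclidean ball volume. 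Since $d \asymp N^2$, both envelopes scale as $c^d$ with absolute $c$, and pairing with the matching $c^d$ scaling of $Z_N$ pins down universal $c_1, c_2$. Combining with the volume-covering sandwich then yields the claimed two-sided bound on $\mathcal{N}(\mathcal{O}_N, \norm{\cdot}_{\op}, \delta)$.
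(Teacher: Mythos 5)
The paper does not actually prove this statement; it is cited directly from \citet{Szarek81} (Proposition 6) as an external result, so there is no in-paper proof to compare against. Your volume-covering strategy (Haar ball volume estimate via the exponential map and the sandwich $\tfrac{2r}{\pi}\leq\norm{\exp A - I}_{\op}\leq r$ for $\norm{A}_{\op}=r\leq\pi$) is indeed the right general approach and is close in spirit to Szarek's argument. The reduction to showing $v_N(\delta)\asymp (c\delta)^d$ with $d=N(N-1)/2$, and the observation that this in turn reduces to controlling the dimensionless ratio $\mathrm{vol}(B_{\op}(1))/Z_N$ between $c_1^d$ and $c_2^d$, are both correct.

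However, the step that is supposed to close the argument contains a genuine gap. You sandwich $\omega_d\leq\mathrm{vol}(B_{\op}(1))\leq N^{d/2}\omega_d$ by Frobenius balls and then assert that ``both envelopes scale as $c^d$ with absolute $c$.'' This is false: since $\omega_d\asymp(2\pi e/d)^{d/2}$, the $d$-th roots of the two envelopes are $\asymp 1/\sqrt{d}\asymp 1/N$ and $\asymp\sqrt{N}/\sqrt{d}\asymp 1/\sqrt{N}$ respectively. Neither is an absolute constant, and more importantly they differ by a factor of $\sqrt{N}$, so the sandwich is loose by $N^{d/2}$ --- which is not a $c^d$ factor. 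Combined with $Z_N^{1/d}\asymp 1/\sqrt{N}$ (which you also mischaracterize as ``$c^d$ scaling''), the two-sided Frobenius bound only pins $\bigl(\mathrm{vol}(B_{\op}(1))/Z_N\bigr)^{1/d}$ to lie in $[c/\sqrt{N},\,c']$: the lower endpoint degrades with $N$ and does not yield the required uniform lower bound $v_N(\delta)\gtrsim(c_1\delta)^d$ (hence no uniform upper bound on $\mathcal{N}$). The Frobenius lower envelope $B_F(1)\subset B_{\op}(1)$ is simply too pessimistic: the operator-norm ball contains not just a Frobenius ball of radius $1$ but, in a volume sense, a set of Frobenius radius $\asymp\sqrt{N}$. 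To fix this you need a sharper lower bound on $\mathrm{vol}(B_{\op}(1))$ of order $(c/\sqrt{N})^d$, not $(c/N)^d$. One concrete route is a Gaussian small-ball argument: with i.i.d.\ $\mathcal{N}(0,c_0^2/N)$ entries above the diagonal, concentration of the spectral radius gives $\Pr(\norm{A}_{\op}\leq 1)\geq 1/2$ for a suitable absolute $c_0$, while the Gaussian density never exceeds $(N/2\pi c_0^2)^{d/2}$, whence $\mathrm{vol}(B_{\op}(1))\geq\tfrac12(2\pi c_0^2/N)^{d/2}$. (A minor additional imprecision: the Jacobian of $\exp$ on $B_{\op}(\pi-\epsilon)$ is a product of $\asymp d$ factors each in $(c_0,1]$, so it lies between $c_0^d$ and $1$, not ``between absolute positive constants'' as written; this does not break the proof since $c_0^d$ is exactly the admissible form, but the phrasing obscures it.)
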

Hence we obtain that
\begin{align*}
\VV(\mathcal{S}_N,\norm{\cdot}_{\op},\delta) &\leq \VV\left(\mathcal{O}_N, \norm{\cdot}_{\op}, \frac{\delta}{4C_3}\right) + \VV\left([0,C_3]^N, \norm{\cdot}_{L^\infty}, \frac{\delta}{2}\right)\\
&\leq \frac{N(N-1)}{2}\log\frac{c_2}{\delta} + N\log\frac{2C_3}{\delta}\\
&\lesssim N^2\log\frac{1}{\delta}.
\end{align*}

Finally, we remark that if elements of the domain $\mathcal{S}_N$ are not constrained to be symmetric, we can alternatively consider the singular value decomposition and separately bound entropy of the two rotation components, giving the same result up to constants. \qed

\section{Details on Besov-type Spaces}

\subsection{Besov Space}\label{app:besov}

\subsubsection{Verification of Assumptions}\label{app:verifybesov}

We first give some background on wavelet decomposition. The decay rate $s=\alpha/d$ is intrinsic to the Besov space as shown by the following result, which allows us to translate between functions $f\in B_{p,q}^\alpha(\XX)$ and their B-spline coefficient sequences.

\begin{lemma}[\citet{Devore88}, Corollary 5.3]\label{thm:splineexpansion}
If $\alpha>d/p$ and $m>\alpha+1-1/p$, a function $f\in L^p(\XX)$ is in $B_{p,q}^\alpha(\XX)$ if and only if $f$ can be represented as
\begin{equation*}
f = \sum_{k=0}^\infty \sum_{\ell\in I_k^d} \tilde{\beta}_{k,\ell} \omega_{k,\ell}^d
\end{equation*}
such that the coefficients satisfy
\begin{equation*}
\norm{\tilde{\beta}}_{b_{p,q}^\alpha} := \Bigg[ \sum_{k=0}^\infty \Bigg[2^{k(\alpha-d/p)} \bigg(\sum_{\ell\in I_k^d} |\tilde{\beta}_{k,\ell}|^p\bigg)^{1/p}\Bigg]^q\Bigg]^{1/q} <\infty,
\end{equation*}
with appropriate modifications if $p=\infty$ or $q=\infty$. Moreover, the two norms $\norm{\tilde{\beta}}_{b_{p,q}^\alpha}$ and $\norm{f}_{B_{p,q}^\alpha}$ are equivalent.
\end{lemma}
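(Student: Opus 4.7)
The plan is to prove both directions via a quasi-interpolation argument combined with the classical Jackson (direct) and Bernstein (inverse) inequalities for B-splines. For each resolution $k \geq 0$, I would construct a bounded linear quasi-interpolation projector $Q_k : L^p(\XX) \to S_k := \mathrm{span}\{\omega^d_{k,\ell} : \ell \in I_k^d\}$ using dual functionals $\lambda_{k,\ell}(f) = \int f\, \mu_{k,\ell}$ with $\mu_{k,\ell}$ compactly supported in $\XX$ and scaled from a fixed reference functional. By design, $Q_k$ reproduces polynomials of degree $<m$ on cells of side $O(2^{-k})$ and is uniformly bounded on $L^p$. This projector extracts the B-spline coefficients at level $k$ and drives both directions of the equivalence.

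For the \emph{forward} direction, I would telescope $f = Q_0 f + \sum_{k \geq 0}(Q_{k+1} - Q_k)f$; the Jackson estimate $\|f - Q_k f\|_{L^p} \lesssim \omega_{r,p}(f, 2^{-k})$ follows by subtracting a local polynomial of degree $<m$ and applying Fubini/H\"older. Each difference $(Q_{k+1} - Q_k)f$ lies in $S_{k+1}$ and expands in the B-spline basis with unique coefficients $\tilde{\beta}_{k+1,\ell}$. The local Riesz stability $\|\sum_\ell c_\ell \omega^d_{k,\ell}\|_{L^p} \asymp 2^{-kd/p}\|c\|_{\ell^p}$, a consequence of bounded overlap and $\|\omega^d_{k,\ell}\|_{L^p} \asymp 2^{-kd/p}$, then yields $2^{k(\alpha-d/p)}\|\tilde{\beta}_{k,\cdot}\|_{\ell^p} \lesssim 2^{k\alpha}\,\omega_{r,p}(f,2^{-k})$. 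Taking $\ell^q$ norms in $k$ and matching dyadic sums with the integral form of $|f|_{B^\alpha_{p,q}}$ gives $\|\tilde{\beta}\|_{b^\alpha_{p,q}} \lesssim \|f\|_{B^\alpha_{p,q}}$.

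For the \emph{converse}, given $\tilde{\beta} \in b^\alpha_{p,q}$, set $g_k = \sum_\ell \tilde{\beta}_{k,\ell}\omega^d_{k,\ell}$ and $f = \sum_k g_k$; the embedding hypothesis $\alpha > d/p$ ensures geometric decay of $\|g_k\|_{L^\infty}$ and hence absolute/uniform convergence to a continuous function. I would then apply the Bernstein estimate $\omega_{r,p}(g_k, t) \lesssim \min\{1,(2^k t)^r\}\|g_k\|_{L^p}$ to each piece, split the sum at $2^{-k}\sim t$, and use Hardy-type inequalities in the $\ell^q$ sum to recover $t^{-\alpha}\omega_{r,p}(f,t) \in L^q(\mathrm{d}t/t)$ with norm $\lesssim \|\tilde{\beta}\|_{b^\alpha_{p,q}}$, giving $\|f\|_{B^\alpha_{p,q}} \lesssim \|\tilde{\beta}\|_{b^\alpha_{p,q}}$ and completing the equivalence.

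The main obstacle is constructing $Q_k$ compatibly with the boundary of $\XX = [0,1]^d$ and with the index set $I_k^d = \prod_i [-m:2^{k_i}]$: near the boundary, tensor B-splines do not reproduce full polynomials in the standard way, so the dual functionals must be chosen carefully (e.g.\ via a Chui--Wang dual basis or the construction in \citet{Devore88}) to retain both locality and exact polynomial reproduction across all cells. A secondary subtlety is the $p = \infty$ or $q = \infty$ case, which requires replacing $\ell^p$ and $\ell^q$ sums by suprema throughout but leaves the proof structure unchanged.
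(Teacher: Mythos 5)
The paper does not prove this lemma at all --- it is stated with a direct citation to \citet{Devore88}, Corollary 5.3, and the verification in Appendix~\ref{app:verifybesov} simply uses the conclusion. Your sketch reconstructs the standard quasi-interpolation argument that is indeed how the cited result is proved (see also DeVore \& Lorentz, \emph{Constructive Approximation}, Ch.~12), so it is the same route the paper implicitly relies on rather than a genuinely different one. One small technical inaccuracy: the Bernstein estimate for a spline $g_k\in S_k$ of degree $m$ should carry the exponent $m+1/p$, i.e.\ $\omega_{r,p}(g_k,t)\lesssim\min\{1,(2^kt)^{m+1/p}\}\|g_k\|_{L^p}$, reflecting the intrinsic smoothness of degree-$m$ splines, rather than the exponent $r=\lfloor\alpha\rfloor+1$ you wrote; this is precisely where the hypothesis $m>\alpha+1-1/p$ enters to make the dyadic Hardy-type summation converge, and with that correction the converse direction goes through as you describe. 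The boundary-adaptation issue you flag for $Q_k$ on $[0,1]^d$ is the genuine technical subtlety of the DeVore--Popov construction, and your proposed resolution (extension/dual-functional construction preserving locality and polynomial reproduction) is the right one.
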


In particular, this implies that for any $f\in \UU(B_{p,q}^\alpha(\XX))$ the $p$-norm average of $\tilde{\beta}_{k,\ell}$ for $\ell\in I_k^d$ at resolution $k$ is bounded as
\begin{equation*}
\Bigg(\frac{1}{|I_k^d|}\sum_{\ell\in I_k^d} |\tilde{\beta}_{k,\ell}|^p\Bigg)^{1/p} \lesssim (2^{-kd})^{1/p}\cdot 2^{k(d/p-\alpha)} \norm{f}_{B_{p,q}^\alpha} \leq 2^{-k\alpha},
\end{equation*}
and the coefficients $\beta_{k,\ell} = 2^{-kd/2}\tilde{\beta}_{k,\ell}$ w.r.t. the scaled basis $(2^{kd/2}\omega_{k,\ell}^d)_{k,\ell}$ satisfy
\begin{equation}\label{eqn:betaball}
\Bigg(\frac{1}{|I_k^d|}\sum_{\ell\in I_k^d} |\beta_{k,\ell}|^p\Bigg)^{1/p} \lesssim (2^{kd})^{-\alpha/d-1/2}.
\end{equation}
Thus it is natural in a probabilistic sense to assume $\E{\beta_{k,\ell}^p}^{1/p}\lesssim (2^{kd})^{-\alpha/d-1/2}$. This will be the case if for instance we sample $(\beta_{k,\ell})_{\ell\in I_k^d}$ uniformly from the $p$-norm ball \eqref{eqn:betaball}. This matches Assumption \ref{ass:newbesov} up to the logarithmic factor and hence the rate is nearly tight in variance, even though \eqref{eqn:betaball} only applies to the average over locations rather than each coefficient explicitly. See also the discussion in Lemma 2 of \citet{Suzuki19}.

\paragraph{Assumption \ref{ass:basis}.} We take $m$ to be even for simplicity. The wavelet system $(\omega_{K,\ell}^d)_{\ell\in I_K^d}$ at each resolution $K$ is linearly independent; for any $g\in L^2(\XX)$ that can be expressed as
\begin{equation*}
g = \sum_{\ell\in I_K^d} \beta_{K,\ell} 2^{Kd/2} \omega_{K,\ell}^d
\end{equation*}
we have the quasi-norm equivalence \citep[2.15]{Dung11b}
\begin{equation*}
\norm{g}_2 \asymp 2^{-Kd/2} \bigg(\sum_{\ell\in I_K^d} 2^{Kd}\beta_{K,\ell}^2 \bigg)^{1/2} = \norm{(\beta_{K,\ell})_{\ell\in I_K^d}}_2
\end{equation*}
which implies that the covariance matrix $\EE{x\sim \Unif([0,1]^d)}{\psi_{\ubar{N}:\bar{N}}^\circ(x)\psi_{\ubar{N}:\bar{N}}^\circ(x)^\top}$ is bounded above and below. Since we assume $\PP_{\XX}$ has Lebesgue density bounded above and below, it follows that $\Sigma_{\Psi,N}$ is uniformly bounded above and below for all $K\geq 0$.

In contrast, any B-spline at a lower resolution $k<K$ can be exactly expressed as a linear combination of elements of $(\omega_{K,\ell}^d)_{\ell\in I_K^d}$ by repeatedly applying the following relation.

\begin{lemma}[refinement equation]\label{thm:refine}
For even $m$ and $r=(r_1,\cdots,r_d)^\top$, $\boldsymbol{1}=(1,\cdots,1)^\top\in \RR^d$ it holds that
\begin{equation}\label{eqn:refine}
\omega_{k,\ell}^d = \sum_{r_1,\cdots,r_d=0}^m 2^{(-m+1)d} \prod_{i=1}^d \binom{m}{r_i}\cdot \omega_{k+1,2\ell+r-\frac{m}{2}\boldsymbol{1}}^d.
\end{equation}
\end{lemma}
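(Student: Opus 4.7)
My plan is to reduce the $d$-dimensional refinement identity to its univariate analogue via the tensor product structure, and then establish the 1D refinement by Fourier analysis.

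Since $\omega_{k,\ell}^d(x) = \prod_{i=1}^d \iota_m(2^k x_i - \ell_i)$, the identity factors across coordinates. The first step is to derive the one-dimensional two-scale relation
$$\iota_m(u) = \sum_{r=0}^m 2^{-m+1}\binom{m}{r}\, \iota_m(2u - r + m/2).$$
Applying this in each coordinate (with $u = 2^k x_i - \ell_i$, so that $2u = 2^{k+1}x_i - 2\ell_i$) and expanding the resulting $d$-fold product over the multi-index $r = (r_1,\ldots,r_d) \in \{0,\ldots,m\}^d$ produces exactly the stated formula, with the accumulated prefactor $2^{(-m+1)d}$, the $d$-fold product of binomial coefficients, and the tensor-product dilate recognized as $\omega_{k+1,\, 2\ell + r - (m/2)\boldsymbol{1}}^d(x)$.

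For the 1D refinement I would work on the Fourier side. A direct computation gives $\hat{\iota}(\xi) = (1 - e^{-i\xi})/(i\xi) = e^{-i\xi/2}\sin(\xi/2)/(\xi/2)$, and since $\iota_m$ is a convolution power of $\iota$, its Fourier transform factors accordingly. The dilation ratio $\hat{\iota}_m(\xi)/\hat{\iota}_m(\xi/2)$ simplifies via the half-angle identity $\sin(\xi/2) = 2\sin(\xi/4)\cos(\xi/4)$ (equivalently $1 - e^{-i\xi} = (1-e^{-i\xi/2})(1+e^{-i\xi/2})$) to a power of $(1+e^{-i\xi/2})/2$, and after absorbing the centering phase, to a power of $\cos(\xi/4)$. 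Expanding this power by the binomial theorem into exponentials $e^{i(m-2r)\xi/4}$ and matching against the Fourier transform $\tfrac{1}{2}e^{-is\xi/2}\hat{\iota}_m(\xi/2)$ of the dilate $\iota_m(2u - s)$ identifies the shifts $s = r - m/2$ and the coefficients $2^{-m+1}\binom{m}{r}$. Evenness of $m$ (noted just before the lemma) guarantees $m/2 \in \ZZ$ so that all shifted indices remain in the lattice.

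The main obstacle is the careful bookkeeping of the centering shift $-r + m/2$: the refinement pattern is symmetric about $0$, naturally adapted to the centered B-spline, whereas $\iota_m$ as defined by the convolution has asymmetric support, so the centering phase must be tracked through the binomial expansion to produce the correct shifts. An alternative elementary derivation begins from the two-scale identity $\iota(x) = \iota(2x) + \iota(2x-1)$, takes its convolution power, and applies the binomial theorem in the convolution algebra together with the scaling rule $(f(2\cdot)*g(2\cdot))(x) = \tfrac{1}{2}(f*g)(2x)$; this bypasses Fourier analysis but requires the same final translation by $m/2$ to match the paper's centered indexing convention.
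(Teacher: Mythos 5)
Your proof is structurally identical to the paper's: factor the $d$-dimensional B-spline through the tensor product, apply a one-dimensional two-scale relation in each coordinate, and expand the resulting product over the multi-index $r$. The paper simply quotes the one-dimensional relation from equation (2.21) of \citet{Dung11b}; you rederive it on the Fourier side. That substitution is perfectly legitimate and makes the ingredient self-contained, so the two arguments differ only in provenance of the 1D identity.

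There is, however, a gap in your 1D derivation that you have half-noticed but not resolved. Expanding $\cos^{m}(\xi/4)$ by the binomial theorem presupposes that the two-scale symbol of $\iota_m$ has exponent $m$, i.e.\ that $\iota_m$ is the \emph{$m$-fold} convolution $\iota^{*m}$ re-centered at the origin. But the paper defines $\iota_m = \iota^{*(m+1)}$ (the underbrace counts $m+1$ factors, and Lemma \ref{thm:splineapprox} correspondingly assumes support $[0,m+1]^d$). Carrying your Fourier computation through with that definition gives the symbol $\bigl((1+e^{-i\xi/2})/2\bigr)^{m+1}$ and hence
\[
\iota_m(u) \;=\; 2^{-m}\sum_{j=0}^{m+1}\binom{m+1}{j}\,\iota_m(2u-j),
\]
which has $m+2$ terms with weights $\binom{m+1}{j}$ and no half-integer shift --- not the $m+1$ terms with $\binom{m}{r}$ and shift $r-m/2$ claimed in \eqref{eqn:refine}. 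Indeed \eqref{eqn:refine} fails for $\iota_m=\iota^{*(m+1)}$: at $m=2$, $u=3/2$ the left side is $\iota_2(3/2)=3/4$ while the right side is $\tfrac12\iota_2(2)=1/4$. The step where you ``absorb the centering phase'' is exactly where the convention silently changes --- discarding the $e^{-i\xi/4}$ factor is equivalent to redefining $\iota_m$ as the centered spline $\iota^{*m}(\cdot+m/2)$, for which \eqref{eqn:refine} does hold. The same mismatch is already latent in the paper, since Dung's (2.21) is written for the centered order-$m$ spline rather than $\iota^{*(m+1)}$, so you are not introducing an error so much as inheriting one; but a self-contained proof should state which convention it actually uses and reconcile it with the definition of $\iota_m$ given earlier in the section.
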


\begin{proof}
The relation for one-dimensional wavelets is given in equation (2.21) of \citet{Dung11b},
\begin{equation*}
\iota_m(x) = 2^{-m+1} \sum_{r=0}^m \binom{m}{r} \iota_m\left(2x-r+\frac{m}{2}\right),
\end{equation*}
from which it follows that
\begin{align*}
\omega_{k,\ell}^d(x) &= \prod_{i=1}^d \iota_m(2^k x_i-\ell_i)\\
& = \sum_{r_1,\cdots,r_d=0}^m 2^{(-m+1)d} \prod_{i=1}^d \binom{m}{r_i} \iota_m\left(2^{k+1}x_i-2\ell_i -r_i+\frac{m}{2}\right)\\
&= \sum_{r_1,\cdots,r_d=0}^m 2^{(-m+1)d} \prod_{i=1}^d \binom{m}{r_i}\cdot \omega_{k+1,2\ell+r-\frac{m}{2}\boldsymbol{1}}^d(x)
\end{align*}
as was to be shown.
\end{proof}
Therefore we select all B-splines at a fixed resolution $K$ to approximate the target tasks,
\begin{equation*}
N = \abs{I_K^d} = (m+1+2^K)^d \asymp 2^{Kd}
\end{equation*}
and
\begin{equation*}
\ubar{N} = \sum_{k=0}^{K-1} \abs{I_k^d} +1 \asymp N, \quad \bar{N} = \sum_{k=0}^K \abs{I_k^d} \asymp N.
\end{equation*}
It is straightforward to see that $0\leq \omega_{k,\ell}^d(x)\leq 1$ for all $x\in\XX$ and moreover the B-splines (extended to all $\ell\in\ZZ^d$) form a partition of unity of $\RR^d$ at all resolutions:
\begin{equation*}
\sum_{\ell\in\ZZ^d} \omega_{k,\ell}^d(x) \equiv 1, \quad\forall x\in\RR^d, \quad\forall k\geq 0.
\end{equation*}
Then for all $x\in\XX$ we have the bound
\begin{equation*}
\sum_{j=\ubar{N}}^{\bar{N}} \psi_j^{\circ}(x)^2 = \sum_{\ell\in I_K^d} 2^{Kd} \omega_{K,\ell}^d(x)^2 \leq 2^{Kd} \sum_{\ell\in \ZZ^d} \omega_{K,\ell}^d(x) =2^{Kd} \lesssim N,
\end{equation*}
and hence \eqref{eqn:sparse} holds with $r=1/2$.

\paragraph{Assumption \ref{ass:decay}.} For any $\beta\in\supp\PP_\beta$ we have that
\begin{align*}
\norm{F_\beta^\circ}_{L^\infty(\PP_{\XX})} &\leq\sum_{k=0}^\infty \Bigg\lVert\sum_{\ell\in I_k^d} \beta_{k,\ell}\cdot 2^{kd/2} \omega_{k,\ell}^d\Bigg\rVert_{L^\infty(\PP_{\XX})}\\
&\leq \sum_{k=0}^\infty 2^{kd/2} \max_{\ell\in I_k^d} \abs{\beta_{k,\ell}}\cdot \Bigg\lVert \sum_{\ell\in I_k^d} \omega_{k,\ell}^d \Bigg\rVert_{L^\infty(\XX)}\\
&\leq \sum_{k=0}^\infty 2^{kd(1/2+1/p)} \Bigg(\frac{1}{|I_k^d|} \sum_{\ell\in I_k^d} |\beta_{k,\ell}|^p\Bigg)^{1/p} \Bigg\lVert \sum_{\ell\in I_k^d} \omega_{k,\ell}^d \Bigg\rVert_{L^\infty(\XX)}\\
&\lesssim \sum_{k=0}^\infty 2^{kd(1/2+1/p)}\cdot (2^{kd})^{-\alpha/d-1/2}\norm{F_\beta^\circ}_{B_{p,q}^\alpha} \\
&\lesssim (1-2^{d/p-\alpha})^{-1} =:B.
\end{align*}
Furthermore, the convergence rate of the truncated approximation $F_{\beta,\bar{N}}^\circ$ is determined by the decay rate of $\beta$ in Lemma \ref{thm:splineexpansion} as follows (it does not matter whether we bound $F_{\beta,\bar{N}}^\circ$ or $F_{\beta,N}^\circ$ since $\bar{N}\asymp N$). We consider a truncation of all resolutions lower than $K$ so that $\bar{N}, N\asymp 2^{Kd}$. Then it holds that
\begin{align*}
\norm{F^\circ_\beta - F^\circ_{\beta,\bar{N}}}_{L^2(\PP_{\XX})}^2 &= \int_{\XX} \bigg(\sum_{j=\bar{N}+1}^\infty \beta_j\psi_j^\circ(x)\bigg)^2 \PP_{\XX}(\rd x) = \sum_{j,k=\bar{N}+1}^\infty \beta_j\beta_k \mathbb{E}_x[\psi^\circ_j(x) \psi^{\circ}_k(x)]\\
&\leq \lim_{M\to\infty} C_2\norm{\beta_{\bar{N}:M}}^2 =C_2 \sum_{k=K}^\infty \sum_{\ell\in I_k^d} \beta_{k,\ell}^2\\
&\lesssim \sum_{k=K}^\infty |I_k^d|^{1-2/p} \Bigg(\sum_{\ell\in I_k^d} |\beta_{k,\ell}|^p\Bigg)^{2/p} \lesssim \sum_{k=K}^\infty 2^{kd}\cdot (2^{kd})^{-2\alpha/d-1}\\
&= \frac{2^{-2\alpha K}}{1-2^{-2\alpha}} \asymp N^{-2\alpha/d},
\end{align*}
where for the last two inequalities we have used the inequality $\norm{z}_2 \leq D^{1/2-1/p}\norm{z}_p$ for $z\in\RR^D$ and $p\geq 2$ in conjunction with \eqref{eqn:betaball}. Thus our choice of $s=\alpha/d$ is justified. Under this choice, \eqref{eqn:newass} directly implies \eqref{eqn:decay} as
\begin{equation*}
\EE{\beta}{\beta_{K,\ell}^2} \lesssim 2^{-Kd(2s+1)} K^{-2} \asymp \bar{N}^{-2s-1} (\log\bar{N})^{-2}
\end{equation*}
holds for the basis elements at each resolution $K$, that is for those numbered between $\ubar{N}$ and $\bar{N}$.

\begin{rmk}\label{rmk:adaptiveapp}
When $1\leq p<2$, the truncation up to $\bar{N}$ does not suffice to achieve the $N^{-2\alpha/d}$ approximation rate, and basis elements must be judiciously selected from a wider resolution range. More concretely, a size $N$ subset of all wavelets up to resolution $K'=\lceil K(1+\nu^{-1})\rceil$ where $\nu=p\alpha/2d-1/2>0$ must be used \citep[Lemma 2]{Suzuki21}. Hence the exponent is a factor of $1+\nu^{-1}$ worse w.r.t. $N'\asymp 2^{K'd}$, leading to the inevitable suboptimal rate.
\end{rmk}

To show boundedness of $\Tr(\Sigma_{\bar{\beta},N})$, we analyze the composition of the aggregated coefficients $\bar{\beta}$ using the following result.
\begin{cor}\label{thm:refinealpha}
For any $0\leq k<k'$ there exists constants $\gamma_{k,k',\ell,\ell'}\geq 0$ for $\ell\in I_k^d$, $\ell'\in I_{k'}^d$ such that
\begin{equation*}
\sum_{\ell\in I_k^d} \beta_{k,\ell} 2^{kd/2}\omega_{k,\ell}^d = \sum_{\ell'\in I_{k'}^d} \bar{\beta}_{k',\ell'} 2^{k'd/2}\omega_{k',\ell'}^d, \quad \bar{\beta}_{k',\ell'}  = \sum_{\ell\in I_k^d} \gamma_{k,k',\ell,\ell'} \beta_{k,\ell}
\end{equation*}
holds for all $(\beta_{k,\ell})_{\ell\in I_k^d}$. Moreover, it holds that
\begin{equation*}
\sum_{\ell\in I_k^d} \gamma_{k,k',\ell,\ell'} \leq 2^{(k-k')d/2}, \quad \sum_{\ell'\in I_{k'}^d} \gamma_{k,k',\ell,\ell'} \leq 2^{(k'-k)d/2}.
\end{equation*}
\end{cor}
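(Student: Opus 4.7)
The plan is to prove Corollary \ref{thm:refinealpha} by iterating the refinement identity of Lemma \ref{thm:refine} exactly $k'-k$ times. A single application expands any order-$m$ B-spline at resolution $j$ into a nonnegative combination of B-splines at resolution $j+1$ whose coefficients $2^{(-m+1)d}\prod_i\binom{m}{r_i}$, $r\in\{0,\dots,m\}^d$, sum to $2^{(-m+1)d}\cdot 2^{md}=2^d$. Composing $k'-k$ such steps produces nonnegative constants $c_{k,k',\ell,\ell'}\geq 0$ satisfying
$$\omega_{k,\ell}^d = \sum_{\ell'\in\ZZ^d} c_{k,k',\ell,\ell'}\,\omega_{k',\ell'}^d \quad\text{on }\RR^d, \qquad \sum_{\ell'\in\ZZ^d} c_{k,k',\ell,\ell'} = 2^{(k'-k)d}.$$
I would then set $\gamma_{k,k',\ell,\ell'} := 2^{(k-k')d/2}\, c_{k,k',\ell,\ell'}$, which immediately rescales the first identity into the desired expansion of $2^{kd/2}\omega_{k,\ell}^d$; exchanging the order of summation in $\sum_\ell \beta_{k,\ell}\cdot 2^{kd/2}\omega_{k,\ell}^d$ then yields $\bar{\beta}_{k',\ell'} = \sum_{\ell\in I_k^d}\gamma_{k,k',\ell,\ell'}\beta_{k,\ell}$ after discarding $\ell'\notin I_{k'}^d$ (see below).

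The bound on $\sum_{\ell'\in I_{k'}^d}\gamma_{k,k',\ell,\ell'}$ is immediate: restricting the total mass $2^{(k-k')d/2}\cdot 2^{(k'-k)d}=2^{(k'-k)d/2}$ to $\ell'\in I_{k'}^d$ can only decrease it. For the bound on $\sum_{\ell\in I_k^d}\gamma_{k,k',\ell,\ell'}$ I would invoke the partition of unity $\sum_{\ell\in I_k^d}\omega_{k,\ell}^d\equiv 1$ on $(0,1)^d$: the full-space identity $\sum_{\ell\in\ZZ^d}\omega_{k,\ell}^d\equiv 1$ is standard for tensor-product B-splines, and any $\omega_{k,\ell}^d$ with $\ell\notin I_k^d$ has support disjoint from $(0,1)^d$. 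Substituting the refinement expansion into this identity gives
$$\sum_{\ell'\in I_{k'}^d}\Bigl(\sum_{\ell\in I_k^d} c_{k,k',\ell,\ell'}\Bigr)\omega_{k',\ell'}^d \;\equiv\; 1 \;\equiv\; \sum_{\ell'\in I_{k'}^d}\omega_{k',\ell'}^d \quad\text{on }(0,1)^d,$$
where I have already dropped terms with $\ell'\notin I_{k'}^d$ that vanish there. Linear independence of $\{\omega_{k',\ell'}^d\}_{\ell'\in I_{k'}^d}$ on $[0,1]^d$---a consequence of the norm equivalence recalled in Appendix \ref{app:verifybesov} together with the lower density bound on $\PP_{\XX}$---then forces $\sum_{\ell\in I_k^d} c_{k,k',\ell,\ell'} = 1$ for each $\ell'\in I_{k'}^d$, and hence $\sum_{\ell\in I_k^d}\gamma_{k,k',\ell,\ell'} = 2^{(k-k')d/2}$, which is even tighter than the claimed inequality.

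The main delicacy is the boundary behaviour of the refinement: iterating Lemma \ref{thm:refine} on $\omega_{k,\ell}^d$ for $\ell$ near the edge of $I_k^d$ can produce refined indices $\ell'$ lying outside $I_{k'}^d$, so the $\RR^d$-identity for $\omega_{k,\ell}^d$ contains wavelets $\omega_{k',\ell'}^d$ with support disjoint from $(0,1)^d$ that must be discarded when passing to the finite basis $\{2^{k'd/2}\omega_{k',\ell'}^d\}_{\ell'\in I_{k'}^d}$. The argument sketched above handles this cleanly, since the discarded terms vanish on $\supp\PP_{\XX}\subseteq[0,1]^d$ and therefore do not appear in $\bar{\beta}_{k',\ell'}$; the remainder of the proof is combinatorial bookkeeping of the binomial factors.
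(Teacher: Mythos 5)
Your overall strategy---iterate the refinement equation of Lemma~\ref{thm:refine} $k'-k$ times and then scale---is the same as the paper's, which proves the more general Proposition~\ref{thm:refineprop} by induction on $\norm{k'-k}_1$ and derives Corollary~\ref{thm:refinealpha} as a special case. Your derivation of the \emph{second} bound (restricting the total mass $\sum_{\ell'\in\ZZ^d} c_{k,k',\ell,\ell'}=2^{(k'-k)d}$ to $\ell'\in I_{k'}^d$) is essentially equivalent to the paper's (summing the binomial coefficients $\sum_{r}\binom{m}{r}=2^m$ per step). Where you genuinely diverge is the \emph{first} bound: the paper obtains $\sum_{\ell\in I_k^d}\gamma_{k,k+e_j,\ell,\ell'}\leq 2^{-1/2}$ directly from the parity identity $\sum_{\ell_j\in\ZZ}\binom{m}{\ell'_j-2\ell_j+m/2}\leq 2^{m-1}$ (since the argument hops over even or odd $r$ only), then propagates it through the induction; you instead invoke the partition of unity $\sum_\ell\omega_{k,\ell}^d\equiv 1$ together with linear independence of $\{\omega_{k',\ell'}^d\}_{\ell'\in I_{k'}^d}$ to deduce the exact identity $\sum_{\ell\in I_k^d}c_{k,k',\ell,\ell'}=1$. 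Your argument is arguably more conceptual and yields an equality rather than an inequality, which is a nice observation.

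One caveat worth noting: the linear-independence step is delicate precisely at the boundary indices $\ell'$ with $\ell'_i=2^{k'_i}$, for which $\omega_{k',\ell'}^d$ has support $[1,1+(m+1)2^{-k'_i}]$ in coordinate $i$ and therefore vanishes a.e.\ on $(0,1)^d$. For such $\ell'$ the comparison-of-coefficients step does not constrain $\sum_\ell c_{k,k',\ell,\ell'}$ at all, so the conclusion $\sum_\ell c=1$ does not follow from linear independence there. (This boundary ambiguity is a quirk of the paper's index convention $I_k^d=\prod_i[-m:2^{k_i}]$, which includes these degenerate splines; the norm-equivalence cited in Appendix~\ref{app:verifybesov} really requires the trimmed index set.) The paper's binomial-parity argument is agnostic to the index range and bounds $\sum_{\ell\in\ZZ^d}$ directly, so it covers these boundary $\ell'$ without any extra care. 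The same exact parity identity you implicitly use for mass conservation ($\sum_{r\text{ even}}\binom{m}{r}=2^{m-1}$) would patch your argument in a single line, so this is a small gap, but it is a real one in the proof as written.
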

The statement follows directly from the more general Proposition \ref{thm:refineprop}, stated and proved in Appendix \ref{app:refineprop} below, by restricting to wavelets with uniform resolution across dimensions. Using Corollary \ref{thm:refinealpha}, we can refine each lower resolution component of $F_\beta^\circ$ to resolution $K$:
\begin{align*}
F_{\beta,\bar{N}}^\circ = \sum_{j=1}^{\bar{N}} \beta_j\psi_j^\circ = \sum_{k=0}^K \sum_{\ell\in I_k^d} \beta_{k,\ell} 2^{kd/2}\omega_{k,\ell}^d = \sum_{k=0}^K \sum_{\ell'\in I_K^d}\sum_{\ell\in I_k^d} \gamma_{k,K,\ell,\ell'}\beta_{k,\ell} 2^{Kd/2} \omega_{K,\ell'}^d.
\end{align*}
Thus each aggregated coefficient, indexed here by $\ell\in I_K^d$, can be expressed as
\begin{equation*}
\bar{\beta}_{K,\ell} = \sum_{k=0}^K \sum_{\ell\in I_k^d} \gamma_{k,K,\ell,\ell'} \beta_{k,\ell}.
\end{equation*}
Hence it follows that
\begin{align*}
\EE{\beta}{\bar{\beta}_{K,\ell}^2} &= \sum_{k=0}^K \sum_{\ell\in I_k^d} \gamma_{k,K,\ell,\ell'}^2 \EE{\beta}{\beta_{k,\ell}^2} \lesssim \sum_{k=0}^K \Bigg(\sum_{\ell\in I_k^d} \gamma_{k,K,\ell,\ell'}\Bigg)^2 2^{-k(2\alpha+d)} k^{-2}\\
&\leq \sum_{k=0}^K 2^{(k-K)d}\cdot 2^{-k(2\alpha+d)} k^{-2}\\
&\lesssim 2^{-Kd} \cdot\sum_{k=0}^K 2^{-2k\alpha}k^{-2} \asymp N^{-1},
\end{align*}
from which we conclude that $\Tr(\Sigma_{\bar{\beta},N}) =\sum_{\ell\in I_K^d} \EE{\beta}{\bar{\beta}_{K,\ell}^2} \lesssim N\cdot N^{-1}$ is uniformly bounded.

Finally for the verification of Assumption \ref{ass:close}, see Appendix \ref{app:verifydnn}.

\subsubsection{Proof of Lemma \ref{thm:dnnapprox}}\label{app:verifydnn}

We use the following result to approximate each wavelet $\omega_{K,\ell}^d$ at resolution $K$ with the class \eqref{eqn:fnclip}. The proof, in turn, relies on the construction by \citet{Yarotsky16} of DNNs which efficiently approximates the multiplication operation.

\begin{lemma}[\citet{Suzuki19}, Lemma 1]\label{thm:splineapprox}
For all $\delta>0$, there exists a ReLU neural network $\tilde{\omega}\in\FF_\textup{DNN}(L,W,S,M)$ with
\begin{align*}
&L=3+2\left\lceil\log_2 \left(3^{d\vee m}(1+dm^{-1/2}(2e)^{m+1}) \delta^{-1}\right) + 5\right\rceil \lceil\log_2(d\vee m)\rceil,\\
&W=W_0=6dm(m+2)+2d,\qquad S =LW^2,\qquad M= 2(m+1)^m
\end{align*}
satisfying $\supp\tilde{\omega}\subseteq [0,m+1]^d$ and $\norm{\omega_{0,0}^d - \tilde{\omega}}_{L^\infty(\XX)} \leq\delta$.
\end{lemma}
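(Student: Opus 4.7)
The plan is to construct $\tilde{\omega}$ in three stages: (i) an affine layer that produces $(x_i - k)_+$ for every coordinate $i \in [d]$ and shift $k \in \{0, 1, \ldots, m+1\}$, (ii) a block of parallel networks computing approximate monomials $(x_i - k)_+^m$ via Yarotsky-type multiplication gadgets, assembled linearly into approximations $\widetilde{\iota_m}(x_i)$ of the one-dimensional B-splines, and (iii) a hierarchical (binary-tree) multiplication network that combines the $d$ coordinatewise approximations into an approximation of the tensor product $\prod_{i=1}^d \iota_m(x_i)$.

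First I would invoke the explicit truncated-power representation
\begin{equation*}
\iota_m(x) = \frac{1}{m!} \sum_{k=0}^{m+1} (-1)^k \binom{m+1}{k} (x-k)_+^m,
\end{equation*}
which expresses $\iota_m$ as a fixed linear combination of $m+2$ truncated powers of degree $m$. Each such $(x-k)_+^m$ is obtained by first realizing $(x-k)_+$ with a single ReLU unit (no error), then iterating Yarotsky's approximate multiplication network $\lceil \log_2 m \rceil$ times; Yarotsky's construction produces a multiplication of two scalars in $[0,1]$ with accuracy $\eta$ using depth $O(\log \eta^{-1})$, constant width, and $O(1)$ weights. Running $d(m+2)$ of these monomial subnetworks in parallel and forming the linear combination above yields a block of width $6dm(m+2) + 2d$ (the extra $2d$ holds the input $x$ and its shifts across layers via identity channels, which ReLU can encode as $x = (x)_+ - (-x)_+$) and depth $O(\log m + \log \eta^{-1})$ that outputs $\widetilde{\iota_m}(x_i)$ for each $i$ with error $\delta_1$.

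Next I would pass these $d$ approximations through a binary-tree network of $\lceil \log_2 d \rceil$ Yarotsky multiplication layers, each of width $O(d)$ and depth $O(\log \eta^{-1})$, giving a final product approximation $\tilde{\omega}$. Since all intermediate quantities lie in $[0,1]$, a standard telescoping argument bounds the propagated error by
\begin{equation*}
\norm{\omega_{0,0}^d - \tilde{\omega}}_{L^\infty} \;\lesssim\; d \eta \;+\; d m^{-1/2}(2e)^{m+1} \delta_1,
\end{equation*}
where the factor $(2e)^{m+1}$ accounts for Stirling-type amplification through the $m$th power (derivative of $y^m$ on $[0,1]$ is at most $m$, and the cascaded multiplication errors compose with a standard geometric sum). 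Choosing $\eta$ and $\delta_1$ of order $\delta \cdot (3^{d \vee m}(1+dm^{-1/2}(2e)^{m+1}))^{-1}$ then matches the exact depth formula $L = 3 + 2\lceil \log_2(3^{d\vee m}(1+dm^{-1/2}(2e)^{m+1})\delta^{-1}) + 5 \rceil \lceil \log_2(d \vee m) \rceil$ stated in the lemma, where the outer $\lceil \log_2(d \vee m)\rceil$ factor arises because both the monomial stage (depth $\asymp \log_2 m$) and the tensor product stage (depth $\asymp \log_2 d$) can be absorbed into the same logarithmic bound.

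The main obstacle will be careful bookkeeping of the architectural constants. The width $W = 6dm(m+2) + 2d$ must be matched exactly to the bottleneck layer feeding the monomial gadgets; the norm bound $M = 2(m+1)^m$ comes from combining the binomial coefficients $\binom{m+1}{k}/m!$ in the explicit B-spline formula with the $O(1)$ weights of Yarotsky's multiplication, and requires checking that no internal weight ever exceeds this. The sparsity bound $S = LW^2$ is free if we allow dense connectivity between consecutive width-$W$ layers. The support condition $\supp \tilde{\omega} \subseteq [0, m+1]^d$ needs a brief verification: since each $\widetilde{\iota_m}(x_i)$ can be arranged to vanish identically outside $[0, m+1]$ (the truncated powers $(x_i - k)_+^m$ all vanish at $x_i = 0$, and one can post-compose with a ReLU-difference gadget zeroing out the region $x_i > m+1$), the product also vanishes whenever any coordinate leaves $[0, m+1]$. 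Once these constants are tracked, matching the exact depth expression in the statement is routine.
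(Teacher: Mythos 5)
The paper does not give a proof of this lemma; it is imported verbatim from \citet{Suzuki19} (Lemma 1), which in turn relies on the approximate-multiplication construction of \citet{Yarotsky16}. Your sketch is a faithful reconstruction of that argument: expanding the cardinal B-spline $\iota_m$ in truncated powers $(x-k)_+^m$, approximating each $m$th power by iterated Yarotsky multiplication gadgets, then combining the $d$ coordinate factors through a binary multiplication tree, with a final ReLU gadget to enforce $\supp\tilde\omega\subseteq[0,m+1]^d$, is exactly the structure of the cited proof, and you correctly identify where the constants $W_0$, $M$, and the $3^{d\vee m}(1+dm^{-1/2}(2e)^{m+1})$ factor must come from. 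One accounting point to tighten: running the power stage (depth $\asymp\lceil\log_2 m\rceil$) and the tensor-product stage (depth $\asymp\lceil\log_2 d\rceil$) in sequence naively yields $\lceil\log_2 m\rceil + \lceil\log_2 d\rceil$ multiplication layers, not $\lceil\log_2(d\vee m)\rceil$; the leading factor of $2$ in the stated depth formula does dominate this sum (since $2\lceil\log_2(d\vee m)\rceil\geq\lceil\log_2 m\rceil+\lceil\log_2 d\rceil$), but that should be checked explicitly rather than absorbed into hand-waving, as otherwise the stated constant might not be reproduced exactly.
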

Here, $\delta$ is also dependent on $N$.

Now consider $N$ identical copies of $\tilde{\omega}$ in parallel, where each module is preceded by the scaling $(x_i)_{i=1}^d\mapsto (2^K x_i-\ell_i)_{i=1}^d$ for $\ell\in I_K^d$ and whose output is scaled by $2^{Kd/2}$. In particular, these operations can be implemented by $K\lesssim\log N$ consecutive additional layers with norm bounded by a constant. Hence each module $\phi_{\ubar{N}}^*,\cdots, \phi_{\bar{N}}^*$ approximates the basis $2^{Kd/2}\omega_{K,\ell}^d$ with $2^{Kd/2}\delta \lesssim \sqrt{N}\delta$ accuracy, and substituting $\delta_N=\sqrt{N}\delta$ gives that
\begin{equation*}
\norm{\psi_j^\circ - \phi_j^*}_{L^\infty(\PP_{\XX})} \leq \delta_N, \quad \ubar{N}\leq j\leq \bar{N},
\end{equation*}
with $L\lesssim \log\delta^{-1} +\log N \lesssim \log\delta_N^{-1}+\log N$. Note that the sparsity $S$ is only multiplied by a factor of $N$ since different modules do not share any connections. Moreover the target basis has 2-norm bounded as
\begin{align*}
\norm{\psi_{\ubar{N}:\bar{N}}^\circ(x)}_2 \lesssim \Bigg(\sum_{\ell\in I_K^d} 2^{Kd}\omega_{K,\ell}(x)^2\Bigg)^{1/2} \leq \Bigg(2^{Kd}\sum_{j\in \ZZ^d} \omega_{K,\ell}(x)\Bigg)^{1/2} \asymp \sqrt{N},
\end{align*}
where we have again used the sparsity of $\omega_{k,\ell}^d$ at each resolution. Hence we may clip the magnitude of the vector output $\phi$ by $B_N'$ and the approximation guarantee remains unchanged.

To bound the covering number of $\FF_N$, we directly apply the following result.

\begin{lemma}[\citet{Suzuki19}, Lemma 3]\label{thm:coveringnumber}
The covering number of $\FF_\textup{DNN}$ is bounded as
\begin{equation*}
\mathcal{N}(\FF_\textup{DNN}(L,W,S,M),\norm{\cdot}_{L^\infty}, \epsilon) \leq \left(\frac{L(M\vee 1)^{L-1}(W+1)^{2L}}{\epsilon}\right)^{S}.
\end{equation*}
\end{lemma}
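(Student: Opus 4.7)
The plan is to follow the standard three-step template for bounding covering numbers of sparse parameterized function classes: enumerate sparsity patterns, cover the continuous parameter values within each pattern by a grid in $\RR^S$, and transport the parameter grid to a function-space $\epsilon$-cover via a sup-norm Lipschitz estimate of the map $\theta \mapsto f_\theta$.

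First I would count parameter slots. Across $\bW^{(1)},\dots,\bW^{(L)}$ and $b^{(1)},\dots,b^{(L)}$ the number of real-valued coordinates is $P \leq L(W+1)^2$. A sparsity-$S$ network is specified by a subset $\mathcal{I}\subseteq [P]$ with $|\mathcal{I}|\leq S$ together with values in $[-M,M]^{|\mathcal{I}|}$, so the number of admissible sparsity patterns is at most $\binom{P}{S}\leq P^S \leq (L(W+1)^2)^S$. Raising $(W+1)^2$ to the $S$-th power is what will eventually contribute the $(W+1)^{2L}$ factor when absorbed together with the depth-dependent terms from the Lipschitz bound.

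Second, I would establish a uniform Lipschitz bound for the forward map on inputs in $[0,1]^d$. Using the 1-Lipschitzness of the ReLU and the coarse activation bound $\norm{h_\ell}_\infty \lesssim (M(W+1))^{\ell}$, comparing two networks with parameters differing by at most $\delta$ in $\ell^\infty$ gives, with $\Delta_\ell:=\norm{h_\ell - h'_\ell}_\infty$, a recursion
\[
\Delta_\ell \leq MW\,\Delta_{\ell-1} + (W+1)\delta\,(M(W+1))^{\ell-1} + \delta.
\]
Unwinding yields $\norm{f_\theta - f_{\theta'}}_{L^\infty}\lesssim L(M\vee 1)^{L-1}(W+1)^{L}\,\delta$, so that choosing $\delta = \epsilon/[L(M\vee 1)^{L-1}(W+1)^L]$ converts a per-pattern hypercube grid of size $(2M/\delta)^S$ into an $\epsilon$-cover of the corresponding function slice.

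Multiplying the pattern count and the per-pattern grid size and absorbing the harmless residual factors of $M$ and low powers of $(W+1)$ into the $(W+1)^{2L}$ yields the stated bound. The main obstacle is keeping the layer-wise Lipschitz recursion tight: the naive bound $\norm{\bW^{(\ell)}v}_\infty\leq MW\norm{v}_\infty$ compounds at every layer, so one must carefully unroll both the activation envelope and the perturbation recursion to avoid inflating the exponent on $(W+1)$ or $M$ beyond what the statement accommodates, and make sure the $+\delta$ bias contribution is dominated by the weight-perturbation term before summing over layers.
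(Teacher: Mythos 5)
Your overall strategy---enumerate sparsity patterns, grid the active parameters, and transfer the parameter grid to a sup-norm cover via a Lipschitz estimate of $\theta\mapsto f_\theta$---is exactly the route taken in \citet{Schmidt20} (Lemma~5), on which Suzuki's Lemma~3 is based. Your Lipschitz recursion is set up correctly and the unrolled bound $\norm{f_\theta-f_{\theta'}}_{L^\infty}\leq L\,(M\vee 1)^{L-1}(W+1)^{L}\,\delta$ is right.

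There is, however, a real gap in the final ``absorbing the harmless residual factors'' step, and the remark that ``raising $(W+1)^2$ to the $S$-th power is what will eventually contribute the $(W+1)^{2L}$'' is a red herring ($(W+1)^{2S}\neq (W+1)^{2L}$). Tracing your arithmetic with $P\leq L(W+1)^2$ parameter slots, a per-pattern grid of size $(2M/\delta)^S$, and $\delta = \epsilon/[L(M\vee1)^{L-1}(W+1)^L]$, the total cover is
\begin{equation*}
\bigl(L(W+1)^2\bigr)^S\Bigl(\tfrac{2M}{\delta}\Bigr)^S
= \Bigl(\tfrac{2ML^2(M\vee 1)^{L-1}(W+1)^{L+2}}{\epsilon}\Bigr)^S.
\end{equation*}
Comparing this to the target $(L(M\vee1)^{L-1}(W+1)^{2L}/\epsilon)^S$, the ratio of the bases is $2ML/(W+1)^{L-2}$, which exceeds $1$ whenever $W$ is small relative to $M,L$ (for instance $W=1$, $L=3$, any $M\geq 1$). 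So your chain of inequalities does not actually land inside the stated bound. The $(W+1)^{2L}$ in the target arises because the standard argument deliberately uses the coarse parameter-slot count $\prod_\ell(p_\ell+1)\asymp (W+1)^{L}$ (Schmidt--Hieber's $V$) rather than the tighter $L(W+1)^2$, so that the cover factorizes as (Lipschitz $\asymp(W+1)^L$)$\,\times\,$(slot count $\asymp(W+1)^L$)$=(W+1)^{2L}$, and the residual $L,M$ factors fit in the remaining slack. As written, your proposal needs either that looser slot count, or an explicit additional hypothesis ensuring $2ML\leq (W+1)^{L-2}$, to genuinely establish the displayed inequality rather than a bound of the same flavor with extra polynomial factors.
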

Since clipping the magnitude of the outputs does not increase the covering number, we conclude:
\begin{align*}
\VV(\FF_N,\norm{\cdot}_{L^\infty},\epsilon) &\leq N \cdot \VV(\FF_\textup{DNN}(L,W,S,M),\norm{\cdot}_{L^\infty},\epsilon)\\
&\leq SN\log L + SLN \log M + 2SLN\log (W+1) + SN\log\frac{1}{\epsilon}\\
&\lesssim N\log\frac{N}{\delta_N} + N\log\frac{1}{\epsilon}.
\end{align*}
\qed

\subsubsection{Proof of Theorem \ref{thm:minimaxbesov}}\label{app:minimaxbesov}

By Lemma \ref{thm:entropy} and Lemma \ref{thm:dnnapprox}, the metric entropy of $\TT_N$ is bounded as
\begin{equation*}
\VV(\TT_N,\norm{\cdot}_{L^\infty},\epsilon) \lesssim N^2\log\frac{N}{\epsilon} + N\log\frac{N^2}{\delta_N\epsilon}.
\end{equation*}

Combining with Theorem \ref{thm:schmidt} and Proposition \ref{thm:approx} with $r=1/2$ and $s=\alpha/d$ gives
\begin{align*}
\bar{R}(\widehat{\Theta}) &\lesssim \frac{N}{n}\log{N}+\frac{N^{2}}{n^2}\log^2 N+ N^{-2\alpha/d} + N^2\delta_N^2 + \frac{1}{T}\left(N^2\log\frac{N}{\epsilon} + N\log\frac{N^2}{\delta_N\epsilon}\right) +\epsilon.
\end{align*}
Substituting $\delta_N\asymp N^{-1-\alpha/d}$ and $\epsilon\asymp N^{-2\alpha/d}$ yields the desired bound. \qed

\subsection{Anisotropic Besov Space}\label{app:anisotropic}

\subsubsection{Definitions and Results}

For $1\leq p,q\leq\infty$, directional smoothness $\alpha = (\alpha_1,\cdots,\alpha_d)\in\RR_{>0}^d$ and $r=\max_{i\leq d}\lfloor \alpha_i\rfloor +1$, we define $\norm{\cdot}_{B_{p,q}^\alpha} = \norm{\cdot}_{L^p} + \abs{\,\cdot\,}_{B_{p,q}^\alpha}$ where
\begin{equation*}
\abs{f}_{B_{p,q}^\alpha} := \begin{cases}
\left(\sum_{k=0}^\infty \big[2^k w_{r,p}(f,(2^{-k/\alpha_1}, \cdots, 2^{-k/\alpha_d}))\big]^q\right)^{1/q} & q<\infty \\ \sup_{k\geq 0} 2^k w_{r,p}(f,(2^{-k/\alpha_1}, \cdots, 2^{-k/\alpha_d})) & q=\infty.
\end{cases}
\end{equation*}
The anisotropic Besov space is defined as
\begin{equation*}
B_{p,q}^\alpha(\XX) = \{f\in L^p(\XX)\mid \norm{f}_{B_{p,q}^\alpha}< \infty\}.
\end{equation*}
The definition reduces to the usual Besov space if $\alpha_1=\cdots=\alpha_d$; see \citet{Vybiral06, Triebel11} for details. We also write $\overline{\alpha} = \max_i\alpha_i$, $\underline{\alpha} = \min_i\alpha_i$ and the harmonic mean smoothness as
\begin{equation*}
\widetilde{\alpha} := \Big(\sum_{i=1}^d \alpha_i^{-1}\Big)^{-1}.
\end{equation*}

For the anisotropic Besov space, we need to redefine the wavelet basis so that the sensitivity to resolution $k\in\ZZ_{\geq 0}$ differs for each component depending on $\alpha$. Define the quantities
\begin{equation*}
\norm{k}_{\underline{\alpha}/\alpha} := \sum_{i=1}^d \lfloor k\underline{\alpha}/\alpha_i\rfloor, \quad I_k^{d,\alpha} := \prod_{i=1}^d \{-m,-m+1,\cdots, 2^{\lfloor k\underline{\alpha}/\alpha_i\rfloor}\} \subset \ZZ^d.
\end{equation*}
We then set for each $k\geq 0$ and $\ell\in I_k^{d,\alpha}$
\begin{equation*}
\omega_{k,\ell}^{d,\alpha}(x):=\omega_{(\lfloor k\underline{\alpha}/\alpha_1\rfloor, \cdots, \lfloor k\underline{\alpha}/\alpha_d\rfloor),\ell}^d(x) = \prod_{i=1}^d \iota_m(2^{\lfloor k\underline{\alpha}/\alpha_i\rfloor} x_i-\ell_i),
\end{equation*}
and take the scaled basis
\begin{equation*}
\{\psi_j^\circ\mid j\in\NN\} = \{2^{\norm{k}_{\underline{\alpha}/\alpha}/2} \omega_{k,\ell}^{d,\alpha} \mid k\in \ZZ_{\geq 0}, \ell\in I_k^{d,\alpha}\}
\end{equation*}
with the natural hierarchy induced by $k$.

The minimax optimal rate for the anisotropic Besov space is equal to $n^{-\frac{2\widetilde{\alpha}}{2\widetilde{\alpha}+1}}$ \citep[Theorem 4]{Suzuki21}. Our result for in-context learning is as follows.

\begin{thm}[minimax optimality of ICL in anisotropic Besov space]\label{thm:minimaxani}
Let $\alpha\in\RR_{>0}^d$ with $\widetilde{\alpha}>1/p$ and $\FF^\circ = \UU(B_{p,q}^\alpha(\XX))$. Suppose that $\PP_{\XX}$ has positive Lebesgue density $\rho_{\XX}$ bounded above and below on $\XX$. Also suppose that all coefficients are independent and
\begin{equation}\label{eqn:assani}
\EE{\beta}{\beta_{k,\ell}} = 0,\quad \EE{\beta}{\beta_{k,\ell}^2}\lesssim 2^{-k\underline{\alpha}(2+1/\widetilde{\alpha})} k^{-2}, \quad\forall k\geq 0,\; \ell\in I_k^{d,\alpha}.
\end{equation} 
Then for $n\gtrsim N\log N$ we have
\begin{equation*}
\bar{R}(\widehat{\Theta}) \lesssim N^{-2\widetilde{\alpha}} +\frac{N\log N}{n} + \frac{N^2\log N}{T}.
\end{equation*}
Hence if $T\gtrsim nN$ and $N\asymp n^\frac{1}{2\widetilde{\alpha}+1}$, in-context learning achieves the rate $n^{-\frac{2\widetilde{\alpha}}{2\widetilde{\alpha}+1}}\log n$ which is minimax optimal up to a polylog factor.
\end{thm}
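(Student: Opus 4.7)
The plan is to mirror the argument used for the isotropic Besov case (Theorem \ref{thm:minimaxbesov}) by verifying Assumptions \ref{ass:basis}--\ref{ass:close} with $s=\widetilde{\alpha}$ and $r=1/2$, after which applying Theorem \ref{thm:schmidt}, Proposition \ref{thm:approx} and Lemma \ref{thm:entropy} will yield the stated bound. The key quantitative observation is that if we truncate at resolution $K$, then the cardinality of the basis satisfies
\begin{equation*}
N \asymp |I_K^{d,\alpha}| \asymp 2^{\norm{K}_{\underline{\alpha}/\alpha}} \asymp 2^{K\underline{\alpha}/\widetilde{\alpha}},
\end{equation*}
since $\sum_i 1/\alpha_i = 1/\widetilde{\alpha}$. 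This makes every rate expressed in $k$ translate cleanly into a rate in $N$ once $K$ is chosen so that $N\asymp |I_K^{d,\alpha}|$; in particular the assumed variance bound $\EE{\beta}{\beta_{k,\ell}^2} \lesssim 2^{-k\underline{\alpha}(2+1/\widetilde{\alpha})} k^{-2}$ translates exactly into \eqref{eqn:decay} with $s=\widetilde{\alpha}$.

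For Assumption \ref{ass:basis}, I would first note that the tensor-product B-splines at a common (vector) resolution $(\lfloor K\underline{\alpha}/\alpha_1\rfloor,\cdots,\lfloor K\underline{\alpha}/\alpha_d\rfloor)$ remain linearly independent with $L^2$ norms equivalent to the $\ell^2$ norm of their coefficients (by the same quasi-norm equivalence \citep{Dung11b} used in Appendix \ref{app:verifybesov}). The partition-of-unity property then gives $\sum_{\ell\in I_K^{d,\alpha}} (\omega_{K,\ell}^{d,\alpha})^2 \leq 1$, so that with the scaling $2^{\norm{K}_{\underline{\alpha}/\alpha}/2}$ we obtain $\sum_{j=\ubar{N}}^{\bar{N}}(\psi_j^\circ)^2 \lesssim N$ and $r=1/2$. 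For Assumption \ref{ass:decay}, the truncation error computation
\begin{equation*}
\norm{F_\beta^\circ - F_{\beta,N}^\circ}_{L^2(\PP_{\XX})}^2 \lesssim \sum_{k>K} 2^{k\underline{\alpha}/\widetilde{\alpha}} \cdot 2^{-k\underline{\alpha}(2+1/\widetilde{\alpha})} k^{-2} \asymp 2^{-2K\underline{\alpha}} \asymp N^{-2\widetilde{\alpha}}
\end{equation*}
justifies $s=\widetilde{\alpha}$, where the wavelet characterization for the anisotropic Besov space \citep[e.g.,][]{Vybiral06} plays the role of Lemma \ref{thm:splineexpansion}. Boundedness of the norm $\norm{F_\beta^\circ}_{L^\infty}$ and independence/spanning of lower-resolution bases follow from the same refinement equation \eqref{eqn:refine} applied in each coordinate; the main new bookkeeping is an anisotropic analogue of Corollary \ref{thm:refinealpha}, where refining resolution $k$ up to $K$ inflates coefficient magnitudes by a factor $2^{(\norm{K}_{\underline{\alpha}/\alpha}-\norm{k}_{\underline{\alpha}/\alpha})/2}$ per column-sum and the same inverse factor per row-sum. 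This yields $\EEbig{\beta}{\bar{\beta}_{K,\ell}^2} \lesssim 2^{-\norm{K}_{\underline{\alpha}/\alpha}} \sum_{k=0}^{K} 2^{-2k\underline{\alpha}} k^{-2} \asymp N^{-1}$, so $\Tr(\Sigma_{\bar{\beta},N}) \lesssim 1$ as required.

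For Assumption \ref{ass:close}, Lemma \ref{thm:splineapprox} is applied verbatim: placing $N$ parallel copies of $\tilde{\omega}$ preceded by the affine scalings $x_i \mapsto 2^{\lfloor K\underline{\alpha}/\alpha_i\rfloor}x_i - \ell_i$ (implementable by $O(\log N)$ additional constant-width layers) produces a representation class $\FF_N$ with $L=O(\log N+\log\delta_N^{-1})$, $W,S,M=O(1)$, output ball radius $B_N' = O(\sqrt{N})$, and approximation accuracy $\delta_N$, so Lemma \ref{thm:dnnapprox} carries over with the same entropy bound $\VV(\FF_N,\norm{\cdot}_{L^\infty},\epsilon) \lesssim N\log(N/(\delta_N\epsilon))$. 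Plugging these verifications into Lemma \ref{thm:entropy}, Theorem \ref{thm:schmidt} and Proposition \ref{thm:approx} with $r=1/2$, $s=\widetilde{\alpha}$, then choosing $\delta_N \asymp N^{-1-\widetilde{\alpha}}$ and $\epsilon \asymp N^{-2\widetilde{\alpha}}$, assembles the three-term bound; optimizing $N\asymp n^{1/(2\widetilde{\alpha}+1)}$ under $T\gtrsim nN$ gives the rate $n^{-2\widetilde{\alpha}/(2\widetilde{\alpha}+1)}\log n$. The main obstacle is the aggregated-coefficient bound: keeping track of the anisotropic combinatorial factors in the refinement argument is more delicate than in the isotropic case, but since the operator $2^{-\norm{K-k}_{\underline{\alpha}/\alpha}/2}$ of Lemma \ref{thm:refinemixed}-type factorizes across coordinates, the argument reduces to the same geometric-series estimate as before.
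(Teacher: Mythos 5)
Your proposal follows essentially the same route as the paper's Appendix C.2.2: truncate at a single resolution $K$ with $N\asymp 2^{\norm{K}_{\underline{\alpha}/\alpha}}\asymp 2^{K\underline{\alpha}/\widetilde{\alpha}}$, verify Assumptions \ref{ass:basis}--\ref{ass:close} with $r=1/2$ and $s=\widetilde{\alpha}$, apply Corollary \ref{thm:refineani} (the anisotropic refinement) to control $\Tr(\Sigma_{\bar{\beta},N})$, and then plug into Proposition \ref{thm:approx}, Lemma \ref{thm:entropy} and Theorem \ref{thm:schmidt} with $\delta_N\asymp N^{-1-\widetilde{\alpha}}$, $\epsilon\asymp N^{-2\widetilde{\alpha}}$. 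Two small imprecisions in the write-up are worth fixing: (i) in your truncation-error display you sum the variance bound from \eqref{eqn:assani} (note the $k^{-2}$ factor), but \eqref{eqn:truncate} requires a bound \emph{uniform over} $\beta\in\supp\PP_\beta$, which the paper obtains from the deterministic $\ell^p$-norm characterization of the anisotropic Besov unit ball (the analogue of Lemma \ref{thm:splineexpansion} via \citet{Suzuki21}) combined with $\norm{z}_2\le D^{1/2-1/p}\norm{z}_p$; the two derivations give the same $N^{-2\widetilde{\alpha}}$ rate, so the conclusion is unaffected, but the source of the bound should be stated correctly; and (ii) $\norm{F_\beta^\circ}_{L^\infty}\le B$ follows from the coefficient decay together with the partition of unity $\sum_\ell\omega_{k,\ell}^{d,\alpha}\equiv 1$, not from the refinement equation, which is only needed for the spanning/aggregation part of Assumption \ref{ass:basis}. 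Also, the lemma you cite as ``Lemma \ref{thm:refinemixed}-type'' is not present in the compiled paper (it sits in a commented-out section); the correct cross-reference is Proposition \ref{thm:refineprop} / Corollary \ref{thm:refineani}.
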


\subsubsection{Proof of Theorem \ref{thm:minimaxani}}

The overall approach is similar to Appendix \ref{app:besov}. The decay rate of functions in the anisotropic Besov space is characterized by the following result which extends Lemma \ref{thm:splineexpansion}.

\begin{lemma}[\citet{Suzuki21}, Lemma 2]
If $\widetilde{\alpha}>1/p$ and $m>\overline{\alpha} +1-1/p$, a function $f\in L^p(\XX)$ is in $M\!B_{p,q}^\alpha(\XX)$ if and only if $f$ can be represented as
\begin{equation*}
f = \sum_{k\in\ZZ_{\geq 0}^d} \sum_{\ell\in I_k^{d,\alpha}} \tilde{\beta}_{k,\ell} \omega_{k,\ell}^{d,\alpha}(x)
\end{equation*}
such that the coefficients satisfy
\begin{equation*}
\norm{\tilde{\beta}}_{b_{p,q}^\alpha} := \Bigg[ \sum_{k=0}^\infty \Bigg[2^{k\underline{\alpha}-\norm{k}_{\underline{\alpha}/\alpha}/p} \bigg(\sum_{\ell\in I_k^{d,\alpha}} |\tilde{\beta}_{k,\ell}|^p\bigg)^{1/p}\Bigg]^q\Bigg]^{1/q} \lesssim \norm{f}_{B_{p,q}^\alpha}.
\end{equation*}
Moreover, the two norms $\norm{\tilde{\beta}}_{b_{p,q}^\alpha}$ and $\norm{f}_{B_{p,q}^\alpha}$ are equivalent.
\end{lemma}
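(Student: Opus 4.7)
The plan is to run the same three-step pipeline used in the isotropic Besov case (Appendix \ref{app:verifybesov}--\ref{app:minimaxbesov}), taking care that the smoothness now interacts nontrivially with the per-coordinate resolution exponents $\lfloor k\underline{\alpha}/\alpha_i\rfloor$. First I would truncate the wavelet expansion at a single resolution $K$ and take $\{\psi_j^\circ\} = \{2^{\norm{k}_{\underline{\alpha}/\alpha}/2}\omega_{k,\ell}^{d,\alpha}\}_{k \le K,\ell \in I_k^{d,\alpha}}$ as the approximating basis. A quick counting gives
\[
\norm{K}_{\underline{\alpha}/\alpha} = \sum_{i=1}^d \lfloor K\underline{\alpha}/\alpha_i\rfloor \asymp K\underline{\alpha}/\widetilde{\alpha}, \qquad N\asymp \bar{N}\asymp |I_K^{d,\alpha}|\asymp 2^{K\underline{\alpha}/\widetilde{\alpha}},
\]
so $K\underline{\alpha} \asymp \widetilde{\alpha}\log_2 N$, which is the conversion I will use throughout.

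Next I verify Assumptions \ref{ass:basis}--\ref{ass:close} with $r=1/2$ and $s=\widetilde{\alpha}$. For Assumption \ref{ass:basis}, independence of wavelets at a fixed resolution and the quasi-norm equivalence $\|\sum_\ell \beta_{K,\ell}\psi^\circ_j\|_{L^2}\asymp \|\beta\|_2$ give uniform two-sided control of $\Sigma_{\Psi,N}$, while the partition-of-unity bound yields $\sum_j (\psi_j^\circ)^2\lesssim 2^{\norm{K}_{\underline{\alpha}/\alpha}}\asymp N$, so $r=1/2$. For Assumption \ref{ass:decay}, the wavelet characterization just cited in the excerpt gives $(|I_k^{d,\alpha}|^{-1}\sum_\ell |\beta_{k,\ell}|^p)^{1/p}\lesssim 2^{-k\underline{\alpha}(1+1/(2\widetilde{\alpha}))}$, which upon squaring and summing via $\|z\|_2\le |I|^{1/2-1/p}\|z\|_p$ gives a truncation error $\sum_{k\ge K}2^{-2k\underline{\alpha}}\lesssim 2^{-2K\underline{\alpha}} \asymp N^{-2\widetilde{\alpha}}$, confirming \eqref{eqn:truncate} with $s=\widetilde{\alpha}$ and the $L^\infty$ boundedness $\|F_\beta^\circ\|_\infty\lesssim \sum_k 2^{k\underline{\alpha}(1/p-1)\cdots}\lesssim 1$ from a similar geometric sum under $\widetilde{\alpha}>1/p$. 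Indexing by $j\asymp 2^{k\underline{\alpha}/\widetilde{\alpha}}$, the assumption \eqref{eqn:assani} translates precisely into $\Sigma_{\beta,\bar N}\precsim \diag[j^{-2\widetilde{\alpha}-1}(\log j)^{-2}]$, which is \eqref{eqn:decay}.

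The one genuinely new bookkeeping step is boundedness of $\Tr(\Sigma_{\bar\beta,N})$: here I would iterate the coordinate-wise refinement relation (Lemma \ref{thm:refine}) to push every wavelet of resolution $k<K$ up to resolution $K$, using the generalized aggregation coefficients guaranteed by Proposition \ref{thm:refineprop}. The anisotropic refinement is done separately in each coordinate by $\lfloor K\underline{\alpha}/\alpha_i\rfloor - \lfloor k\underline{\alpha}/\alpha_i\rfloor$ steps and we uniformly distribute mass among the target indices, exactly as in the mixed-smoothness calculation. Summing the resulting second-moment bound over $k \le K$ and using the variance decay \eqref{eqn:assani} together with $|I_K^{d,\alpha}|\asymp N$ yields $\E{\bar\beta_{K,\ell}^2}\lesssim N^{-1}$, hence $\Tr(\Sigma_{\bar\beta,N})\lesssim 1$. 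For Assumption \ref{ass:close}, I would take $N$ parallel copies of the ReLU subnetwork of Lemma \ref{thm:splineapprox}, precomposed with per-coordinate affine rescalings $x_i\mapsto 2^{\lfloor K\underline{\alpha}/\alpha_i\rfloor}x_i-\ell_i$ (implemented by $O(\log N)$ additional layers of bounded norm), and postcomposed with multiplication by $2^{\norm{k}_{\underline{\alpha}/\alpha}/2}\lesssim \sqrt{N}$. With $\delta_N=\sqrt{N}\delta$ this gives the same depth $L=O(\log N +\log\delta_N^{-1})$, constant width, and the same entropy bound $\VV(\FF_N,\|\cdot\|_{L^\infty},\epsilon)\lesssim N\log(N/(\delta_N\epsilon))$ as in Lemma \ref{thm:dnnapprox}.

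With these ingredients in place, plugging $r=1/2$, $s=\widetilde{\alpha}$ into Proposition \ref{thm:approx}, combining with Lemma \ref{thm:entropy} and Theorem \ref{thm:schmidt}, and choosing $\delta_N\asymp N^{-1-\widetilde{\alpha}}$, $\epsilon\asymp N^{-2\widetilde{\alpha}}$ collapses the bound to $N^{-2\widetilde{\alpha}}+\tfrac{N\log N}{n}+\tfrac{N^2\log N}{T}$ exactly as claimed; balancing $N\asymp n^{1/(2\widetilde{\alpha}+1)}$ under $T\gtrsim nN$ gives the rate $n^{-2\widetilde{\alpha}/(2\widetilde{\alpha}+1)}\log n$. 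The main obstacle I anticipate is the anisotropic aggregation argument, since floors $\lfloor k\underline{\alpha}/\alpha_i\rfloor$ prevent a single clean dyadic indexing and one must verify that summing the refinement weights along each coordinate still produces the $N^{-1}$ bound on individual $\E{\bar\beta^2_{K,\ell}}$; everything else is a direct, if somewhat tedious, adaptation of the isotropic argument.
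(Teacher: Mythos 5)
You have written a proof sketch of Theorem~\ref{thm:minimaxani}, but the statement you were asked to address is the wavelet characterization lemma itself (the cited \citet{Suzuki21}, Lemma 2), which asserts the two-sided equivalence $\norm{\tilde{\beta}}_{b_{p,q}^\alpha}\asymp\norm{f}_{B_{p,q}^\alpha}$ between the anisotropic Besov norm and the discrete sequence norm of B-spline coefficients. These are different targets: the lemma is an ingredient \emph{used} inside the proof of Theorem~\ref{thm:minimaxani}, and the paper does not prove it but cites it from the literature. Your sketch treats the lemma as a given (you invoke it directly in the sentence ``the wavelet characterization just cited in the excerpt gives \ldots'') and then reruns the downstream pipeline; nothing in your argument establishes the lemma itself.

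A proof of the actual statement would live entirely in approximation theory: one needs a Jackson-type direction (constructing a quasi-interpolant $Q_k f$ from B-splines at anisotropic resolution $(\lfloor k\underline{\alpha}/\alpha_1\rfloor,\dots,\lfloor k\underline{\alpha}/\alpha_d\rfloor)$ and bounding $\norm{f-Q_k f}_p$ by the anisotropic modulus of smoothness $w_{r,p}(f,(2^{-k/\alpha_1},\dots,2^{-k/\alpha_d}))$), a Bernstein-type inverse direction (showing that a function built from B-spline coefficients in $b_{p,q}^\alpha$ has controlled modulus, using the local support and polynomial-reproduction properties of the tensor B-splines), and a telescoping/Littlewood--Paley argument to convert these one-scale bounds into the $\ell^q$ summation over $k$, with the scaling factor $2^{k\underline{\alpha}-\norm{k}_{\underline{\alpha}/\alpha}/p}$ arising from the $L^p$-normalization of the B-splines and the definition of the anisotropic modulus. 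The conditions $\widetilde{\alpha}>1/p$ and $m>\overline{\alpha}+1-1/p$ enter precisely to guarantee the quasi-interpolant is well-defined on $L^p$ and that the B-spline order suffices for the smoothness range. None of this appears in your sketch, so there is a genuine gap: you addressed a downstream consequence rather than the lemma.
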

We again select all B-splines $(\omega_{K,\ell}^{d,\alpha})_{\ell\in I_K^d}$ at each resolution $K$ to approximate the target functions. By repeatedly applying the refinement equation for one-dimensional wavelets as many times as needed to each dimension separately, we may express any B-spline at a lower resolution $k<K$ as a linear combination of $(\omega_{K,\ell}^{d,\alpha})_{\ell\in I_K^d}$ similarly to Lemma \ref{thm:refine}. See Proposition \ref{thm:refineprop} for details. We thus have
\begin{equation*}
N = \lvert I_K^{d,\alpha}\rvert = \prod_{i=1}^d (m+1+2^{\lfloor K\underline{\alpha}/\alpha_i\rfloor}) \asymp 2^{\norm{K}_{\underline{\alpha}/\alpha}}.
\end{equation*}
Since $\norm{k}_{\underline{\alpha}/\alpha} = k\underline{\alpha}/\widetilde{\alpha} + O_k(1)$ always holds, it also follows that
\begin{equation*}
\bar{N} = \sum_{k=0}^K \lvert I_k^{d,\alpha}\rvert +1 \asymp \sum_{k=0}^K 2^{\norm{k}_{\underline{\alpha}/\alpha}} \asymp \sum_{k=0}^K (2^{\underline{\alpha}/\widetilde{\alpha}})^k \asymp 2^{K\underline{\alpha}/\widetilde{\alpha}} \asymp N
\end{equation*}
and similarly $\ubar{N}\asymp N$. Therefore,
\begin{equation*}
\sum_{j=\ubar{N}}^{\bar{N}} \psi_j^{\circ}(x)^2\leq 2^{\norm{K}_{\underline{\alpha}/\alpha}} \sum_{\ell\in I_k^{d,\alpha}} \omega_{K,\ell}^{d,\alpha}(x)^2 \leq 2^{\norm{K}_{\underline{\alpha}/\alpha}}\asymp N
\end{equation*}
and the scaled coefficients decay in average as
\begin{align*}
\Bigg(\frac{1}{|I_k^{d,\alpha}|}\sum_{\ell\in I_k^{d,\alpha}} |\beta_{k,\ell}|^p\Bigg)^{1/p} &\lesssim \Bigg(\prod_{i=1}^d 2^{\lfloor k\underline{\alpha}/\alpha_i\rfloor}\Bigg)^{-1/p} 2^{-\norm{k}_{\underline{\alpha}/\alpha}/2} \Bigg(\sum_{\ell\in I_k^{d,\alpha}} |\tilde{\beta}_{k,\ell}|^p\Bigg)^{1/p} \\
&\lesssim 2^{-\norm{k}_{\underline{\alpha}/\alpha}/2 -k\underline{\alpha}} \norm{f}_{B_{p,q}^\alpha}\\
&\asymp N^{-(\widetilde{\alpha}+1/2)} \norm{f}_{B_{p,q}^\alpha}.
\end{align*}
For Assumption \ref{ass:decay}, we can check that
\begin{align*}
\norm{F_\beta^\circ}_{L^\infty(\PP_{\XX})} &\leq\sum_{k=0}^\infty \Bigg\lVert\sum_{\ell\in I_k^{d,\alpha}} \beta_{k,\ell}\cdot 2^{\norm{k}_{\underline{\alpha}/\alpha}/2} \omega_{k,\ell}^{d,\alpha} \Bigg\rVert_{L^\infty(\PP_{\XX})}\\
&\leq \sum_{k=0}^\infty 2^{(1/2+1/p)\norm{k}_{\underline{\alpha}/\alpha}} \Bigg(\frac{1}{|I_k^{d,\alpha}|} \sum_{\ell\in I_k^{d,\alpha}} |\beta_{k,\ell}|^p\Bigg)^{1/p}\\
&\lesssim \sum_{k=0}^\infty 2^{(1/2+1/p) \norm{k}_{\underline{\alpha}/\alpha}}\cdot 2^{-\norm{k}_{\underline{\alpha}/\alpha}/2 -k\underline{\alpha}} \\
&\lesssim \left(1-2^{\underline{\alpha}/\widetilde{\alpha} (1/p-\widetilde{\alpha})}\right)^{-1} =:B
\end{align*}
and for a resolution cutoff $K>0$, $\bar{N}\asymp 2^{\norm{K}_{\underline{\alpha}/\alpha}}$ the truncation error satisfies
\begin{align*}
\norm{F^\circ_\beta - F^\circ_{\beta,\bar{N}}}_{L^2(\PP_{\XX})}^2 &\lesssim \sum_{k=K+1}^\infty \sum_{\ell\in I_k^{d,\alpha}} \beta_{k,\ell}^2 \lesssim \sum_{k=K+1}^\infty |I_k^{d,\alpha}|^{1-2/p} \Bigg(\sum_{\ell\in I_k^{d,\alpha}} |\beta_{k,\ell}|^p\Bigg)^{2/p}\\
&\lesssim \sum_{k=K+1}^\infty 2^{\norm{k}_{\underline{\alpha}/\alpha}}\cdot 2^{-\norm{k}_{\underline{\alpha}/\alpha} - 2k\underline{\alpha}} \asymp 2^{-2K\underline{\alpha}} \asymp N^{-2\widetilde{\alpha}}.
\end{align*}
Thus we may set $r=1/2,s=\widetilde{\alpha}$ and take the variance decay rate \eqref{eqn:assani} as
\begin{equation*}
\EE{\beta}{\beta_{k,\ell}^2} \lesssim 2^{-\norm{k}_{\underline{\alpha}/\alpha}(2\widetilde{\alpha}+1)} k^{-2} \asymp 2^{-k\underline{\alpha}(2+1/\widetilde{\alpha}))} k^{-2}.
\end{equation*}
For boundedness of $\Tr(\Sigma_{\bar{\beta},N})$, we use the following result which is also obtained from Proposition \ref{thm:refineprop} by considering resolution vectors $(\lfloor k\underline{\alpha}/\alpha_1\rfloor, \cdots, \lfloor k\underline{\alpha}/\alpha_d\rfloor)$ and $(\lfloor k'\underline{\alpha}/\alpha_1\rfloor, \cdots, \lfloor k'\underline{\alpha}/\alpha_d\rfloor)$.
\begin{cor}\label{thm:refineani}
For any $0\leq k<k'$ there exists constants $\gamma_{k,k',\ell,\ell'}\geq 0$ for $\ell\in I_k^{d,\alpha}$, $\ell'\in I_{k'}^{d,\alpha}$ such that
\begin{equation*}
\sum_{\ell\in I_k^{d,\alpha}} \beta_{k,\ell} 2^{\norm{k}_{\underline{\alpha}/\alpha}/2}\omega_{k,\ell}^{d,\alpha} = \sum_{\ell'\in I_{k'}^{d,\alpha}} \bar{\beta}_{k',\ell'} 2^{\norm{k'}_{\underline{\alpha}/\alpha}/2}\omega_{k',\ell'}^{d,\alpha}, \quad \bar{\beta}_{k',\ell'}  = \sum_{\ell\in I_k^{d,\alpha}} \gamma_{k,k',\ell,\ell'} \beta_{k,\ell}
\end{equation*}
holds for all $(\beta_{k,\ell})_{\ell\in I_k^{d,\alpha}}$. Moreover, it holds that
\begin{equation*}
\sum_{\ell\in I_k^{d,\alpha}} \gamma_{k,k',\ell,\ell'} \leq 2^{(\norm{k}_{\underline{\alpha}/\alpha}-\norm{k'}_{\underline{\alpha}/\alpha})/2}, \quad \sum_{\ell'\in I_{k'}^{d,\alpha}} \gamma_{k,k',\ell,\ell'} \leq 2^{(\norm{k'}_{\underline{\alpha}/\alpha}-\norm{k}_{\underline{\alpha}/\alpha})/2}.
\end{equation*}
\end{cor}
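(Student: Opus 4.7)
The plan is to derive Corollary \ref{thm:refineani} as a direct specialization of Proposition \ref{thm:refineprop}, which is the general refinement identity for tensor-product B-splines indexed by arbitrary resolution vectors in $\ZZ_{\geq 0}^d$. The key observation is that the anisotropic basis is merely a re-indexed tensor-product basis: writing $\kappa(k):=(\lfloor k\underline{\alpha}/\alpha_1\rfloor,\cdots,\lfloor k\underline{\alpha}/\alpha_d\rfloor)\in \ZZ_{\geq 0}^d$, we have $\omega_{k,\ell}^{d,\alpha} = \omega_{\kappa(k),\ell}^d$, $I_k^{d,\alpha} = \prod_{i=1}^d\{-m,-m+1,\cdots,2^{\kappa(k)_i}\}$, and $\norm{k}_{\underline{\alpha}/\alpha} = \sum_{i=1}^d \kappa(k)_i$. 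So the statement is the vectorial analogue of the isotropic Corollary \ref{thm:refinealpha} along the resolution chain $\kappa(0),\kappa(1),\cdots$ rather than the uniform chain $(0,\cdots,0),(1,\cdots,1),\cdots$.

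First I would verify that $\kappa(k)\leq \kappa(k')$ componentwise whenever $k<k'$, which is immediate from monotonicity of the floor. Proposition \ref{thm:refineprop} then applies to the pair $(\kappa(k),\kappa(k'))$ and yields nonnegative constants $c_{\ell,\ell'}$ such that
\begin{equation*}
\omega_{k,\ell}^{d,\alpha} = \omega_{\kappa(k),\ell}^d = \sum_{\ell'\in I_{k'}^{d,\alpha}} c_{\ell,\ell'}\,\omega_{\kappa(k'),\ell'}^d = \sum_{\ell'\in I_{k'}^{d,\alpha}} c_{\ell,\ell'}\,\omega_{k',\ell'}^{d,\alpha},
\end{equation*}
together with marginal-sum bounds $\sum_{\ell} c_{\ell,\ell'} \leq 1$ and $\sum_{\ell'} c_{\ell,\ell'} \leq 2^{\sum_i(\kappa(k')_i-\kappa(k)_i)} = 2^{\norm{k'}_{\underline{\alpha}/\alpha}-\norm{k}_{\underline{\alpha}/\alpha}}$. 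These bounds come from iterating the one-dimensional refinement $\iota_m(x) = 2^{-m+1}\sum_{r=0}^m \binom{m}{r}\iota_m(2x-r+m/2)$ exactly $\kappa(k')_i - \kappa(k)_i$ times along coordinate $i$: the coefficients $2^{-m+1}\binom{m}{r}$ sum to $1$ in $r$ and the $\{\iota_m(\cdot-r)\}_r$ form a partition of unity of $\RR$, so summing in the coarse index produces a unit marginal, while the tensor-product count of fine cells gives the exponential factor in the fine index.

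I would then multiply the identity above by $2^{\norm{k}_{\underline{\alpha}/\alpha}/2}\beta_{k,\ell}$, sum over $\ell\in I_k^{d,\alpha}$, and absorb the scale mismatch between $2^{\norm{k}_{\underline{\alpha}/\alpha}/2}$ and $2^{\norm{k'}_{\underline{\alpha}/\alpha}/2}$ into the refinement coefficient by defining
\begin{equation*}
\gamma_{k,k',\ell,\ell'} \;:=\; 2^{(\norm{k}_{\underline{\alpha}/\alpha}-\norm{k'}_{\underline{\alpha}/\alpha})/2}\,c_{\ell,\ell'} \;\geq\; 0.
\end{equation*}
Exchanging the order of summation then yields the stated identity with $\bar\beta_{k',\ell'} = \sum_{\ell} \gamma_{k,k',\ell,\ell'}\beta_{k,\ell}$, and the two marginal bounds on $\gamma_{k,k',\ell,\ell'}$ follow by multiplying the two marginal bounds on $c_{\ell,\ell'}$ by the (positive) scaling factor, producing precisely $2^{(\norm{k}_{\underline{\alpha}/\alpha}-\norm{k'}_{\underline{\alpha}/\alpha})/2}$ and $2^{(\norm{k'}_{\underline{\alpha}/\alpha}-\norm{k}_{\underline{\alpha}/\alpha})/2}$.

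The only potential obstacle is ensuring that Proposition \ref{thm:refineprop} is stated so as to deliver marginals in the asymmetric form used above (unit marginal in the coarse index, exponential marginal in the fine index) rather than, e.g., pre-symmetrized square-root bounds. If the proposition is instead already stated in the symmetric form that mirrors Corollary \ref{thm:refineani}, the rescaling step is absorbed upstream and the derivation becomes a one-line specialization. In either case the argument is purely algebraic once the tensorial iteration of the one-dimensional B-spline refinement is made explicit, so no new analysis beyond Lemma \ref{thm:refine} and Proposition \ref{thm:refineprop} is required.
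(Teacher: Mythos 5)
Your proposal is correct and follows the same route as the paper: Corollary~\ref{thm:refineani} is literally obtained by substituting the anisotropic resolution vectors $\kappa(k)=(\lfloor k\underline{\alpha}/\alpha_1\rfloor,\dots,\lfloor k\underline{\alpha}/\alpha_d\rfloor)$ and $\kappa(k')$ into Proposition~\ref{thm:refineprop}, and since Proposition~\ref{thm:refineprop} is already stated in the symmetric (scaled) form with marginals $2^{\mp\norm{k'-k}_1/2}$, the ``one-line specialization'' branch of your argument is the one that actually applies — the rescaling step you prepared is absorbed upstream, and all that remains is the bookkeeping $I_k^{d,\alpha}=I_{\kappa(k)}^d$, $\omega_{k,\ell}^{d,\alpha}=\omega_{\kappa(k),\ell}^d$, $\norm{k}_{\underline{\alpha}/\alpha}=\norm{\kappa(k)}_1$, together with the componentwise monotonicity $\kappa(k)\preceq\kappa(k')$ from the floor.

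One small caveat in your side commentary: the coefficients $2^{-m+1}\binom{m}{r}$ do \emph{not} sum to $1$ over $r$ — their sum is $2^{-m+1}\cdot 2^m = 2$. The coarse marginal $\sum_\ell c_{\ell,\ell'}\le 1$ instead follows from a parity argument: for fixed $\ell'$, the binomial index $\ell'-2\ell+m/2$ runs over integers of a single parity as $\ell$ varies, so the sum over $\ell$ picks up only $\sum_{r\ \mathrm{even}}\binom{m}{r}=2^{m-1}$ (or the odd half), yielding the factor $2^{-m+1}\cdot 2^{m-1}=1$. Likewise, the fine marginal $\sum_{\ell'}c_{\ell,\ell'}\le 2$ per step comes from summing over \emph{all} $r$, not from a cell count. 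This does not affect the validity of your reduction, since that branch of your argument is superseded by the symmetric form of Proposition~\ref{thm:refineprop}, but the partition-of-unity justification as written is not correct.
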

We apply Corollary \ref{thm:refineani} to refine all components of $F_{\beta,\bar{N}}^\circ$ to resolution $K$:
\begin{align*}
F_{\beta,\bar{N}}^\circ = \sum_{k=0}^K \sum_{\ell\in I_k^{d,\alpha}} \beta_{k,\ell} 2^{\norm{k}_{\underline{\alpha}/\alpha}/2}\omega_{k,\ell}^{d,\alpha} = \sum_{k=0}^K \sum_{\ell'\in I_K^{d,\alpha}}\sum_{\ell\in I_k^{d,\alpha}} \gamma_{k,K,\ell,\ell'}\beta_{k,\ell} 2^{\norm{K}_{\underline{\alpha}/\alpha}/2} \omega_{K,\ell'}^{d,\alpha}.
\end{align*}
Hence it follows that
\begin{align*}
\EE{\beta}{\bar{\beta}_{K,\ell}^2} &= \sum_{k=0}^K \sum_{\ell\in I_k^{d,\alpha}} \gamma_{k,K,\ell,\ell'}^2 \EE{\beta}{\beta_{k,\ell}^2} \lesssim \sum_{k=0}^K \Bigg(\sum_{\ell\in I_k^{d,\alpha}} \gamma_{k,K,\ell,\ell'}\Bigg)^2 2^{-k\underline{\alpha}(2+1/\widetilde{\alpha})} k^{-2}\\
&\leq \sum_{k=0}^K 2^{\norm{k}_{\underline{\alpha}/\alpha}-\norm{K}_{\underline{\alpha}/\alpha}}\cdot 2^{-k\underline{\alpha}(2+1/\widetilde{\alpha})} k^{-2}\\
&\lesssim 2^{-\norm{K}_{\underline{\alpha}/\alpha}} \cdot\sum_{k=0}^K 2^{-2k\underline{\alpha}}k^{-2} \asymp N^{-1},
\end{align*}
and we again conclude that $\Tr(\Sigma_{\bar{\beta},N})$ is uniformly bounded.

The rest of the proof proceeds similarly to the ordinary Besov space. \qed

\subsection{Wavelet Refinement}\label{app:refineprop}

In this subsection, we present and prove an auxiliary result concerning the refinement of B-spline wavelets and the recurrence relations satisfied by their coefficient sequences.

\begin{prop}\label{thm:refineprop}
For any $k,k'\in\ZZ_{\geq 0}^d$ such that $k'-k\in\ZZ_{\geq 0}^d$ there exists constants $\gamma_{k,k',\ell,\ell'}\geq 0$ for $\ell\in I_k^d$, $\ell'\in I_{k'}^d$ such that
\begin{equation}\label{eqn:refineprop}
\sum_{\ell\in I_k^d} \beta_{k,\ell} 2^{\norm{k}_1/2}\omega_{k,\ell}^d = \sum_{\ell'\in I_{k'}^d} \bar{\beta}_{k',\ell'} 2^{\norm{k'}_1/2}\omega_{k',\ell'}^d, \quad \bar{\beta}_{k',\ell'}  = \sum_{\ell\in I_k^d} \gamma_{k,k',\ell,\ell'} \beta_{k,\ell}
\end{equation}
holds for all $(\beta_{k,\ell})_{\ell\in I_k^d}$. Moreover, it holds that
\begin{equation*}
\sum_{\ell\in I_k^d} \gamma_{k,k',\ell,\ell'} \leq 2^{-\norm{k'-k}_1/2}, \quad \sum_{\ell'\in I_{k'}^d} \gamma_{k,k',\ell,\ell'} \leq 2^{\norm{k'-k}_1/2}.
\end{equation*}
\end{prop}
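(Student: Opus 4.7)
The plan is to prove the identity and bounds by induction on $\norm{k'-k}_1$, iterating a single-coordinate refinement step $k\to k+e_i$ where $e_i$ is the $i$-th standard basis vector. The base case $k=k'$ is trivial, taking $\gamma_{k,k,\ell,\ell'}=\mathbf{1}_{\{\ell=\ell'\}}$, so everything reduces to handling one extra refinement and then composing.

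First I would establish the single-coordinate step by applying the one-dimensional refinement $\iota_m(x)=2^{-m+1}\sum_{r=0}^m\binom{m}{r}\iota_m(2x-r+m/2)$ (the same identity behind Lemma \ref{thm:refine}) to the $i$-th tensor factor of $\omega_{k,\ell}^d$, leaving the other factors untouched, and matching the resulting expansion term-by-term against the right-hand side of \eqref{eqn:refineprop}. Absorbing the scaling mismatch $2^{(\norm{k+e_i}_1-\norm{k}_1)/2}=2^{1/2}$ gives the closed form
\begin{equation*}
\gamma^{(1)}_{k,k+e_i,\ell,\ell'} = 2^{-m+1/2}\binom{m}{r} \quad\text{if } \ell'_j=\ell_j \;(j\neq i),\;\ell'_i=2\ell_i+r-\tfrac{m}{2},\;r\in\{0,\ldots,m\},
\end{equation*}
and zero otherwise. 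The row sum over $\ell'$ collapses to $2^{-m+1/2}\sum_{r=0}^m\binom{m}{r}=2^{1/2}$ because the $m+1$ values of $r$ give distinct $\ell'_i$. For the column sum at a fixed $\ell'$, only those $r$ for which $\ell'_i-r+m/2$ is even admit an integer pre-image $\ell_i$; since $m$ is taken to be even, this selects exactly the $r$ sharing the parity of $\ell'_i$, halving the binomial sum to $2^{-m+1/2}\cdot 2^{m-1}=2^{-1/2}$.

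For the inductive step, given $\gamma_{k,\tilde{k}}$ already constructed for some intermediate $\tilde{k}$ with $\tilde{k}\leq k'$ coordinatewise and $\tilde{k}_i<k'_i$, I would compose
\begin{equation*}
\gamma_{k,\tilde{k}+e_i,\ell,\ell''} \;:=\; \sum_{\ell'} \gamma_{k,\tilde{k},\ell,\ell'}\,\gamma^{(1)}_{\tilde{k},\tilde{k}+e_i,\ell',\ell''},
\end{equation*}
which is non-negative and inherits \eqref{eqn:refineprop} by substitution. Interchanging the order of summation decouples the two bounds multiplicatively, giving $\sum_{\ell''}\gamma\leq 2^{1/2}\cdot 2^{\norm{\tilde{k}-k}_1/2}=2^{\norm{\tilde{k}+e_i-k}_1/2}$ and analogously $\sum_\ell\gamma\leq 2^{-\norm{\tilde{k}+e_i-k}_1/2}$. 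The one subtlety to handle is that the refinement may produce indices $\ell'$ outside $I_{k'}^d$; however the associated fine-scale B-splines have supports disjoint from $\XX=[0,1]^d$ and vanish there identically, so they can be discarded without altering \eqref{eqn:refineprop}, and the sum bounds (being upper estimates) can only be tightened. The sole nontrivial quantitative step is thus the parity-based halving in the single-step column sum, which genuinely exploits the evenness of $m$; the rest is routine tensor-product bookkeeping carried through the induction.
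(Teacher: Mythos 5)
Your proof is correct and follows essentially the same route as the paper's: induction on $\norm{k'-k}_1$, single-coordinate refinement via the one-dimensional B-spline dilation identity, the closed form $\gamma^{(1)}_{k,k+e_i,\ell,\ell'}=2^{-m+1/2}\binom{m}{\ell'_i-2\ell_i+m/2}$, the parity-halving argument for the column sum (where the paper just says ``by considering parities''), and composition with Fubini for the inductive step. One small point in your favor: you explicitly address the fact that refinement can produce indices $\ell'\notin I_{k'}^d$ and justify discarding them because the corresponding B-splines vanish a.e.\ on $[0,1]^d$; the paper silently upper-bounds by extending the sums to all of $\ZZ$ without remarking on this.
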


\begin{proof}
We proceed by induction on $\norm{k'-k}_1$. When $k'=k+e_j$ for some $1\leq j\leq d$, we can refine each $\omega_{k,\ell}^d$ using equation (2.21) of \citet{Dung11b} as
\begin{align*}
\omega_{k,\ell}^d(x) &= \prod_{i=1}^d \iota_m(2^{k_i} x_i-\ell_i)\\
& = 2^{-m+1}\prod_{i\neq j} \iota_m(2^{k_i} x_i-\ell_i) \sum_{r=0}^m \binom{m}{r} \iota_m\left(2^{k_j+1}x_j-2\ell_j-r+\frac{m}{2}\right)\\
&= 2^{-m+1} \sum_{r=0}^m \binom{m}{r} \omega_{k+e_j,\ell+(\ell_j+r-\frac{m}{2})e_j}^d(x).
\end{align*}
Since $\ell+(\ell_j+r-\frac{m}{2})e_j$ matches a given location vector $\ell'\in I_{k+e_j}^d$ if and only if $\ell_i=\ell'_i\;(i\neq j)$ and $\ell_j'=2\ell_j+r-\frac{m}{2}$, comparing coefficients in \eqref{eqn:refineprop} yields
\begin{equation*}
\gamma_{k,k+e_j,\ell,\ell'} = 2^{-m+1/2} \boldsymbol{1}_{\{\ell_i=\ell'_i\,(i\neq j)\}} \binom{m}{\ell_j'-2\ell_j+\frac{m}{2}}.
\end{equation*}
Here, $\boldsymbol{1}_A$ denotes the indicator function for condition $A$. It follows that $\gamma_{k,k+e_j,\ell,\ell'}\geq 0$ and
\begin{align*}
&\;\;\,\sum_{\ell\in I_k^d} \gamma_{k,k+e_j,\ell,\ell'} \leq \sum_{\ell_j\in\ZZ} 2^{-m+1/2} \binom{m}{\ell_j'-2\ell_j+\frac{m}{2}} \leq 2^{-1/2},\\
&\sum_{\ell'\in I_{k+e_j}^d} \gamma_{k,k+e_j,\ell,\ell'} \leq \sum_{\ell_j'\in\ZZ} 2^{-m+1/2} \binom{m}{\ell_j'-2\ell_j+\frac{m}{2}} \leq 2^{1/2},
\end{align*}
by considering parities.

Now suppose the claim holds for a fixed difference $\norm{k'-k}_1$. Applying the above derivation to further refine resolution $k'$ to $k''=k'+e_j$ for arbitrary $j$ gives
\begin{align*}
\sum_{\ell\in I_k^d} \beta_{k,\ell} 2^{\norm{k}_1/2}\omega_{k,\ell}^d = \sum_{\ell'\in I_{k'}^d} \bar{\beta}_{k',\ell'} 2^{\norm{k'}_1/2}\omega_{k',\ell'}^d = \sum_{\ell''\in I_{k'+1}^d} \bar{\bar{\beta}}_{k'+1,\ell''} 2^{(\norm{k'}_1+1)/2} \omega_{k'+e_j,\ell''}^d
\end{align*}
where
\begin{align*}
\bar{\bar{\beta}}_{k'+e_j,\ell''} &= \sum_{\ell'\in I_{k'}^d} 2^{-m+1/2} \boldsymbol{1}_{\{\ell_i'=\ell_i''\,(i\neq j)\}} \binom{m}{\ell_j''-2\ell_j'+\frac{m}{2}} \bar{\beta}_{k',\ell'}\\
&= \sum_{\ell\in I_k^d}\sum_{\ell'\in I_{k'}^d} 2^{-m+1/2} \boldsymbol{1}_{\{\ell_i'=\ell_i''\,(i\neq j)\}} \binom{m}{\ell_j''-2\ell_j'+\frac{m}{2}} \gamma_{k,k',\ell,\ell'} \beta_{k,\ell}.
\end{align*}
Hence we obtain the recurrence relation
\begin{equation*}
\gamma_{k,k'+e_j,\ell,\ell''} = \sum_{\ell'\in I_{k'}^d} 2^{-m+1/2} \boldsymbol{1}_{\{\ell_i'=\ell_i''\,(i\neq j)\}} \binom{m}{\ell_j''-2\ell_j'+\frac{m}{2}} \gamma_{k,k',\ell,\ell'},
\end{equation*}
from which we verify that $\gamma_{k,k'+e_j,\ell,\ell''}\geq 0$ and
\begin{align*}
\sum_{\ell\in I_k^d} \gamma_{k,k'+e_j,\ell,\ell''} &= \sum_{\ell'\in I_{k'}^d} 2^{-m+1/2} \boldsymbol{1}_{\{\ell_i'=\ell_i''\,(i\neq j)\}} \binom{m}{\ell_j''-2\ell_j'+\frac{m}{2}} \sum_{\ell\in I_k^d} \gamma_{k,k',\ell,\ell'}\\
&\leq 2^{-m+1/2-\norm{k'-k}_1/2}\sum_{\ell_j'\in\ZZ} \binom{m}{\ell_j''-2\ell_j'+\frac{m}{2}} = 2^{-(\norm{k'-k}_1+1)/2},
\end{align*}
and furthermore
\begin{align*}
\sum_{\ell''\in I_{k'+e_j}^d} \gamma_{k,k'+e_j,\ell,\ell''} &= \sum_{\ell'\in I_{k'}^d} \sum_{\ell''\in I_{k'+e_j}^d} 2^{-m+1/2} \boldsymbol{1}_{\{\ell_i'=\ell_i''\,(i\neq j)\}} \binom{m}{\ell_j''-2\ell_j'+\frac{m}{2}} \gamma_{k,k',\ell,\ell'}\\
&\leq 2^{-m+1/2} \sum_{\ell'\in I_{k'}^d} \sum_{\ell_j''\in\ZZ} \binom{m}{\ell_j''-2\ell_j'+\frac{m}{2}} \gamma_{k,k',\ell,\ell'}\\
&\leq 2^{1/2} \sum_{\ell'\in I_{k'}^d} \gamma_{k,k',\ell,\ell'} \leq 2^{(\norm{k'-k}_1+1)/2}.
\end{align*}
This concludes the proof.
\end{proof}

\subsection{Proof of Corollary \ref{thm:coarse}}

In order to approximate arbitrary $\psi_j^\circ\in \UU(B_{p,q}^\tau(\XX))$, we need the following construction instead of Lemma \ref{thm:splineapprox}. Note that $N'$ corresponds to the number of B-splines used to approximate the target function and can be freely chosen to match the desired error, which however affects the covering number of $\FF_N$.

\begin{lemma}[\citet{Suzuki19}, Proposition 1]
Set $m\in\NN$, $m>\tau+2-1/p$ and $\nu = (p\tau-d)/2d$. For all $N'\in\NN$ sufficiently large and $\epsilon=N'^{-\tau/d}(\log N')^{-1}$, for any $f^\circ\in \UU(B_{p,q}^\tau(\XX))$ there exists a ReLU network $\tilde{f} \in\FF_\textup{DNN}(L,W,S,M)$ with
\begin{align*}
&L=3+2\left\lceil\log_2 \left(3^{d\vee m}(1+dm^{-1/2}(2e)^{m+1}) \epsilon^{-1}\right) + 5\right\rceil \lceil\log_2(d\vee m)\rceil,\\
&W=N'W_0,\qquad S = ((L-1)W_0^2+1)N',\qquad M= O(N'^{1/\nu+1/d})
\end{align*}
satisfying $\norm{f^\circ - \tilde{f}}_{L^\infty(\XX)} \leq N'^{-\tau/d}$.
\end{lemma}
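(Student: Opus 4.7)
The plan is to build $\tilde{f}$ as a sum of ReLU subnetworks, each approximating one B-spline wavelet, with the wavelets chosen adaptively to match $f^\circ$. First, by Lemma \ref{thm:splineexpansion}, expand $f^\circ = \sum_{k\geq 0}\sum_{\ell\in I_k^d} \tilde{\beta}_{k,\ell}\omega_{k,\ell}^d$ with the coefficient norm $\norm{\tilde{\beta}}_{b_{p,q}^\tau}$ uniformly bounded. Since $\tau>d/p$ and (in the challenging regime) $p<2$, perform adaptive $N'$-term nonlinear approximation: keep the $N'$ pairs $(k,\ell)$ with largest contribution across resolutions $k \leq K' := \lceil(1+\nu^{-1})\log_2 N'/d\rceil$. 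Classical nonlinear approximation theory for $B_{p,q}^\tau$ (cf.\ \citet{Devore88, Dung11}) yields the truncation bound $\norm{f^\circ - f^\circ_{N'}}_{L^\infty(\XX)} \leq \tfrac{1}{2}N'^{-\tau/d}$, accounting for half of the target error.

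Next, approximate each retained wavelet $\omega_{k,\ell}^d(x) = \prod_{i=1}^d \iota_m(2^{k_i}x_i - \ell_i)$ by a ReLU network $\tilde{\omega}_{k,\ell}$ of sup-error $\epsilon$, via three components: (i) a single affine layer computing $2^{k_i}x_i - \ell_i$ for $i=1,\dots,d$, whose weights and biases have magnitude at most $2^{K'}$; (ii) $d$ parallel constant-depth, constant-width ReLU blocks approximating the univariate piecewise-polynomial $\iota_m$ to arbitrary accuracy (using the standard trick of exactly representing continuous piecewise-linear functions and composing $\lceil m/2\rceil$ multiplications of a base hat function); (iii) a Yarotsky-style ReLU multiplication tree combining the $d$ scalar outputs, contributing $O(\log_2(d/\epsilon))$ depth. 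Choosing $\epsilon = N'^{-\tau/d}(\log N')^{-1}$ and aggregating additively,
\begin{equation*}
\tilde{f} := \sum_{(k,\ell)\in \mathcal{I}_{N'}} \tilde{\beta}_{k,\ell}\,\tilde{\omega}_{k,\ell},
\end{equation*}
placing the $N'$ disjoint subnetworks in parallel and collapsing via a final linear layer with the $\tilde{\beta}_{k,\ell}$ as its weights. The depth $L$ is dominated by the multiplication tree and scales as $O(\log N')$; the width is $W = N'W_0$; the total sparsity is $S = O(LW_0^2 N')$; and the weight magnitude bound $M$ is controlled jointly by the scaling factors $2^{K'} \lesssim N'^{(1+1/\nu)/d}$ and the Yarotsky multiplication constants, verifying $M = O(N'^{1/\nu + 1/d})$.

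The main obstacle is tightly bounding the aggregation error $\norm{f^\circ_{N'} - \tilde{f}}_{L^\infty(\XX)} \leq \sum_{(k,\ell)\in \mathcal{I}_{N'}} |\tilde{\beta}_{k,\ell}|\,\norm{\omega_{k,\ell}^d - \tilde{\omega}_{k,\ell}}_{L^\infty(\XX)}$ against the target $\tfrac{1}{2}N'^{-\tau/d}$. A naive bound $\sum |\tilde{\beta}_{k,\ell}| \lesssim N'$ is too lossy; instead, one applies Hölder's inequality over resolutions together with the $b_{p,q}^\tau$-norm estimate $\big(\sum_{\ell}|\tilde{\beta}_{k,\ell}|^p\big)^{1/p} \lesssim 2^{-k(\tau - d/p)}$, extracting a sharper bound $\sum |\tilde{\beta}_{k,\ell}| \lesssim N'^{\tau/d}\polylog(N')$ that is precisely absorbed by the $(\log N')^{-1}$ slack in $\epsilon$. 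A secondary subtlety is tracking that the Yarotsky multiplication can indeed be implemented with weights of magnitude $M$ as stated — this is standard but must be combined carefully with the $2^{K'}$ input scaling and the exponent arithmetic in $\nu$.
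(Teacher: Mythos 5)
This lemma is quoted verbatim from \citet{Suzuki19}, Proposition~1; the paper does not prove it, so the comparison is against that reference's argument. Your high-level construction — adaptively selecting $N'$ B-spline wavelets up to resolution $K' \asymp K(1 + \nu^{-1})$, approximating each by a Yarotsky-style ReLU subnetwork of logarithmic depth, and placing the subnetworks in parallel with a final linear combination — is indeed the right one and matches Suzuki's. Your parameter accounting ($L \sim \log N'$, $W = N' W_0$, $S \sim L W_0^2 N'$, $M$ controlled by the input scaling $2^{K'} = N'^{(1+1/\nu)/d} \leq N'^{1/\nu+1/d}$) also checks out.

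The step that fails is your control of the aggregation error. First, the claimed H\"older bound is wrong: from $\bigl(\sum_\ell |\tilde\beta_{k,\ell}|^p\bigr)^{1/p} \lesssim 2^{-k(\tau-d/p)}$ and H\"older with exponent $p'=p/(p-1)$ over the $\asymp 2^{kd}$ retained indices at resolution $k \leq K$, you get $\sum_\ell |\tilde\beta_{k,\ell}| \lesssim 2^{kd(1-1/p)} 2^{-k(\tau-d/p)} = 2^{k(d-\tau)}$, and summing gives $N'^{1-\tau/d}$ when $\tau < d$ (and $O(1)$ when $\tau > d$) — not $N'^{\tau/d}$. Second, even granting your bound, the arithmetic does not close: $N'^{\tau/d}\polylog(N') \cdot N'^{-\tau/d}(\log N')^{-1} = \polylog(N')/\log N'$, which is nowhere near $N'^{-\tau/d}$. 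The $(\log N')^{-1}$ slack cannot absorb a factor that is polynomially large in $N'$.

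The correct mechanism is not a global sum over coefficients but the \emph{bounded overlap} of the B-spline supports, which the ReLU approximants preserve by construction: Lemma~\ref{thm:splineapprox} guarantees $\supp\tilde\omega \subseteq [0,m+1]^d$, so the scaled $\tilde\omega_{k,\ell}$ has the same compact support as $\omega_{k,\ell}^d$. Hence at any fixed $x$, at most $(m+1)^d$ wavelets per resolution are active, across only $K' = O(\log N')$ resolutions. Combined with the pointwise coefficient decay $\sup_\ell |\tilde\beta_{k,\ell}| \leq \bigl(\sum_\ell |\tilde\beta_{k,\ell}|^p\bigr)^{1/p} \lesssim 2^{-k(\tau-d/p)}$ (which is summable in $k$ since $\tau > d/p$), the pointwise aggregation error is
\begin{equation*}
\sum_{k \leq K'} \sum_{\ell : x \in \supp\omega_{k,\ell}^d} |\tilde\beta_{k,\ell}|\, \bigl|\omega_{k,\ell}^d(x) - \tilde\omega_{k,\ell}(x)\bigr| \;\lesssim\; (m+1)^d\, \delta \sum_{k \geq 0} 2^{-k(\tau - d/p)} \;=\; O(\delta),
\end{equation*}
which with $\delta = \epsilon = N'^{-\tau/d}(\log N')^{-1}$ is comfortably within the target. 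You should replace the H\"older argument with this local-support argument; it is both simpler and what actually makes the construction work for all $\tau > d/p$, not just $\tau \geq d$.
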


Also note that from Assumption \ref{ass:basis} it follows that $\norm{\psi_j}_{L^\infty(\PP_{\XX})} \leq C_\infty N^{1/2}$. Setting $N'\asymp \delta_N^{-d/\tau}$ and applying the covering number bound in Lemma \ref{thm:coveringnumber}, after some algebra we obtain the following counterpart to Lemma \ref{thm:dnnapprox}.

\begin{cor}\label{thm:coarseapprox}
For any $\delta_N>0$, Assumption \ref{ass:close} is satisfied by taking
\begin{equation*}
\FF_N = \{\Pi_{B_N'}\circ\phi\mid \phi = (\phi_j)_{j=1}^N, \phi_j\in \FF_\textup{DNN}(L,W,S,M)\}
\end{equation*}
where $B_N' = C_\infty N^{1/2}$ and
\begin{equation*}
L = O(\log\delta_N^{-1}), \quad W = O(\delta_N^{-d/\tau}), \quad S = O(\delta_N^{-d/\tau} \log\delta_N^{-1}), \quad \log M = O(\log\delta_N).
\end{equation*}
Also, the metric entropy of $\FF_N$ is bounded as
\begin{equation*}
\VV(\FF_N,\norm{\cdot}_{L^\infty},\epsilon) \lesssim N\delta_N^{-d/\tau} \log \frac{1}{\delta_N} \left(\log \frac{1}{\epsilon} + \log^2 \frac{1}{\delta_N}\right).
\end{equation*}
\end{cor}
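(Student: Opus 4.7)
The plan is to invoke the DNN approximation result for the Besov space $\UU(B_{p,q}^\tau(\XX))$ from Suzuki19 (the proposition quoted immediately before the corollary) component-wise, stack the resulting networks into an $N$-vector, and then apply Lemma \ref{thm:coveringnumber} to deduce the metric entropy bound. This mirrors the strategy used to verify Assumption \ref{ass:close} for the finer basis (Lemma \ref{thm:dnnapprox}), but with $\psi_j^\circ$ now treated as a generic element of $\UU(B_{p,q}^\tau(\XX))$ rather than an explicit B-spline, which forces the use of the more expensive approximation budget $N'^{-\tau/d}$.

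First I calibrate the approximation accuracy. Since by hypothesis $\psi_j^\circ \in \UU(B_{p,q}^\tau(\XX))$ for each $\ubar{N} \le j \le \bar{N}$, the cited proposition, applied at budget parameter $N'$, produces a ReLU network $\tilde\phi_j \in \FF_\textup{DNN}(L,W,S,M)$ satisfying $\|\psi_j^\circ - \tilde\phi_j\|_{L^\infty(\XX)} \le N'^{-\tau/d}$. Choosing $N' \asymp \delta_N^{-d/\tau}$ gives accuracy $\delta_N$ (absorbing polylogarithmic couplings into $\delta_N$), and substituting into the parameter formulas from the cited proposition yields exactly the stated scalings $L = O(\log \delta_N^{-1})$, $W = O(N') = O(\delta_N^{-d/\tau})$, $S = O(L N') = O(\delta_N^{-d/\tau}\log \delta_N^{-1})$, and $\log M = O(\log N') = O(\log \delta_N^{-1})$.

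For the boundedness condition, Assumption \ref{ass:basis} with $r=1/2$ gives
\[ \Big\|(\psi_j^\circ(x))_{j=\ubar{N}}^{\bar{N}}\Big\|_2^2 = \sum_{j=\ubar{N}}^{\bar{N}} \psi_j^\circ(x)^2 \le C_\infty^2 N = (B_N')^2, \]
so the oracle vector lies inside the clipping ball. Setting $\phi^* = \Pi_{B_N'}\circ \tilde\phi$ with $\tilde\phi = (\tilde\phi_j)_{j=\ubar{N}}^{\bar{N}}$, and using that $\Pi_{B_N'}$ is the nearest-point projection onto a convex set containing the target $\psi^\circ(x)$, one has $\|\Pi_{B_N'}(\tilde\phi(x)) - \psi^\circ(x)\|_2 \le \|\tilde\phi(x) - \psi^\circ(x)\|_2$, from which the component-wise $L^\infty(\PP_\XX)$ approximation $\max_j \|\psi_j^\circ - \phi_j^*\|_{L^\infty(\PP_\XX)} \le \delta_N$ is preserved up to constants (with a slightly enlarged clipping radius if needed to absorb constants), verifying Assumption \ref{ass:close}.

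Finally, the covering-number bound follows by applying Lemma \ref{thm:coveringnumber} to each of the $N$ coordinate networks, taking the maximum over coordinates to match the $L^\infty$ metric, and noting that $\Pi_{B_N'}$ is $1$-Lipschitz so post-composition does not inflate the covering number:
\[ \VV(\FF_N, \|\cdot\|_{L^\infty}, \epsilon) \le N\,S\big(\log L + (L-1)\log M + 2L\log(W+1) + \log \epsilon^{-1}\big). \]
Plugging in the parameter scalings, the dominant contribution inside the parentheses is $L\log M + L\log W \asymp \log^2 \delta_N^{-1}$, and multiplying by $NS \asymp N \delta_N^{-d/\tau}\log \delta_N^{-1}$ yields the claimed entropy bound. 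The main obstacle is a bookkeeping one: Suzuki19 Proposition 1 couples $\epsilon = N'^{-\tau/d}(\log N')^{-1}$ and the weight bound $M = O(N'^{1/\nu + 1/d})$, so I must carefully verify that these polylogarithmic and polynomial-in-$N'$ factors are absorbed cleanly into $\log \delta_N^{-1}$ under the choice $N'\asymp \delta_N^{-d/\tau}$, and that the convex-projection argument preserves the component-wise accuracy (rather than merely the $2$-norm distance).
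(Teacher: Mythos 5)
Your proposal matches the paper's proof almost exactly: both invoke Suzuki (2019), Proposition 1 at budget $N' \asymp \delta_N^{-d/\tau}$ to approximate each $\psi_j^\circ \in \UU(B_{p,q}^\tau(\XX))$, stack $N$ such networks, and apply Lemma \ref{thm:coveringnumber}; your parameter bookkeeping $L = O(\log\delta_N^{-1})$, $S = O(\delta_N^{-d/\tau}\log\delta_N^{-1})$, $\log M = O(\log\delta_N^{-1})$, and the identification of $L\log M + L\log W \asymp \log^2\delta_N^{-1}$ as the dominant factor inside the covering-number formula are all correct. Regarding the obstacle you flag yourself: the convex-projection inequality only controls the Euclidean distance, and the resulting bound $\|\Pi_{B_N'}(\tilde\phi(x)) - \psi^\circ(x)\|_2 \le 2\sqrt{N}\delta_N$ does not by itself yield the component-wise bound $\max_j|\phi_j^*(x)-\psi_j^\circ(x)| \lesssim \delta_N$ required by Assumption \ref{ass:close}; the standard fix is to take $B_N' = C\sqrt{N}$ with $C$ slightly larger than $C_\infty$ so that $\|\tilde\phi(x)\|_2 \le C_\infty\sqrt{N} + \sqrt{N}\delta_N \le B_N'$ and the projection acts trivially, which is exactly the slack the paper allows via $B_N' = O(\sqrt{N})$ in Lemma \ref{thm:dnnapprox} and should be read into the nominal $B_N' = C_\infty N^{1/2}$ of this corollary as well.
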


Then by combining with Lemma \ref{thm:entropy} and Proposition \ref{thm:approx} via Theorem \ref{thm:schmidt}, it follows that
\begin{equation*}
\VV(\TT_N,\norm{\cdot}_{L^\infty},\epsilon) \lesssim N^2\log\frac{N}{\epsilon} + N\delta_N^{-d/\tau} \log \frac{1}{\delta_N} \left(\log \frac{N}{\epsilon} + \log^2 \frac{1}{\delta_N}\right).
\end{equation*}
and
\begin{align*}
\bar{R}(\widehat{\Theta}) &\lesssim \frac{N}{n}\log{N}+\frac{N^{2}}{n^2}\log^2 N+ N^{-2\alpha/d} + N^2\delta_N^2 \\
&\qquad + \frac{N^2}{T}\log\frac{N}{\epsilon} + \frac{N}{T}\delta_N^{-d/\tau} \log \frac{1}{\delta_N} \left(\log \frac{N}{\epsilon} + \log^2 \frac{1}{\delta_N}\right) +\epsilon.
\end{align*}

Substituting $\delta_N\asymp N^{-1-\alpha/d}$ and $\epsilon\asymp N^{-2\alpha/d}$ concludes the desired bound. \qed

\section{Details on Sequential Input}

\subsection{Definitions and Results}\label{app:takadef}

\paragraph{$\gamma$-smooth class.} We first define the $\gamma$-smooth function class introduced by \citet{Okumoto22}. Let $r\in\ZZ_0^{d\times\infty}$ and $s\in\bar{\NN}_0^{d\times\infty}$, where $\bar{\NN} = \NN\cup\{0\}$ and the subscript $0$ indicates restriction to the subset of elements with a finite number of nonzero components. Consider the orthonormal basis $(\psi_r)_{r}$ of $L^2([0,1]^{d\times\infty})$ given as
\begin{equation*}
\psi_r(x) = \prod_{i\in\ZZ} \prod_{j=1}^d \psi_{r_{ij}}(x_{ij}), \quad \psi_{r_{ij}}(x_{ij}) = \begin{cases}
\sqrt{2}\cos (2\pi r_{ij} x_{ij}) & r_{ij}<0 \\ 1 & r_{ij}=0 \\ \sqrt{2}\sin (2\pi r_{ij} x_{ij}) & r_{ij}>0
\end{cases}.
\end{equation*}
The frequency $s$ component $\delta_s(f)$ of $f\in L^2([0,1]^{d\times\infty})$ is defined as
\begin{equation*}
\delta_s(f):= \sum_{\lfloor 2^{s_{ij}-1}\rfloor \leq \abs{r_{ij}}< 2^{s_{ij}}} \langle f,\psi_r\rangle\psi_r.
\end{equation*}
For a monotonically nondecreasing function $\gamma: \bar{\NN}_0^{d\times\infty} \to\RR$ and $p\geq 2, q\geq 1$, the $\gamma$-smooth norm and function class are defined as
\begin{equation*}
\norm{f}_{\FF_{p,q}^\gamma(\PP_{\XX})} := \Bigg(\sum_{s\in\bar{\NN}_0^{d\times\infty}} 2^{q\gamma(s)} \norm{\delta_s(f)}_{p,\PP_{\XX}}^q\Bigg)^{1/q}
\end{equation*}
and
\begin{equation*}
\FF_{p,q}^\gamma(\PP_{\XX}):= \big\{f\in L^2([0,1]^{d\times\infty}) \mid \norm{f}_{\FF_{p,q}^\gamma(\PP_{\XX})}<\infty\big\}.
\end{equation*}
The $\gamma$-smooth class over finite-dimensional input space $[0,1]^{d\times m}$ is similarly defined.

In particular, we consider two specific cases of $\gamma$ for the component-wise smoothness parameter $\alpha\in\RR_{>0}^{d\times\infty}$, for which we also define the corresponding \emph{degrees of smoothness} $\alpha^\dagger\in\RR_{>0}$. Denote by $(\tilde{\alpha}_j)_{j=1}^\infty$ all components of $\alpha$ sorted by ascending magnitude.
\begin{itemize}
    \item Mixed smoothness: $\gamma(s) = \langle \alpha,s\rangle$, $\alpha^\dagger = \tilde{\alpha}_1 = \max_{i,j}\alpha_{ij}$.
    \item Anisotropic smoothness: $\gamma(s) = \max_{i,j} \alpha_{ij}s_{ij}$, $\alpha^\dagger = (\sum_{i,j}\alpha_{ij}^{-1})^{-1}$.
\end{itemize}
Furthermore, the weak $l^\eta$-norm of $\alpha$ is defined as $\norm{\alpha}_{wl^\eta}:= \sup_j j^\eta \tilde{\alpha}_j^{-1}$ for $\eta>0$.

\paragraph{Piecewise $\gamma$-smooth class.} The piecewise $\gamma$-smooth class is an extension of the $\gamma$-smooth class allowing for arbitrary bounded permutations of the tokens of an input \citep{Takakura23}. For a threshold $V\in\NN$ and an index set $\Lambda$, let $\{\Omega_\lambda\}_{\lambda\in\Lambda}$ be a disjoint partition of $\supp\PP_{\XX}$ and $\{\pi_\lambda\}_{\lambda\in\Lambda}$ a set of bijections from $[2V+1]$ to $[-V:V]$. Further define the permutation operator $\Pi:\supp\PP_{\XX}\to \RR^{d\times(2V+1)}$ as
\begin{equation*}
\Pi(x) = (x_{\pi_\lambda(1)}, \cdots, x_{\pi_\lambda(2V+1)}), \quad \text{if } x\in \Omega_\lambda.
\end{equation*}
Then the piecewise $\gamma$-smooth function class is defined as
\begin{equation*}
\mathscr{P}_{p,q}^\gamma(\PP_{\XX}) := \big\{g = f\circ\Pi \mid f\in \FF_{p,q}^\gamma(\PP_{\XX}), \norm{g}_{\mathscr{P}_{p,q}^\gamma(\PP_{\XX})}<\infty \big\},
\end{equation*}
where
\begin{equation*}
\norm{g}_{\mathscr{P}_{p,q}^\gamma(\PP_{\XX})} := \Bigg(\sum_{s\in\bar{\NN}_0^{d\times[-V:V]}} 2^{q\gamma(s)} \norm{\delta_s(f)\circ\Pi}_{p,\PP_{\XX}}^q\Bigg)^{1/q}.
\end{equation*}

Next, we state the set of assumptions inherited from \citet{Takakura23}. In particular, the \emph{importance function} makes precise a notion of relative importance between tokens which is preserved by permutations.

\begin{ass}[smoothness and importance function]\label{ass:tfsmooth}
$1<q\leq 2$ and:
\begin{enumerate}
    \item The smoothness parameter $\alpha$ satisfies $\norm{\alpha}_{wl^\eta} \leq 1$ and $\alpha_{ij} = \Omega(\abs{i}^\eta)$ for some $\eta>1$. For mixed smoothness, we also require $\tilde{\alpha}_1<\tilde{\alpha}_2$.
    \item There exists a shift-equivariant map $\mu:\supp\PP_{\XX}\to \RR^\infty$ such that $\mu_0\in\UU(\FF_{\infty,q}^\gamma)$, $\norm{\mu_0}\leq 1$ and $\Omega_\lambda = \{x\in\supp\PP_{\XX}\mid \mu(x)_{\pi_\lambda(1)}>\cdots> \mu(x)_{\pi_\lambda(2V+1)}\}$ for all $\lambda\in\Lambda$. $\mu$ is moreover well-separated, that is $\mu(x)_{\pi_\lambda(v)} - \mu(x)_{\pi_\lambda(v+1)} \geq C_\mu v^{-\varrho}$ for $C_\mu,\varrho >0$.
\end{enumerate}
\end{ass}

We focus on parameter ranges $1<q\leq 2$ and $\eta>1$ strictly for simplicity of presentation, but the cases $q=1,q>2$ and $\eta>0$ can be handled with some more analysis. Note that $\eta>1$ ensures $\alpha^\dagger>0$ for anisotropic smoothness.

Additionally, the assumption pertaining to our ICL setup is stated as follows.

\begin{ass}\label{ass:tficl}
For $r\in\ZZ_0^{d\times\infty}$ the coefficients $\beta_r$ corresponding to $\psi_r$ are independent and satisfy for $s\in\bar{\NN}_0^{d\times\infty}$ such that the frequency component $\delta_s(f)$ contains the element $\psi_r$,
\begin{equation}\label{eqn:freqdecay}
\EE{\beta}{\beta_r} = 0,\quad \EE{\beta}{\beta_r^2}\lesssim 2^{-(2+1/\alpha^\dagger)\gamma(s)} \gamma(s)^{-2}.
\end{equation}
Also $\Sigma_{\Psi,N}\asymp\bI_N$ holds, for example $\PP_{\XX}$ is bounded above and below with respect to the product measure $\lambda^{d\times\infty}$ on $\mathscr{B}([0,1]^{d\times\infty})$ of the uniform measure $\lambda$ on $\mathscr{B}([0,1])$.
\end{ass}

We then obtain the following result for ICL with transformers:

\begin{thm}[minimax optimality of ICL for sequential input]\label{thm:piecewiseoptimal}
Let $\FF^\circ = \{f\in \UU(\mathscr{P}_{p,q}^\gamma(\PP_{\XX})) \mid \norm{f}_{L^\infty(\PP_{\XX})}\leq B\}$ for some $B>0$ where $\gamma$ corresponds to mixed or anisotropic smoothness. Suppose Assumptions \ref{ass:tfsmooth} and \ref{ass:tficl} hold. Then for $n\gtrsim N\log N$ we have
\begin{equation*}
\bar{R}(\widehat{\Theta}) \lesssim N^{-2\alpha^\dagger} +\frac{N\log N}{n} + \frac{N^{2\vee(1+1/\alpha^\dagger)}\polylog(N)}{T}.
\end{equation*}
Hence if $T\gtrsim nN^{1\vee 1/\alpha^\dagger}$ and $N\asymp n^\frac{1}{2\alpha^\dagger+1}$, ICL achieves the rate $n^{-\frac{2\alpha^\dagger}{2\alpha^\dagger+1}}\polylog(n)$.
\end{thm}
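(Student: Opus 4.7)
The plan is to apply the general framework of Theorem \ref{thm:schmidt}, Proposition \ref{thm:approx}, and Lemma \ref{thm:entropy}, paralleling the Besov space derivation in Appendix \ref{app:minimaxbesov}, with three substantive changes: the basis is now the trigonometric system $(\psi_r)_r$ for the $\gamma$-smooth class (precomposed with the importance map $\Pi$), the representation class $\FF_N$ consists of clipped deep transformers from $\FF_\textup{TF}$, and the relevant truncation/entropy rates involve $\alpha^\dagger$ and $\polylog$ factors inherited from \citet{Takakura23}.

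First, I set up the basis. For a resolution cutoff $K$, take $\{\psi_j^\circ\}_{j=1}^N$ to consist of all composed basis elements $\psi_r\circ\Pi$ with frequency index $s$ satisfying $\gamma(s)\leq K$, scaled so that $\Sigma_{\Psi,N}\asymp \bI_N$ by Assumption \ref{ass:tficl}. For mixed smoothness one has $N\asymp 2^{K/\alpha^\dagger}\polylog(2^K)$ and for anisotropic smoothness $N\asymp 2^{K/\alpha^\dagger}$, using the weak $l^\eta$ control of $\alpha$. The natural sparsity bound (\ref{eqn:sparse}) holds with $r=1/2$ since the trigonometric atoms are uniformly bounded and there are $N$ of them. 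For Assumption \ref{ass:decay}, the approximation error estimate from the $\gamma$-smooth norm (\citet{Okumoto22}, \citet{Takakura23}) gives $\norm{F_\beta^\circ-F_{\beta,N}^\circ}_{L^2}^2\lesssim 2^{-2K}\asymp N^{-2\alpha^\dagger}$ uniformly in $\beta\in\UU(\mathscr{P}_{p,q}^\gamma(\PP_{\XX}))$; combined with the frequency-wise decay \eqref{eqn:freqdecay}, this yields (\ref{eqn:decay}) with $s=\alpha^\dagger$. Since the basis is already orthonormal and independent across $r$, we may set $\ubar{N}=1, \bar{N}=N$, and boundedness of $\Tr(\Sigma_{\bar\beta,N})$ follows directly from \eqref{eqn:freqdecay} as in Remark \ref{rmk:on}.

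Second, I define the representation class $\FF_N$ by composing $N$ independent copies of clipped deep transformers $\phi_j\in\FF_\textup{TF}(J,U,D,H,L,W,S,M)$ projected onto a centered ball of radius $B_N'=O(\sqrt{N})$. To verify Assumption \ref{ass:close}, I invoke the approximation theorem of \citet{Takakura23} for piecewise $\gamma$-smooth functions: their construction uses one sliding-window attention block (leveraging Assumption \ref{ass:tfsmooth} on $\mu$) to sort tokens according to the importance function $\mu$, which reduces approximation of a piecewise function $f\circ\Pi$ to approximation of a smooth $f$ on finitely many relevant coordinates, followed by DNN modules that approximate the truncated Fourier basis up to accuracy $\delta_N$. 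The requisite depth and width scale as $L,W,S=O(\delta_N^{-1/\alpha^\dagger}\polylog(\delta_N^{-1}))$ when $\alpha^\dagger<1$ (and are constant up to log factors when $\alpha^\dagger\geq 1$), with the attention-layer hyperparameters $J,U,D,H$ of polylogarithmic order.

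Third, I bound the metric entropy of $\FF_N$. Adapting the covering estimate of \citet{Takakura23}, Proposition B.7 (which accounts for each attention head via an $L$-Lipschitz softmax argument and each DNN block via Lemma \ref{thm:coveringnumber}), one obtains $\VV(\FF_N,\norm{\cdot}_{L^\infty},\epsilon)\lesssim N\cdot \delta_N^{-1/\alpha^\dagger}\polylog(N/\epsilon)$ when $\alpha^\dagger<1$ and $N\polylog(N/\epsilon)$ otherwise. Plugging this into Lemma \ref{thm:entropy} yields
\begin{equation*}
\VV(\TT_N,\norm{\cdot}_{L^\infty},\epsilon) \lesssim N^2\log\frac{N}{\epsilon} + N^{1\vee (1+1/\alpha^\dagger)}\polylog(N/\epsilon),
\end{equation*}
where the attention-matrix term $N^2$ and the representation-class term are added separately.

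Finally, I combine via Theorem \ref{thm:schmidt} with Proposition \ref{thm:approx} applied at $r=1/2$, $s=\alpha^\dagger$. Choosing $\delta_N\asymp N^{-1-\alpha^\dagger}$ (so that the $\delta_N$-dependent terms are absorbed into the $N^{-2\alpha^\dagger}$ approximation error) and $\epsilon\asymp N^{-2\alpha^\dagger}$ gives
\begin{equation*}
\bar{R}(\widehat{\Theta})\lesssim N^{-2\alpha^\dagger} + \frac{N\log N}{n} + \frac{N^{2\vee(1+1/\alpha^\dagger)}\polylog(N)}{T},
\end{equation*}
and optimizing $N\asymp n^{1/(2\alpha^\dagger+1)}$ together with $T\gtrsim nN^{1\vee 1/\alpha^\dagger}$ yields the advertised rate $n^{-2\alpha^\dagger/(2\alpha^\dagger+1)}\polylog(n)$.

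I expect the main obstacle to be a clean transfer of the approximation and covering guarantees of \citet{Takakura23}, which were developed for ordinary supervised regression, into the multi-output setting required by $\FF_N$: one must build $N$ transformer modules that jointly encode the basis, keep the $L^\infty$ norm bounds and depth/width scaling consistent, and correctly handle the permutation via the importance function $\mu$ so that the piecewise structure is simultaneously resolved across all $N$ outputs. Verifying that Assumption \ref{ass:tfsmooth} guarantees a single attention layer suffices to sort tokens for every basis element, and that the resulting architecture still fits within $\FF_\textup{TF}(J,U,D,H,L,W,S,M)$ with polylogarithmic $J,U,D,H$, is the principal technical effort.
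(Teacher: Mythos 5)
Your overall plan matches the paper's: invoke Theorem \ref{thm:schmidt}, Proposition \ref{thm:approx}, and Lemma \ref{thm:entropy}, verify Assumptions \ref{ass:basis}--\ref{ass:close} for the trigonometric basis $(\psi_r\circ\Pi)$ with $\ubar{N}=1,\bar{N}=N$, $r=1/2$, $s=\alpha^\dagger$, then plug in the approximation and covering guarantees from \citet{Takakura23} and balance $\delta_N,\epsilon,N$.

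However there is a concrete inconsistency in your metric entropy step. You write $\VV(\FF_N,\norm{\cdot}_{L^\infty},\epsilon)\lesssim N\cdot\delta_N^{-1/\alpha^\dagger}\polylog(N/\epsilon)$, apparently treating $\FF_N$ as $N$ independent copies of $\FF_\textup{TF}$, each covered at cost $\delta_N^{-1/\alpha^\dagger}\polylog$. But if you substitute $\delta_N\asymp N^{-1-\alpha^\dagger}$, this gives $N\cdot N^{1+1/\alpha^\dagger}=N^{2+1/\alpha^\dagger}$, not the $N^{1+1/\alpha^\dagger}$ you then implicitly use to obtain the displayed $\VV(\TT_N)\lesssim N^2\log(N/\epsilon)+N^{1\vee(1+1/\alpha^\dagger)}\polylog(N/\epsilon)$. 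This is not a cosmetic slip: propagating the extra factor of $N$ would yield the pretraining term $N^{2+1/\alpha^\dagger}/T$ and a strictly stronger requirement on $T$ than the stated $T\gtrsim nN^{1\vee 1/\alpha^\dagger}$ (e.g.\ for $\alpha^\dagger<1$ the exponent on $n$ in the $T$ threshold changes). The paper, by contrast, does not multiply by $N$: it takes $\VV(\FF_\textup{TF})\lesssim\delta_N^{-1/\alpha^\dagger}\polylog(N,1/\delta_N)\log(1/\epsilon)$ (from Theorem \ref{thm:takaentropy} with $S\lesssim 2^{K'/\alpha^\dagger}\polylog(K')$) \emph{directly} as the entropy contribution of $\FF_N$. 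For this to be legitimate, the $N$ output coordinates cannot be built from $N$ independent transformers; they must share all attention and feedforward layers — the sorting via $\mu$, the sliding-window attention, and the Fourier-feature blocks — and differ only in the linear readout, so the parameter count (and hence the covering) of the joint $N$-output module remains of order $\delta_N^{-1/\alpha^\dagger}\polylog$. You correctly flag this shared-architecture concern as the principal technical obstacle in your last paragraph, but your covering bound does not actually implement it: you still multiply by $N$ and then silently drop the factor. To complete the proof along the paper's lines, you need to make the shared-module construction explicit, verify it still lives in $\FF_\textup{TF}(J,U,D,H,L,W,S,M)$ with the Theorem \ref{thm:takaapprox} parameter scalings, and conclude $\VV(\FF_N,\norm{\cdot}_{L^\infty},\epsilon)\lesssim\delta_N^{-1/\alpha^\dagger}\polylog(N,1/\delta_N)\log(1/\epsilon)$ without a leading $N$. (Your parenthetical that $L,W,S=O(\polylog)$ when $\alpha^\dagger\geq 1$ is also not right — $S\lesssim\delta_N^{-1/\alpha^\dagger}\gtrsim N$ always under the choice $\delta_N\asymp N^{-1-\alpha^\dagger}$ — though this does not affect the rate since the attention entropy $N^2$ dominates when $\alpha^\dagger\geq 1$.)
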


\subsection{Proof of Theorem \ref{thm:piecewiseoptimal}}\label{app:takaproof}

Since the system $(\psi_r)_{r}$ is orthonormal, we may take $\ubar{N}=1,\bar{N}=N$ following Remark \ref{rmk:on}. We mainly utilize the following approximation and covering number bounds.

\begin{thm}[\citet{Takakura23}, Theorem 4.5]\label{thm:takaapprox}
For a function $F^\circ\in \UU(\mathscr{P}_{p,q}^\gamma(\PP_{\XX}))$, $\norm{F^\circ}_{L^\infty(\PP_{\XX})}\leq B$ and any $K>0$, there exists a transformer $\widehat{F}\in \FF_\textup{TF}(J,U,D,H,L,W,S,M)$ such that
\begin{equation*}
\norm{\widehat{F}_0 - F^\circ}_{L^2(\PP_{\XX})} \lesssim 2^{-K},
\end{equation*}
where
\begin{align*}
&J \lesssim K^{1/\eta}, \quad \log U\lesssim \log K\vee \log V, \quad D\lesssim K^{2(1+\varrho)/\eta}\log V, \quad H\lesssim (\log K)^{1/\eta},\\
& L\lesssim K^2,\quad W \lesssim 2^{K/\alpha^\dagger} K^{1/\eta}, \quad S\lesssim 2^{K/\alpha^\dagger} K^{2+2/\eta}, \quad \log M\lesssim K\vee \log\log V.
\end{align*}
\end{thm}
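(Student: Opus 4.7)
The plan is to build $\widehat{F}$ in three stages: undo the piecewise permutation via soft-sorting on an approximated importance function; truncate the resulting $\gamma$-smooth target to a finite Fourier expansion; realize each retained trigonometric basis element through a tensor-product ReLU construction in the MLP sub-networks. Fix the target accuracy $\epsilon = 2^{-K}$.

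Stage 1 (soft-sort). Since $F^\circ = f\circ\Pi$ and $\Pi$ is fully determined by the ordering of $(\mu(x)_i)_{i\in[-V:V]}$, I would devote the first $O(K^{1/\eta})$ transformer blocks to (i) approximating $\mu_0$ token-wise -- using shift-equivariance of $\mu$ and the fact that $\mu_0\in\UU(\FF_{\infty,q}^\gamma)$, so the same architectural recipe applies at lower cost -- and (ii) implementing iterative soft-sort: sliding-window multi-head attention whose keys and queries are built from $\tilde\mu$ concentrates softmax mass on the largest-$\tilde\mu$ neighbor in the window, extracts it, and recursively sorts the remainder. The well-separation hypothesis $\mu(x)_{\pi_\lambda(v)}-\mu(x)_{\pi_\lambda(v+1)}\geq C_\mu v^{-\varrho}$ sets the softmax temperature needed to keep the per-token sorting error at $\epsilon$, forcing $D\lesssim K^{2(1+\varrho)/\eta}\log V$, $H\lesssim (\log K)^{1/\eta}$ and $\log U\lesssim \log K\vee \log V$. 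Only $J\lesssim K^{1/\eta}$ outer blocks are required because $\alpha_{ij}=\Omega(|i|^\eta)$ implies tokens with $|i|\gtrsim K^{1/\eta}$ carry so much smoothness that their mis-sorting contributes $\ll\epsilon$.

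Stages 2 and 3 (Fourier truncation and Yarotsky realization). On the canonical-order input, expand $f=\sum_{s\in\bar{\NN}_0^{d\times\infty}}\delta_s(f)$ and truncate to $\{s:\gamma(s)\leq K\}$; H\"older with $1<q\leq 2$ controls the tail as $\norm{\sum_{\gamma(s)>K}\delta_s(f)}_{L^2}^q \leq \norm{f}_{\FF_{p,q}^\gamma}^q\sum_{\gamma(s)>K}2^{-q\gamma(s)} \lesssim 2^{-qK}$. The retained Fourier cardinality is $\sum_{\gamma(s)\leq K} 2^{\|s\|_1}\lesssim 2^{K/\alpha^\dagger}\polylog(K)$ -- the harmonic-mean exponent $K\sum_{i,j}\alpha_{ij}^{-1}$ for anisotropic $\gamma$, the single dominant direction for mixed $\gamma$ -- giving width $W\lesssim 2^{K/\alpha^\dagger}K^{1/\eta}$. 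Each retained $\psi_r = \prod_{i,j}\psi_{r_{ij}}(x_{ij})$ is a tensor product of univariate sines/cosines of frequency $\leq 2^{s_{ij}}$, each approximable by a piecewise-polynomial ReLU net of depth $O(K)$; the Yarotsky multiplication gadget glues factors together with $O(K)$ depth per binary product, yielding $L\lesssim K^2$ per term. Placing the $2^{K/\alpha^\dagger}$ sub-networks in parallel with a final linear readout weighted by Fourier coefficients (bounded in magnitude by $\norm{f}_{\FF_{p,q}^\gamma}\leq 1$) produces sparsity $S\lesssim 2^{K/\alpha^\dagger}K^{2+2/\eta}$ and weight bound $\log M\lesssim K\vee\log\log V$.

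The main obstacle is tracking how soft-sort errors propagate through the subsequent Fourier approximator. Because the latter is only polynomially Lipschitz on the band-limited block (Bernstein's inequality on a $2^K$-bandwidth), any residual softmax misplacement must be tightly budgeted so that the composed error stays $O(\epsilon)$ -- this is exactly what forces the $(1+\varrho)$ factor in $D$ and the quantitative form of the well-separation hypothesis. A secondary subtlety is that the $\mu$-approximation sub-block and the $f$-approximation sub-block must share compatible parameter scalings so that their composition does not inflate the weight-norm budget beyond $\log M\lesssim K\vee\log\log V$; this constrains the width-accuracy tradeoff inside the $\mu$-sub-block and motivates reusing the same overall depth budget $O(K^2)$ for both.
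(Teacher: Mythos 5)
This statement is imported verbatim from \citet{Takakura23} (their Theorem 4.5) and the paper under review does \emph{not} reprove it---it is cited and used as a black box in Appendix~\ref{app:takaproof}. So there is no ``paper's own proof'' to compare against; what you have produced is a reconstruction of the cited source's argument. At the level of structure your blueprint does match that source: a first bank of sliding-window attention blocks approximates the importance map $\mu$ and implements a soft-sort to invert $\Pi$ (with the well-separation rate $v^{-\varrho}$ driving the temperature and hence the embedding dimension $D$), followed by a truncation of the $\gamma$-smooth target to $\{s:\gamma(s)\leq K\}$, followed by Yarotsky-style ReLU realization of the retained trigonometric basis with a linear readout. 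The parameter scalings you list are the right ones, and you correctly identify error propagation through the composition as the delicate point.

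One concrete step that does not hold as written: in Stage~2 you claim
\(\norm{\sum_{\gamma(s)>K}\delta_s(f)}_{L^2}^q \leq \norm{f}_{\FF_{p,q}^\gamma}^q\sum_{\gamma(s)>K}2^{-q\gamma(s)} \lesssim 2^{-qK}\).
The middle inequality is not a valid application of H\"older, and the right-hand sum $\sum_{\gamma(s)>K}2^{-q\gamma(s)}$ is over an infinite index set whose multiplicity at each level grows, so it need not be $\lesssim 2^{-qK}$ without a counting estimate. The correct route (this is Lemma~17 of \citet{Okumoto22}, used in the appendix) is: orthogonality gives $\|\sum_{\gamma(s)>K}\delta_s(f)\|_2^2=\sum_{\gamma(s)>K}\|\delta_s(f)\|_2^2$, then $p\geq 2$ on a probability space gives $\|\delta_s(f)\|_2\leq\|\delta_s(f)\|_p$, then the embedding $\ell^q\hookrightarrow\ell^2$ for $q\leq 2$ applied to the sequence $(2^{\gamma(s)}\|\delta_s(f)\|_p)_s$ yields $\sum_{\gamma(s)>K}\|\delta_s(f)\|_p^2 \leq 2^{-2K}\sum_s (2^{\gamma(s)}\|\delta_s(f)\|_p)^2 \leq 2^{-2K}\|f\|_{\FF^\gamma_{p,q}}^2$, with no appeal to a divergent sum. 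You should replace your displayed chain by this argument. The rest of your sketch is a reasonable outline, but to actually constitute a proof you would still need to carry out the quantitative soft-sort construction and the Lipschitz/propagation bookkeeping you flag in your last paragraph, which in \citet{Takakura23} is the bulk of the work.
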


\begin{thm}[\citet{Takakura23}, Theorem 5.3]\label{thm:takaentropy}
For $\epsilon>0$ and $B\geq 1$ it holds that
\begin{equation*}
\log\mathcal{N}(\FF_\textup{TF}(J,U,D,H,L,W,S,M), \norm{\cdot}_{L^\infty}, \epsilon) \lesssim J^3 L(S+HD^2) \log\left(\frac{DHLWM}{\epsilon}\right).
\end{equation*}
\end{thm}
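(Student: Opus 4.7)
The plan is to bound the covering number by the standard two-step strategy of (i) establishing layer-wise Lipschitz estimates for each architectural building block and (ii) quantizing each parameter on a grid whose resolution is calibrated so that any grid-point configuration lies within $\epsilon$ of the original function in $\norm{\cdot}_{L^\infty}$. This is the same method used in covering bounds for deep feedforward networks (cf.\ Lemma \ref{thm:coveringnumber}), but adapted to include attention layers. To start, I would establish that each DNN block $f\in\FF_\textup{DNN}(L,W,S,M)$ has input- and parameter-Lipschitz constants of order $(MW)^L$ (already implicit in Lemma \ref{thm:coveringnumber}), and that each attention block $g\in\FF_\textup{Attn}(U,D,H,M)$ has Lipschitz constants polynomial in $D,H,M,U$. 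The key observation for the latter is that the softmax map has $\ell_\infty\!\to\!\ell_1$ Lipschitz constant at most $2$ and, since $\norm{K^{(h)}}_\infty,\norm{Q^{(h)}}_\infty,\norm{V^{(h)}}_\infty\leq M$ together with an a priori $L^\infty$ bound on activations, the pre-softmax logits stay in a controlled range independent of depth. The encoder $\textup{Enc}$ is trivially Lipschitz.

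Next, I would propagate parameter perturbations through the full composition $F=f_J\circ g_J\circ\cdots\circ f_1\circ g_1\circ\textup{Enc}$. A perturbation of size $\delta$ in the parameters of the $k$-th block induces an output change of at most $\delta\cdot\prod_{j>k}\rho_j$, where $\rho_j$ denotes the input-Lipschitz constant of the $j$-th block. Uniformly bounding each $\rho_j$ by $\rho=\poly(D,H,L,W,M)$ gives $\log\prod_j\rho_j = O(JL\log(DHLWM))$. Counting free parameters, each attention block contributes $O(HD^2)$ entries from its matrices $K^{(h)},Q^{(h)},V^{(h)}$, and each DNN block contributes $S$ nonzero weights (the sparsity bound $S$ already totals across the $L$ internal layers), for a grand total of $O(J(S+HD^2))$ scalar parameters. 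Discretizing each parameter at resolution $\epsilon/(J\rho^J B_{\textup{in}})$ with $B_{\textup{in}}$ an $L^\infty$ bound on the input produces, via a Cartesian product, an $\epsilon$-cover of $\FF_\textup{TF}$ whose log-size is at most $O(J(S+HD^2))\cdot O(J^2 L\log(DHLWM/\epsilon))$, where the extra factors of $J$ come from the Lipschitz propagation exponent together with a union-style bookkeeping over block indices. This yields the claimed bound $J^3 L(S+HD^2)\log(DHLWM/\epsilon)$.

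\textbf{Main obstacle.} The delicate step is controlling how an attention block output depends on its own key/query/value matrices: a naive bound on $\partial g/\partial K^{(h)}$ would scale exponentially with the magnitude of the softmax logits, so one must leverage the $\ell_\infty$ norm constraint $M$ together with the residual structure $x_i+\sum_h V^{(h)}\cdots$ of each attention block in order to keep all Lipschitz constants polynomial in the hyperparameters (and hence produce the final covering bound that is only \emph{logarithmic} in $M$ and $W$). Relatedly, the a priori $L^\infty$ bound on activations that is needed to tame the logits must itself be established by induction on the layer index using the residual/softmax form and the parameter bounds, and getting it to propagate without exponential blowup in $J$ is what requires care.
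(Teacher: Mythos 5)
This is a cited result --- the paper invokes \citet{Takakura23}, Theorem~5.3 verbatim without supplying its own proof --- so there is no internal argument to compare against. Your high-level plan (establish block-wise Lipschitz constants for the DNN, attention, and encoder modules, then discretize each scalar parameter on a grid whose mesh is calibrated by the product of Lipschitz constants) is the standard and correct technique for covering-number bounds of deep architectures and matches the strategy of the cited work. Two remarks on the substance of the sketch.

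Your main obstacle paragraph slightly misdiagnoses where the exponential risk lies. The softmax Jacobian $\diag(\sigma)-\sigma\sigma^\top$ has entries uniformly bounded by $1/4$ regardless of logit magnitude, so the map from logits to attention weights is Lipschitz with a \emph{universal} constant; via the chain rule the sensitivity of the attention output to $K^{(h)}$ or $Q^{(h)}$ is only polynomial in $M,D$, and the activation $L^\infty$ bound, not exponential in the logits. What does grow exponentially in $J$ is the activation bound itself across the $J$ residual layers, but this is benign: it enters the covering number only through $\log(1/\delta)$ for the quantization mesh $\delta$, collapsing to a linear-in-$J$ factor after taking logarithms. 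Separately, your bookkeeping to reach the stated $J^3L$ is loose. The parameter count $O(J(S+HD^2))$, multiplied by the required precision $\log(1/\delta)\lesssim\sum_j\log\rho_j+\log(1/\epsilon)\lesssim JL\log(DHLWM)+\log(1/\epsilon)$ (only the DNN blocks contribute the factor $L$ in their Lipschitz constants), already gives $O\bigl(J^2L(S+HD^2)\log(DHLWM/\epsilon)\bigr)$, which is \emph{sharper} than the theorem states; the extra $J$ you attribute to ``union-style bookkeeping over block indices'' is a fudge factor rather than a derived quantity. Since $J^2L\leq J^3L$ this is harmless for the claimed inequality, but it shows the proposal is a heuristic plan rather than a reconstruction of the cited proof's specific constants.
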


To analyze the decay rate in the $\gamma$-smooth class, we approximate a function $f\in L^2([0,1]^{d\times\infty})$ by the partial sum of its frequency components up to `resolution' $K$, measured via the $\gamma$ function:
\begin{equation*}
R_K(f):=\sum_{\gamma(s)<K} \delta_s(f).
\end{equation*}
The basis functions $\psi_r$ are thus ordered primarily ordered by increasing $\gamma(s)$.
\begin{lemma}[\citet{Okumoto22}, Lemma 17]
For $1\leq q\leq 2$ it holds that 
\begin{equation*}
\norm{f-R_K(f)}_{L^2(\PP_{\XX})} \lesssim 2^{-K}\norm{f}_{\FF_{p,q}^\gamma(\PP_{\XX})}.
\end{equation*}
\end{lemma}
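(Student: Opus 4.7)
The plan is to unwind the frequency decomposition, exploit orthonormality of $(\psi_r)_r$ in $L^2$ under the uniform measure, and then chain three standard inequalities: a density comparison between $\PP_{\XX}$ and $\lambda^{d\times\infty}$, a Jensen-type embedding $L^p \hookrightarrow L^2$ on a probability space (valid because $p\geq 2$), and the sequence-space embedding $\ell^q\hookrightarrow \ell^2$ (valid because $q\leq 2$). The $2^{-K}$ factor will be extracted from the smoothness exponent at a single step by separating $2^{q\gamma(s)}$ into $2^{-2\gamma(s)}$ (absorbed by the truncation threshold) and $2^{(q+2)\gamma(s)}$ (absorbed into the norm). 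Since this is exactly the strategy from \citet{Okumoto22}, only a clean write-up is needed.

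First, I would write $f-R_K(f) = \sum_{\gamma(s)\geq K}\delta_s(f)$ and use the fact that distinct $\delta_s(f)$ live in orthogonal frequency blocks with respect to the uniform product measure $\lambda^{d\times\infty}$, yielding
\begin{equation*}
\norm{f - R_K(f)}_{L^2(\lambda^{d\times\infty})}^2 = \sum_{\gamma(s)\geq K} \norm{\delta_s(f)}_{L^2(\lambda^{d\times\infty})}^2.
\end{equation*}
By Assumption \ref{ass:tficl}, $\PP_{\XX}$ has density bounded above and below with respect to $\lambda^{d\times\infty}$, so $\norm{\cdot}_{L^2(\PP_{\XX})}\asymp \norm{\cdot}_{L^2(\lambda^{d\times\infty})}$ and likewise $\norm{\cdot}_{L^p(\PP_{\XX})}\asymp \norm{\cdot}_{L^p(\lambda^{d\times\infty})}$, with constants depending only on the density bounds.

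Next, since $p\geq 2$ and $\lambda^{d\times\infty}$ is a probability measure, Jensen's inequality gives $\norm{\delta_s(f)}_{L^2(\lambda^{d\times\infty})} \leq \norm{\delta_s(f)}_{L^p(\lambda^{d\times\infty})} \lesssim \norm{\delta_s(f)}_{L^p(\PP_{\XX})}$. Setting $b_s := 2^{\gamma(s)} \norm{\delta_s(f)}_{L^p(\PP_{\XX})}$, combining the above yields
\begin{equation*}
\norm{f - R_K(f)}_{L^2(\PP_{\XX})}^2 \lesssim \sum_{\gamma(s)\geq K} 2^{-2\gamma(s)} b_s^2 \leq 2^{-2K} \sum_{s} b_s^2.
\end{equation*}
Finally, because $q\leq 2$, the embedding $\ell^q\hookrightarrow\ell^2$ on the (countable) index set of $s$ gives $(\sum_s b_s^2)^{1/2} \leq (\sum_s b_s^q)^{1/q} = \norm{f}_{\FF_{p,q}^\gamma(\PP_{\XX})}$, and taking square roots throughout delivers the claim.

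The main obstacle is essentially bookkeeping rather than substance: one must be careful that the infinite-index sums all converge (which is guaranteed by $f\in\FF_{p,q}^\gamma$), and that the orthogonality used in the first display really holds on $[0,1]^{d\times\infty}$ under $\lambda^{d\times\infty}$ as a countable tensor product of the trigonometric basis. Since the norm $\norm{f}_{\FF_{p,q}^\gamma}$ is finite by hypothesis, only finitely many frequency components contribute non-negligibly to each $\varepsilon$-truncation, so no additional technicalities arise beyond standard monotone/dominated convergence.
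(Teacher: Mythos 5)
Your argument is correct and matches the standard proof of this truncation estimate: the paper simply cites \citet{Okumoto22} without reproving it, and your chain of (i) block orthogonality of $\delta_s(f)$ under $\lambda^{d\times\infty}$ (valid because distinct $s$ index disjoint sets of $r$, so the corresponding $\psi_r$ are distinct orthonormal basis elements), (ii) two-sided density comparison with $\PP_{\XX}$, (iii) $L^p\hookrightarrow L^2$ on a probability space since $p\geq 2$, (iv) extracting $2^{-2K}$ by monotonicity of $2^{-2\gamma(s)}$ on the tail $\gamma(s)\geq K$, and (v) $\ell^q\hookrightarrow\ell^2$ since $q\leq 2$, is exactly the right sequence of moves. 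One small remark: Assumption \ref{ass:tficl} asserts $\Sigma_{\Psi,N}\asymp\bI_N$ and gives the density bound as one sufficient example rather than the hypothesis itself, so strictly speaking your step using $\norm{\cdot}_{L^2(\PP_{\XX})}\asymp\norm{\cdot}_{L^2(\lambda^{d\times\infty})}$ leans on the example rather than the stated condition, but this is the operative assumption in the original reference and harmless here.
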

Note that if $\gamma(s)<K$ then $s_{ij}<K/a_{ij}\lesssim K/\abs{i}^\eta$ for all $i,j$ for both types of smoothness and so $\norm{s}_0 \lesssim dK^{1/\eta}$. In addition, the number of basis functions $\psi_r$ used in the sum for $\delta_s(f)$ is exactly $2^{\norm{s}_1} = \prod_{i,j} 2^{s_{ij}}$. Theorem D.3 of \citet{Takakura23} shows that the number of basis elements used in the sum $R_K(f)$ satisfies
\begin{align*}
&N\asymp\sum_{\gamma(s)<K} 2^{\norm{s}_1} \lesssim 2^{K/\alpha^\dagger}
\end{align*}
for both mixed and anisotropic smoothness. Hence the $N$-term approximation error decays as $N^{-\alpha^\dagger}$ so that the choice $s=\alpha^\dagger$ leading to the assumed variance bound $N^{-2\alpha^\dagger-1} \asymp 2^{-(2+1/\alpha^\dagger)K}$ in \eqref{eqn:freqdecay} is justified. Moreover for large $K$,
\begin{equation*}
\sum_{\gamma(s)<K} \sum_{\lfloor 2^{s_{ij}-1}\rfloor \leq \abs{r_{ij}}< 2^{s_{ij}}} \norm{\psi_r}_{L^\infty(\PP_{\XX})}^2 \leq \sum_{\gamma(s)<K} 2^{\norm{s}_1} (\sqrt{2}^{\norm{s}_0})^2 \lesssim 2^{K/\alpha^\dagger + O(K^{1/\eta})}
\end{equation*}
since $\norm{r}_0=\norm{s}_0$, so that \eqref{eqn:sparse} of Assumption \ref{ass:basis} is satisfied with $r=1/2$. The second part of Assumption \ref{ass:basis} holds since $(\psi_r)_{r\in\ZZ_0^{d\times\infty}}$ is orthonormal w.r.t. $\lambda^{d\times\infty}$. Furthermore, the discussion thus far immediately extends to the piecewise $\gamma$-smooth class for any partition $\{\Omega_\lambda\}_{\lambda\in\Lambda}$ by composing with the permutation operator $\Pi$.

We proceed to use Theorem \ref{thm:takaapprox} to approximate each basis function $\psi_r\circ\Pi$ up to resolution $K$. 
Moreover, we can see from the proof of Lemma 17 of \citet{Okumoto22} that we do not need to account for the sup-norm scaling of $\psi_r$ and thus it suffices to find the parameter $K'\in\NN$ such that the approximation error $2^{-K'} \asymp \delta_N$.
Hence combining Theorems \ref{thm:takaapprox} and \ref{thm:takaentropy}, we conclude that
\begin{align*}
\VV(\FF_\textup{TF}(J,U,D,H,L,W,S,M), \norm{\cdot}_{L^\infty}, \epsilon) &\lesssim K'^{3/\eta}K'^2 \cdot 2^{K'/\alpha^\dagger} K'^{2+2/\eta} \cdot K'\log\frac{1}{\epsilon}\\
&\lesssim \left(\frac{1}{\delta_N}\right)^{1/\alpha^\dagger} \polylog\left(N,\frac{1}{\delta_N}\right)\log\frac{1}{\epsilon}
\end{align*}
is sufficient to satisfy Assumption \ref{ass:close}. Therefore, we can now apply our framework with $B_N' \asymp N$ to obtain the bound
\begin{align*}
\bar{R}(\widehat{\Theta}) &\lesssim \frac{N}{n}\log{N}+\frac{N^{2}}{n^2}\log^2 N+ N^{-2\alpha^\dagger} + N^2\delta_N^2 \\
&\qquad + \frac{N^2}{T}\log\frac{N}{\epsilon} + \frac{1}{T}\delta_N^{-1/\alpha^\dagger} \polylog\left(N,\frac{1}{\delta_N}\right) \log\frac{1}{\epsilon} +\epsilon.
\end{align*}
Substituting $\delta_N\asymp N^{-1-\alpha^\dagger}$ and $\epsilon\asymp N^{-2\alpha^\dagger}$ concludes the theorem. \qed

\section{Numerical Experiments}\label{app:exp}

In this section, we connect our theoretical contributions to practical transformers by conducting experiments verifying our results as well as justifying the simplified model setup and the empirical risk minimization assumption. We implement and compare the following toy models: (a) the simplified architecture studied in our paper; (b) the same model with linear attention replaced by softmax; and (c) a full transformer with 2 stacked encoder layers. The number of feedforward layers, widths of hidden layers, learning rate, etc. are set to equal for a fair comparison, see Figure \ref{fig1} for details.
\begin{figure}
    \centering
    \includegraphics[width=0.75\linewidth]{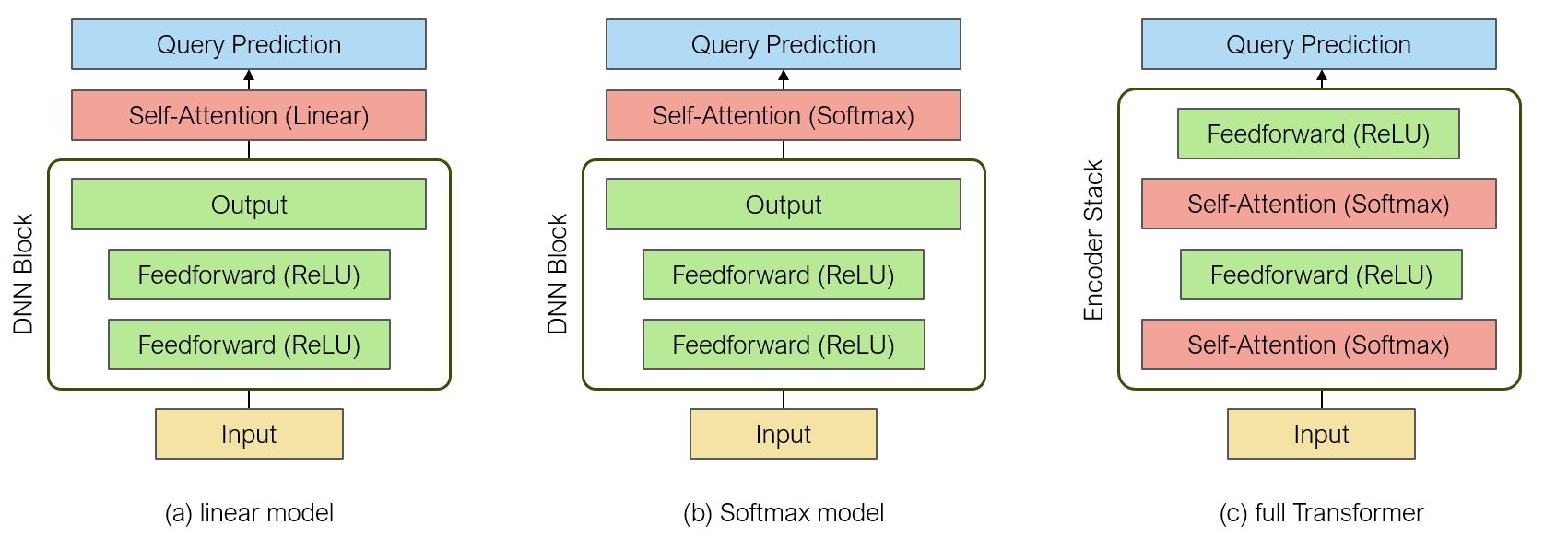}
    \caption{Architecture of the compared models. Each model contains two MLP components, all attention layers are single-head and LayerNorm is not included. (a),(b) implement the simplified reparametrization for attention, while all layers in (c) utilize the full embeddings. The input dimension is 8 and all hidden layer and DNN output widths are 32. The query prediction is read off the last entry of the output at the query position.}
    \label{fig1}
\end{figure}

\begin{figure}
    \centering
    \includegraphics[width=0.55\linewidth]{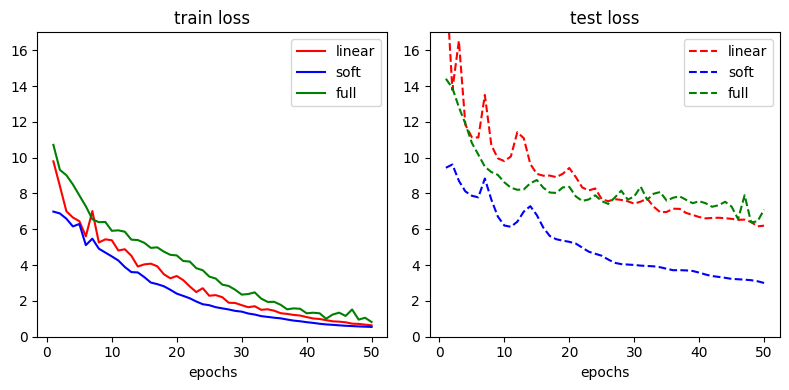}
    \caption{Training and test curves for the ICL pretraining objective. We use the Adam optimizer with a learning rate of 0.02 for all layers. For the task class we take $\alpha=1$, $p=q=\infty$, $T=n=512$ and generate samples from random combinations of order 2 wavelets.}
    \label{fig2}
\end{figure}

\begin{figure}
    \centering
    \includegraphics[width=0.75\linewidth]{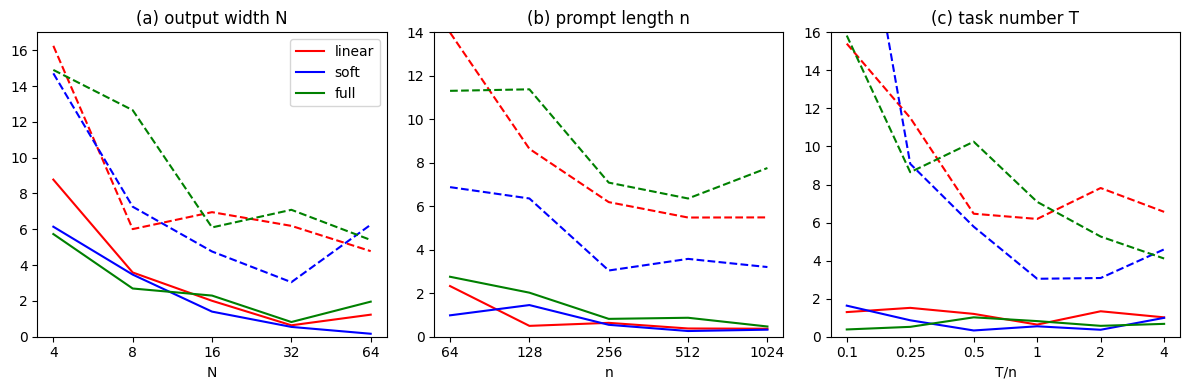}
    \caption{Training and test losses of the three models after 50 epochs while varying (a) DNN width $N$; (b) number of in-context samples $n$; (c) number of tasks $T$. For (a), the widths of all hidden layers also vary with $N$. We take the median over 5 runs for robustness.}
    \label{fig3}
\end{figure}

Figure \ref{fig2} shows training (solid) and test loss curves (dashed) during pretraining. All 3 architectures exhibit similar behavior and converge to near zero training loss, justifying the use of our simplified model and also supporting the assumption that the empirical risk is minimized. Moreover, Figure 3 shows the converged losses over a wide range of $N,n,T$ values. We verify that increasing $N,n$ leads to decreasing train and test error, corresponding to the approximation error of Theorem 3.1. We also observe that increasing $T$ tends to improve the pretraining generalization gap up to a threshold, confirming our theoretical analysis of task diversity. Again, this behavior is consistent across the 3 architectures. We note that in the overparametrized regime when the number of total parameters $\gtrsim nT$, the trained model is likely not the empirical risk minimizer, which may also contribute to the large error for large $N$ or small $n,T$.

\section{Proofs of Minimax Lower Bounds}

\subsection{Proof of Proposition \ref{prop:ICLMinimax}}\label{app:lowerpf}

In this section, we develop our framework for obtaining minimax lower bounds in the ICL setup by adapting the information-theoretic approach of \cite{Yang99}.

Let $\{(\psi^{(j)},\beta^{(j)}_{T+1})\}_{j=1}^M$ be a $\delta_n$-packing of the class $\calF^\circ$ with respect to the $L^2(\XX)$-norm such that 
\begin{equation*}
\|\beta^{(j)\top}_{T+1}  \psi^{(j)} - \beta^{(j')\top}_{T+1}  \psi^{(j')} \|^2_{L^2(\XX)} \geq \delta_n^2, \quad 1 \leq j < j' \leq M,
\end{equation*}
where $M$ is the corresponding packing number. Then we have the following proposition as an application of Fano's inequality \citep{Yang99}.
\begin{prop}
\label{prop:YangBarron:Minimax}
Let $\Theta$ be a random variable uniformly distributed over 
$\{(\psi^{(j)},\beta^{(j)}_{T+1})\}_{j=1}^M$. Then, it holds that 
\begin{align*}
\inf_{\widehat{f}_n:\mathcal{D}_{n,T}\to\RR} \sup_{f^\circ \in \FF^\circ}\EE{\mathcal{D}_n}{\|\widehat{f} - f^\circ \|_{L^2(\PP_{\XX})}^2} 
\geq \frac{\delta_n^2}{2} \left(1 - \frac{\EE{\bX}{I_{\bX^{(1:T+1)}}(\Theta,\by^{(1:T+1)})} + \log 2}{\log M} \right),
\end{align*}
where $I_{\bX^{(1:T+1)}}(\Theta,\by^{(1:T+1)})$ is the mutual information between $\Theta,\by^{(1:T+1)}$ for given $\bX^{(1:T+1)}$. 
\end{prop}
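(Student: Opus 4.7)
The proposition is a Fano-type minimax lower bound adapted from the classical Le Cam--Birg\'e--Yang--Barron argument to the ICL setting. The plan consists of three classical ingredients: (i) reduce estimation to multiple hypothesis testing via the packing; (ii) invoke Fano's inequality to lower bound the testing error; (iii) simplify the resulting mutual information using the independence of the covariates $\bX^{(1:T+1)}$ from $\Theta$.

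For Step (i), given any measurable estimator $\widehat f:\mathcal D_{n,T}\to L^2(\PP_\XX)$, I would introduce the minimum-distance test $\widehat\Theta:=\argmin_{1\le j\le M}\|\widehat f-\beta^{(j)\top}_{T+1}\psi^{(j)}\|_{L^2(\PP_\XX)}$, with ties broken measurably. The $\delta_n$-packing assumption and the triangle inequality then imply that on the event $\{\widehat\Theta\ne\Theta\}$ one has $\|\widehat f-\beta^{(\Theta)\top}_{T+1}\psi^{(\Theta)}\|_{L^2(\PP_\XX)}\ge\delta_n/2$. Combined with the elementary domination $\sup_{f^\circ\in\FF^\circ}\mathbb E_{\mathcal D_{n,T}}[\,\cdot\,]\ge\mathbb E_{\Theta}\mathbb E_{\mathcal D_{n,T}\mid\Theta}[\,\cdot\,]$ (the Bayes risk under the uniform prior supported on the packing), this gives a lower bound of the form $c\,\delta_n^2\,\mathbb P(\widehat\Theta\ne\Theta)$ for an absolute constant $c$.

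For Step (ii), since $\Theta$ is uniform on $M$ atoms and $\widehat\Theta$ is a deterministic function of $\mathcal D_{n,T}$, Fano's inequality gives $H(\Theta\mid \mathcal D_{n,T})\le \log 2+\mathbb P(\widehat\Theta\ne\Theta)\log M$, and combining with $H(\Theta\mid\mathcal D_{n,T})=\log M-I(\Theta;\mathcal D_{n,T})$ yields $\mathbb P(\widehat\Theta\ne\Theta)\ge 1-(I(\Theta;\mathcal D_{n,T})+\log 2)/\log M$. For Step (iii), since the covariates $\bX^{(1:T+1)}$ are drawn i.i.d. from $\PP_\XX$ independently of $\Theta$, the chain rule of mutual information yields $I(\Theta;\mathcal D_{n,T})=I(\Theta;\bX^{(1:T+1)})+I(\Theta;\by^{(1:T+1)}\mid\bX^{(1:T+1)})=\mathbb E_{\bX^{(1:T+1)}}[I_{\bX^{(1:T+1)}}(\Theta;\by^{(1:T+1)})]$, where the first term vanishes by independence. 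Substituting into Steps (i)--(ii) and taking the infimum over $\widehat f$ produces the stated inequality.

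The conceptual point specific to ICL is that the mutual information aggregates over $T+1$ prompts that share the unknown representation $\psi^{(\Theta)}$ but carry independent task-specific coefficients $\beta^{(t)}$ with independent Gaussian noise; the conditional-on-$\bX$ form isolates the signal-bearing randomness and prepares the ground for bounding the MI via covering-number estimates on $\FF_N$ and $\calB$, as needed downstream for \ref{prop:ICLMinimax}. The only subtlety is tracking the constant in Step (i): the direct triangle-inequality reduction gives a factor of $\delta_n^2/4$, while the claimed $\delta_n^2/2$ either reflects a refined variant of the reduction or is recovered by working with a $\sqrt 2\delta_n$-separated subpacking at a bounded loss in $\log M$; in either case the downstream application \ref{prop:ICLMinimax} is unaffected. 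Measurability of the minimum-distance test and the interchange of expectation and conditional entropy are routine.
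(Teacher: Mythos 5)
Your argument is correct and is precisely the Yang--Barron/Fano derivation that the paper invokes by reference (the paper supplies no proof of its own here, simply citing \citet{Yang99}): the packing-to-testing reduction via the minimum-distance test, Fano's inequality applied to the uniform prior on the packing, and the chain-rule observation that $I(\Theta;\mathcal{D}_{n,T}) = \EE{\bX}{I_{\bX^{(1:T+1)}}(\Theta,\by^{(1:T+1)})}$ because $\bX^{(1:T+1)}$ is drawn independently of $\Theta$ are all exactly the steps needed. Your flag about the constant is also correct — the direct triangle-inequality reduction gives $\|\widehat f - f^{(\Theta)}\|^2 \geq \delta_n^2/4$ on the error event, hence prefactor $\delta_n^2/4$ rather than $\delta_n^2/2$; this is a harmless slip (or a different normalization of the separation in the packing) that only shifts the absolute constant appearing in Proposition~\ref{prop:ICLMinimax} and has no bearing on the resulting rates.
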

The mutual information $I_{\bX^{(1:T+1)}}(\Theta,\by^{(1:T+1)})$ is formulated more concretely as
\begin{equation*}
\sum_{\theta\in\supp\Theta} w(\theta)\int p(\by^{(1:T+1)}|\theta,\bX^{(1:T+1)}) \log\left(\frac{p(\by^{(1:T+1)}|\theta,\bX^{(1:T+1)})}{p_w(\by^{(1:T+1)}|\bX^{(1:T+1)})}\right) \dd \by^{(1:T+1)},
\end{equation*}
where
$p(\by|\theta,\bX)$ is the probability density of $\by$ conditioned on $\theta,\bX$ and  $p_w$ is the marginal distribution of $\by^{(1:T+1)}$ where $w(\cdot)\equiv \frac{1}{M}$ is the probability mass function of $\Theta$. 
We let $P_{\by^{(t)}|\theta}$ (and $P_{\by^{(1:t)}|\theta}$) be the distribution of $\by^{(t)}$ conditioned on $\theta,\bX^{(t)}$ (and $\bX^{(1:t)}$) respectively,
and let
\begin{equation*}
\bar{P}_{\by^{(1:T+1)}} = \frac{1}{M}\sum_{j=1}^M P_{\by^{(1:t)}|\theta^{(j)}}
\end{equation*}
be the marginal distribution of $\by^{(1:T+1)}$ conditioned on $\bX^{(1:T+1)}$.

Next, we define the set $\{\tilde{\psi}^{(j)}\}_{j=1}^{Q_1}$ to be a $\varepsilon_{n,1}$-covering of $\FF_N$ 
w.r.t. the norm $d(\psi,\psi') := \sqrt{\EE{x}{\|\psi(x)-\psi'(x)\|^2}}$
with $\varepsilon_{n,1}$-covering number $Q_1$, and $\{\tilde{\beta}^{(j)}\}_{j=1}^{Q_2}$ to be a $\varepsilon_{n,2}$-covering of $\calB$ w.r.t. the $L^2$ norm with $\varepsilon_{n,2}$-covering number $Q_2$. By taking all combinations of 
$(\tilde{\psi}^{(j)},\tilde{\beta}^{(j')})$ for $1 \leq j \leq Q_1$ and $1 \leq j' \leq Q_2$, 
we obtain the covering $\{\tilde{\theta}^{(j)}\}_{j=1}^Q$ with respect to the quantity $\varepsilon_n^2 = \sigma_{\beta}^2 \varepsilon_{n,1}^2+ C_2\varepsilon_{n,2}^2$ where $Q = Q_1 Q_2$ and each $\tilde{\theta}^{(j)}$ is given by $\tilde{\theta}^{(j)} = (\tilde{\psi}^{(j_1)},\tilde{\beta}^{(j_2)})$ for some indices $j_1$ and $j_2$.

Then as in the discussion of \cite{Yang99}, the mutual information is bounded by 
\begin{align*}
 I_{\bX^{(1:T+1)}}(\Theta,\by^{(1:T+1)})  & = 
\frac{1}{M} \sum_{j=1}^M 
D(P_{\by^{(1:T+1)}|\theta^{(j)}}\Vert \bar{P}_{\by^{(1:T+1)}}) \\
&\leq 
\frac{1}{M} \sum_{j=1}^M 
D(P_{\by^{(1:T+1)}|\theta^{(j)}}\Vert \tilde{P}_{\by^{(1:T+1)}}), 
\end{align*}
where $D(\cdot \Vert \cdot)$ is the Kullback-Leibler divergence and $\tilde{P}_{\by^{(1:T+1)}} = \frac{1}{Q} \sum_{j=1}^{Q} P_{\by^{(1:T+1)}|\tilde{\theta}^{(j)}}$. 
If we let 
\begin{equation*}
\kappa(j) := \argmin_{1\leq k \leq Q} D(P_{\by^{(1:T+1)}|\theta^{(j)}}\Vert P_{\by^{(1:T+1)}|\tilde{\theta}^{(k)}}),
\end{equation*}
then each summand of the right-hand side is further bounded by 
\begin{equation*}
\log Q + D(P_{\by^{(1:T+1)}|\theta^{(j)}}\Vert P_{\by^{(1:T+1)}|\tilde{\theta}^{\kappa(j)}}).
\end{equation*}
Moreover, for $\theta = (\psi,\beta^{(T+1)})$ it holds that  
\begin{align*}
& p(\by^{(1:T+1)}|\theta,\bX^{(1:T+1)}) \\
& = \prod_{t=1}^T p(\by^{(t)}| \psi, \bX^{(t)})   \cdot p(\by^{(T+1)} | \psi, \beta^{(T+1)},\bX^{(T+1)}) \\
&= \prod_{t=1}^T \int p(\by^{(t)}| \psi,\beta^{(t)}, \bX^{(t)})  p_\beta(\beta^{(t)}) \dd \beta^{(t)}
\cdot  p(\by^{(T+1)} | \psi, \beta^{(T+1)},\bX^{(T+1)}). 
\end{align*}
Then the KL-divergence can be bounded as 
\begin{align*}
&D(P_{\by^{(1:T+1)}|\theta^{(j)}}\Vert P_{\by^{(1:T+1)}|\tilde{\theta}^{\kappa(j)}}) \\
&=
\sum_{t=1}^T 
D\left(P_{\by^{(t)}|\psi^{(j)}}\Vert P_{\by^{(t)}|\tilde{\psi}^{(\kappa(j))}}\right) 
+ 
D\left(P_{\by^{(T+1)}|\psi^{(j)},\beta_{T+1}^{(j)}}\Vert P_{\by^{(T+1)}|\tilde{\psi}^{\kappa(j)},\tilde{\beta}_{T+1}^{(\kappa(j))}}\right)  \\
&\leq 
\sum_{t=1}^T 
\int D\left(P_{\by^{(t)}|\psi^{(j)},\beta^{(t)}}\Vert P_{\by^{(t)}|\tilde{\psi}^{(\kappa(j))},\beta^{(t)}}\right) 
  p_\beta(\beta^{(t)}) \dd \beta^{(t)} \\
  &\qquad
+ 
D\left(P_{\by^{(T+1)}|\psi^{(j)},\beta_{T+1}^{(j)}}\Vert P_{\by^{(T+1)}|\tilde{\psi}^{\kappa(j)},\tilde{\beta}_{T+1}^{(\kappa(j))}}\right),
\end{align*}
where the joint convexity of KL-divergence was used for the last inequality. 
Since the observation noise is assumed to be normally distributed, the integrand KL-divergence can be bounded as
\begin{align*}
& D\left(P_{\by^{(t)}|\psi^{(j)},\beta}\Vert P_{\by^{(t)}|\tilde{\psi}^{\kappa(j)},\beta}\right) 
= 
\sum_{i=1}^n \frac{1}{2\sigma^2} \left(\beta^\top \psi^{(j)}(x_i^{(t)}) - \beta^\top \tilde{\psi}^{(\kappa(j))}(x_i^{(t)})\right)^2. 
\end{align*}
Hence, its expectation with respect to $\beta,\bX^{(t)}$ becomes 
\begin{equation*}
\EEbig{\bX^{(t)},\beta}{D\left(P_{\by^{(t)}|\psi^{(j)},\beta}\Vert P_{\by^{(t)}|\tilde{\psi}^{\kappa(j)},\beta}\right)} = \frac{n\sigma_\beta^2}{2\sigma^2}\|\psi^{(j)} - \tilde{\psi}^{\kappa(j)}\|^2_{L^2(\PP_{\XX})} \leq 
\frac{n\sigma_\beta^2}{2\sigma^2} \varepsilon_{n,1}^2,
\end{equation*}
In the same manner, we have that  
\begin{align*}
& D\left(P_{\by^{(T+1)}|\psi^{(j)},\beta_{T+1}^{(j)}}\Vert P_{\by^{(T+1)}|\tilde{\psi}^{\kappa(j)},\tilde{\beta
}_{T+1}^{(\kappa(j))}}\right) \\
& = 
\frac{1}{2\sigma^2}
\sum_{i=1}^n \left(\beta_{T+1}^{(j)\top} \psi^{(j)}(x_i^{(t)}) - \tilde{\beta}_{T+1}^{(\kappa(j))\top} \tilde{\psi}^{(\kappa(j))}(x_i^{(t)})\right)^2 \\
& \leq \sum_{i=1}^n \frac{1}{2\sigma^2} \left[\left(\beta_{T+1}^{(j)} - \tilde{\beta}_{T+1}^{(\kappa(j))}\right)^{\top}
\tilde{\psi}^{(\kappa(j))}(x_i^{(t)})\right]^2 + \sum_{i=1}^n
\frac{1}{2\sigma^2} \left[\beta_{T+1}^{(j)\top}(\psi^{(j)}(x_i^{(t)}) -  \tilde{\psi}^{(\kappa(j))}(x_i^{(t)}))\right]^2.
\end{align*}
The expectation of the right-hand side with respect to $\bX^{(T+1)},\beta_{T+1}^{(j)}$ is bounded as
\begin{align*}
& \mathbb{E}_{\bX^{(T+1)},\beta_{T+1}^{(j)}}
\left[D\left(P_{\by^{(T+1)}|\psi^{(j)},\beta_{T+1}^{(j)}}\Vert P_{\by^{(t)}|\tilde{\psi}^{\kappa(j)},\tilde{\beta
}_{T+1}^{(\kappa(j))}}\right)\right] \\
& \leq \frac{C_2 n}{2\sigma^2}  \|\beta_{T+1}^{(j)} - \tilde{\beta}_{T+1}^{(\kappa(j))}\|^2
+ \frac{n}{2\sigma^2 }  \sigma_\beta^2 \|\psi^{(j)} -  \tilde{\psi}^{(\kappa(j))}\|_{L^2(\PP_{\XX})}^2\\
& \leq \frac{C_2 n}{2\sigma^2} \varepsilon_{n,2}^2
+ \frac{n}{2\sigma^2}  \sigma_\beta^2 \varepsilon_{n,1}^2
= \frac{n}{2\sigma^2} \varepsilon_n^2.
\end{align*}
Therefore, the expected mutual information can be bounded as 
\begin{equation*}
\mathbb{E}_{X}[I_{\bX^{(1:T+1)}}(\Theta,\by^{(1:T+1)})]
\leq \log Q_1 + \log Q_2 
+ \frac{nT}{2\sigma^2} \sigma_\beta^2\varepsilon_{n,1}^2
+ \frac{n}{2\sigma^2} \varepsilon_{n}^2. 
\end{equation*}
Applying Proposition \ref{prop:YangBarron:Minimax} together with \eqref{eq:QNboundMinimax} concludes the proof. \qed

\subsection{Lower Bound in Besov Space}\label{e2}
Here, we derive the minimax lower bound when $\FF^\circ = \UU(B^\alpha_{p,q}(\XX))$. 
Recall that in this setting $s = \alpha/d$. 
We fix a resolution $K$ and then consider the set of B-splines $\omega_{K,\ell}^d, \ell\in I_K^d$ of cardinality 
$N' \asymp 2^{Kd}$.
Considering the basis pairs $(\omega_{K,1}^d,\omega_{K,2}^d),\dots,(\omega_{K,N'-1}^d,\omega_{K,N'}^d)$, we can determine which one is employed to construct the basis $\psi^{(j)}$. 
The Varshamov-Gilbert bound yields that 
for $\Omega = \{0,1\}^{N'/2}$, we can construct a subset 
$\Omega'= \{w_1,\dots,w_{2^{N'/16}}\} \subset \Omega$
such that $|\Omega| = 2^{N'/16}$ and 
$w\neq w' \in \Omega'$ has a Hamming distance not less than $N'/16$. 
Using this $\Omega'$, 
we set $N=N'/2$ and $M=2^{N'/16}$ and  
define $(\psi^{(j)})_{j=1}^M$ as $\psi^{(j)}_i = \omega_{K,2i-1}^d$ if $w_{j,i} = 0$ and $\psi^{(j)}_i = \omega_{K,2i}^d$ if $w_{j,i} = 1$. 
We use the same B-spline bases with resolution more than $K$ for $\psi^{(j)}_i~(i \geq N)$ across all $j$. 

By the construction of $(\psi^{(j)})$, if we set $\beta^{(1)} = (\sigma_\beta,\dots,\sigma_\beta,0,0,\dots)$, then
\begin{equation*}
\|\beta^{(1)\top} \psi^{(j)} - \beta^{(1)\top} \psi^{(j')}\|_{L^2(\PP_{\XX})}^2 \geq \sigma_\beta^2 N/8.
\end{equation*}
Hence, for $\delta_n^2 \leq \sigma_\beta^2 N/8 \lesssim 1$, the $\delta_n$-packing number is not less than $2^{N/8}$. Moreover, the logarithmic $\delta_n$-packing number of $\{\beta^\top \psi^{(j)} \mid \beta \in \calB \}$ for a fixed $j$ is $\Theta(\delta_n^{-1/s})$ by the standard argument.  

Hence taking $\delta_n = N^{-s}$, we obtain $\log M \gtrsim N$ and the upper bound of the covering numbers $\log Q_1 + \log Q_2\lesssim N$ for $\sigma_\beta^2 \varepsilon_{n,1}^2\leq \delta_n^2$
and $\varepsilon_{n,2}^2 = C \delta_n^2$ where $C$ is a constant. 
Then, by choosing $C$ appropriately and $\varepsilon_{n,1}\lesssim N^{-1-s}$ (so that $Q_1\lesssim 2^N$), as long as 
\begin{equation*}
nT \sigma_\beta^2 \varepsilon_{n,1}^2  + n \delta_n^2 \lesssim \log Q_1 + \log Q_2 \lesssim N
\end{equation*}
is satisfied, the minimax rate is lower bounded by $\delta_n^2$. Taking $N \asymp n^{\frac{1}{2s + 1}}$, we obtain the lower bound
\begin{equation}
\delta_n^2 \gtrsim n^{-\frac{2s}{2s + 1}}.
\end{equation}

\subsection{Lower Bound with Coarser Basis}\label{e3}
We consider a generalized setting where $\XX = \underbrace{\RR^{d}\times \RR^{d} \times \dots \times \RR^{d}}_{(N+1) \text{times}}$ and take $\psi^{(j)}_i \in \UU(B^{\tau}_{p,q}(\RR^d))$ and assume that 
$\beta_1 \in [-1,1]$ and $\beta_{j} \in [-\sigma_\beta,\sigma_\beta]$. 
Since the logarithmic $\tilde{\varepsilon}_{1}$-covering and packing numbers of 
$\UU(B^{\tau}_{p,q}(\RR^d))$ are $\Theta(\tilde{\varepsilon}_1^{-d/\tau})$, 
those for the basis functions on $j=2,\dots,N+1$ become $\Theta(N\tilde{\varepsilon}_{1}^{-d/\tau})$, and those for $\calB$ are $\Theta(N \log(1+\tfrac{N\sigma_\beta^2}{\varepsilon^2_{n,2}}))$.
Therefore, by taking $\varepsilon_{n,1}^2 = N\tilde{\varepsilon}_1^2$ we see that 
\begin{equation*}
nT (\varepsilon_{n,1}^2 +  \sigma_\beta^2 \varepsilon_{n,1}^2)  + n \varepsilon_{n,2}^2 \lesssim 
\varepsilon_{n,1}^{-d/\tau} + 
N\left(\frac{\varepsilon_{n,1}}{\sqrt{N}}\right)^{-d/\tau} + N \log\left(1+\frac{N\sigma_\beta^2}{\varepsilon_{n,2}^2}\right)
\end{equation*}
should be satisfied. 
Moreover, by taking 
$\varepsilon_{n,1}^{(2\tau + d)/\tau} \asymp 1/n T$ and 
$\varepsilon_{n,2}^2 \asymp N\log(1+N^{-2s}/\varepsilon_{n,2}^2)/n$ we can balance both sides. 
In particular, we may set 
\begin{equation*}
\varepsilon_{n,1}^2 
\asymp (nT)^{-\frac{2\tau}{2\tau + d}}, \quad \varepsilon_{n,2}^2 \asymp \frac{N}{n} \wedge N^{-2s}.
\end{equation*}
Taking the balance with respect to $N$ to maximize $\varepsilon_{n,2}^2$, we have $N \asymp n^{\frac{d}{2\alpha + d}}$ and $\varepsilon_{n,1}^2 
\asymp (nT)^{-\frac{2\tau}{2\tau + d}}$. Therefore, the minimax rate is lower bounded as 
\begin{equation}
\delta_n^2 \simeq (1 + \sigma_{\beta}^2) \varepsilon_{n,1}^2 + \varepsilon_{n,2}^2 
\simeq n^{-\frac{2\alpha}{2\alpha + d}} + 
(nT)^{-\frac{2\tau}{2\tau + d}}. 
\end{equation}

\subsection{Lower Bound in Piecewise \texorpdfstring{$\gamma$}{}-smooth Class}\label{e4}
Suppose that we utilize the basis functions up to resolution $K$. 
Then, by the argument by \cite{nishimura2024minimax}, 
the number of basis functions $\psi_r$ in the $K$-th resolution is $N' \asymp 2^{K/a^\dagger}$. 
Moreover, the $\delta_n$-packing number of the $\gamma$-smooth class is also lower bounded by 
\begin{align}
\label{eq:packinglowerboundPGS}
\log M \geq N' \polylog(\delta_n,N'). 
\end{align}
Here, by noticing the approximation error bound in Appendix \ref{app:takaproof}, we take $N' \asymp \delta_n^{-1/a^\dagger}$ where the basis functions are chosen from the $K$th resolution.   
As in the case of the Besov space, we construct $(\psi^{(j)})_{j=1}^{M'}$ where 
$M' = 2^{N'/16}$ and $\psi^{(j)}(x) \in \RR^{N}$ for $N = N'/2$ and $\|\psi^{(j)} - \psi^{(j')}\|_{L^2(P_{\XX})}^2 \geq N/8$ for $j \neq j'$. 
Following the same argument as in the Besov case, we need to take $\varepsilon_{n,1}$ and $\varepsilon_{n,2}$ as 
\begin{equation*}
nT \sigma_\beta^2\varepsilon_{n,1}^2 + n \varepsilon_{n,2}^2 
\lesssim 
\delta_n^{-1/a^\dagger}
\end{equation*}
up to logarithmic factors. This is satisfied by taking $\varepsilon_{n,2}^2 = C \delta_n^2 \asymp n^{-\frac{2a^\dagger}{2a^\dagger + 1}}$ with a constant $C$ 
and balancing $N$ so that 
$\sigma_\beta^2 \varepsilon_{n,1}^2 = (nT)^{-\frac{2a^\dagger}{2a^\dagger + 1}}\sigma_\beta^{\frac{2}{2a^\dagger+1}} \asymp 
T^{-1} n^{-\frac{2a^\dagger}{2a^\dagger + 1}}$. 
Combining this evaluation and \eqref{eq:packinglowerboundPGS} yields that the minimax lower bound is given by
\begin{equation}
\delta_{n}^2\gtrsim n^{-\frac{2a^\dagger}{2a^\dagger + 1}}.
\end{equation}

\end{document}